\documentclass{lmcs}

\usepackage[ruled,noend,linesnumbered]{algorithm2e}
\SetKwInput{KwInit}{Init}

\usepackage{tikz}
\usetikzlibrary{automata,arrows,arrows.meta,calc,decorations.pathreplacing,plotmarks,shapes,tikzmark,matrix,fit,positioning}
\tikzset{comp state/.style={draw,rectangle,rounded corners,inner sep=2pt}}

\usepackage{pgfplots}
\usepackage{pgfplotstable}
\usepackage{booktabs}
\usepackage{xspace}
\usepackage{stmaryrd}
\usepackage{xcolor}
\usepackage{hyperref}
\usepackage{amssymb}
\usepackage{caption}
\usepackage{subcaption}
\usepackage{cleveref}

\Crefname{exa}{Example}{Examples}
\Crefname{defi}{Definition}{Definitions}
\Crefname{thm}{Theorem}{Theorems}
\Crefname{lem}{Lemma}{Lemmata}
\Crefname{cor}{Corollary}{Corollaries}
\Crefname{rem}{Remark}{Remarks}
\Crefname{prop}{Proposition}{Propositions}

\newcommand{\Lstar}{\ensuremath{\mathtt{L^*}}\xspace}
\newcommand{\Lsharp}{\ensuremath{\mathtt{L^{\#}}}\xspace}

\newcommand{\Lstarqb}{\ensuremath{\mathtt{L^*_{?,b}}}\xspace}

\newcommand{\qm}{\ensuremath{\mathord{?}}\xspace}
\newcommand{\row}{\mathit{row}}
\newcommand{\weakeq}{\approx}
\newcommand{\pref}{\mathit{Pref}}
\newcommand{\suff}{\mathit{Suff}}
\newcommand{\Equiv}{\mathit{Equiv}}
\newcommand{\Member}{\mathit{Member}}

\newcommand{\eg}{\emph{e.g.}\xspace}
\newcommand{\ie}{\emph{i.e.}\xspace}
\newcommand{\Obs}{\mathsf{Obs}}
\newcommand{\Dom}{\mathsf{Dom}}

\newcommand{\ab}{\Sigma}
\newcommand{\multiab}{\Sigma^m}
\newcommand{\projection}[2]{{#1}_{\upharpoonright #2}}
\newcommand{\partord}{\ensuremath{\preccurlyeq}}
\newcommand{\strpartord}{\ensuremath{\prec}}
\newcommand{\SUL}{\mathit{SUL}}

\newcommand{\edgerel}{\ensuremath{\to}}
\newcommand{\init}[1]{\ensuremath{\hat{#1}}\xspace}
\newcommand{\transition}[1]{\mathrel{\smash{\raisebox{-1pt}{$\xrightarrow{#1}$}}}}
\newcommand{\parcomp}{\mathop{\parallel}}
\newcommand{\bigpar}{\mathop{\smash{\scalebox{1.2}{$\parallel$}}}}
\newcommand{\coala}{\textsc{Coal}\textit{A}\xspace}
\newcommand{\coal}{\textsc{Coal}\xspace}
\newcommand{\lang}{\mathcal{L}}
\newcommand{\Set}{\mathcal{P}}
\newcommand{\lts}{T}
\newcommand{\symdif}{\mathrel{\Delta}}
\newcommand{\CH}{\mathcal{H}}
\newcommand{\pfun}{\rightharpoonup}

\newcommand{\Teach}{\mathit{Teacher}}
\newcommand{\Learn}{L}
\newcommand{\Learns}{\ensuremath{\mathbb{L}}}

\newcommand{\CED}{\mathit{CED}}
\newcommand{\CEX}{\mathit{CEX}}
\newcommand{\can}[1]{\llbracket #1 \rrbracket_{\partord}}
\newcommand{\Pos}{P}
\newcommand{\Neg}{\sigma_N}
\newcommand{\supp}{\mathsf{img}}
\newcommand{\nodiff}{\theta}
\newcommand{\DSet}{\mathcal{D}}
\newcommand{\Dis}{\delta}

\newcommand{\plus}{\mathord{+}}
\newcommand{\minus}{\mathord{-}}

\newcommand{\viz}{\emph{viz.}\xspace}

\begin{document}
\title{A Detailed Account of Compositional Automata Learning through Alphabet Refinement}

\titlecomment{{\lsuper*} The current paper is an extension of~\cite{NeeleS23,HenryMNS25}}

\author[L. Henry]{L\'eo Henry\lmcsorcid{0000-0001-6778-5840}}[a]
\author[M.R. Mousavi]{Mohammad Reza Mousavi\lmcsorcid{0000-0002-4869-6794}}[b]
\author[T. Neele]{Thomas Neele\lmcsorcid{0000-0001-6117-9129}}[c]
\author[M. Sammartino]{Matteo Sammartino\lmcsorcid{0000-0003-1456-2242}}[d]

\address{Aix-Marseille Université, Marseille, France}
\email{leo.henry@univ-amu.fr}
\address{King's College London, London, United Kingdom}
\email{mohammad.mousavi@kcl.ac.uk}
\address{Eindhoven University of Technology, Eindhoven, The Netherlands}
\email{t.s.neele@tue.nl}
\address{Royal Holloway, University of London, Egham, United Kingdom}
\email{Matteo.Sammartino@rhul.ac.uk}

\begin{abstract}
Active automata learning infers automaton models of systems from behavioral observations, a technique successfully applied to a wide range of domains.
Compositional approaches have recently emerged to address scalability to concurrent systems.
We take a significant step beyond available results, including those by the authors, and develop a general technique for compositional learning of a synchronizing parallel system with an unknown decomposition.
Our approach \emph{automatically refines} the global alphabet into component alphabets while learning the component models.
We develop a theoretical treatment of \emph{distributions} of alphabets, i.e., sets of possibly overlapping component alphabets, characterize \emph{counter-examples} that reveal inconsistencies with global observations, and show how to systematically update the distribution to restore consistency.
We extend $\Lstar$ to handle partial and potentially spurious information arising when learning components from global observations only.
We establish correctness and termination of the full algorithm.
We provide an implementation, called \coala, using the state-of-the-art active learning library \textsc{LearnLib}.
Our experiments on more than 630 subject systems show that \coala delivers up to five orders of magnitude fewer membership queries than monolithic learning, and achieves better scalability in equivalence queries on systems with significant concurrency.
\end{abstract}

\maketitle

\section{Introduction}
Automata learning~\cite{Gold78} has been successfully applied to learn widely-used protocols such as TCP~\cite{FiterauBrosteanJV16}, SSH~\cite{FiterauBrosteanLPR17}, and QUIC~\cite{FerreiraBDS21},  CPU caching policies~\cite{VilaGGK20} and finding faults in their black-box actual implementations. Moreover it has been applied to a wide range of applications such as bank cards \cite{AartsRP13} and biometric passports \cite{AartsSV10,MarksteinerEtAl2024}. There are already accessible expositions of the success stories in this field~\cite{HowarS16,Vaandrager17}. However, it is well known that the state-of-the-art automata learning algorithms do not scale beyond systems with more than a few hundreds of input and output symbols in their alphabets~\cite{Vaandrager17,fortz_research_2026}.

Scalability to larger systems with large input alphabets is required for many real-life systems. This requirement has inspired some recent attempts~\cite{LabbafGHM23,Samadi2025} to come up with compositional approaches to automata learning, to address the scalability issues.  Indeed, if $\Lstar$ is applied to the composite system as a whole, the number of queries is proportional to the full state space, which grows exponentially in the number of components. In practice, queries are implemented as tests on the system, and when tests are costly, for instance when they must be repeated for accuracy or involve physical interaction, learning the whole system may be impractical. Some of the past compositional approaches~\cite{LabbafGHM23} considered non-synchronizing state machines or a subset of synchronizing ones to help learning the decomposition~\cite{Samadi2025}.
Our own earlier work assumed the decomposition to be known a priori~\cite{NeeleS23}. These assumptions need to be relaxed to enable dealing with legacy and black box systems and dealing with synchronizing components. In particular, in the presence of an ever-increasing body of legacy automatically-generated code, architectural discovery is a significant challenge~\cite{Koschke2009,FuchssLiuHey2025,Marc2024}. This is especially pertinent given that automata learning has already been successfully applied to the refactoring and rejuvenation of legacy systems~\cite{SchutsHV16}, making architectural discovery a prerequisite for scaling such applications. 

In this paper, we take a significant step beyond the available results, developing a compositional automata learning approach that assumes no prior knowledge of the decomposition of the alphabet and allows for an arbitrary general synchronization scheme, common in the theory of automata and process calculi~\cite{Hoare2004}. 
To this end, we take inspiration from the realizability problem in concurrency theory and 
use iterative refinements of the alphabet decomposition (called distributions~\cite{Mukund2002}) to arrive at a provably sound decomposition while learning the components' behavior. The algorithm orchestrates multiple instances of an extended $\Lstar$, coordinated by an intermediate layer that translates between local component queries and global system queries. Several challenges arise from the synchronizing nature of the components. The answer to some membership queries for a specific component may be \emph{unknown} if the correct sequence of interactions with other components has not yet been observed, and counter-examples for the global system may yield \emph{spurious} local counter-examples that need to be corrected later. Furthermore, the decomposition itself must be discovered: when a global counter-example reveals an inconsistency in the current distribution, it must be resolved and the local learners restarted, in a process that must be guaranteed to converge. We address all of these challenges in a unified framework.

To our knowledge this is the first result of its kind and the first extension of realizability into the domain of automata learning. 
This paper extends our earlier conference publications \cite{HenryMNS25,NeeleS23} by including full proofs of the results and integrating and contextualizing the contributions of both conference papers. Moreover, we expand on the presentation and analysis of our empirical evaluation. 

To summarize, the contributions of our paper are listed below: 
\begin{itemize}
    \item We develop a novel theory of system decomposition for LTS synchronization that formally characterizes which alphabet decompositions can accurately model observed behaviors, establishing a theoretical foundation for automated component discovery.
    \item We design a query translation layer that mediates between local and global queries, and extend $\Lstar$ to handle the resulting incomplete information and spurious counter-examples, via wildcards and a backtracking mechanism.
    \item Based on the above, we propose a compositional active learning algorithm that dynamically refines component alphabets during the learning process, supporting standard synchronization mechanisms without requiring a priori knowledge of the system's decomposition.
    \item 
    We implemented our approach as the prototype tool \coala, built on the state-of-the-art LearnLib framework~\cite{IsbernerHS15}, and evaluated it on over 630 systems from three benchmark sets. Compared to a monolithic approach, \coala achieved substantial reductions in queries, with up to five orders of magnitude fewer membership queries and one order fewer equivalence queries across most of our benchmark systems with parallel components, resulting in better overall scalability. The replication package is available at~\cite{HenryMNS25Rep}.
\end{itemize}

\section{Related Work}
\label{sec:related_work}
Finding ways of projecting a known concurrent system down into its components is the subject of several works, e.g., \cite{CastellaniMT99,GrooteM92,Mukund2002}.
In principle, it would be possible to learn the system monolithically and use the aforementioned results.
However, as demonstrated in our earlier work~\cite{NeeleS23}, this may result in a substantial query blow-up.
For example, the approach of Bollig et al.~\cite{BolligKKL10} learns asynchronously-communicating finite state machines via queries in the form of message sequence charts.
The result is a monolithic DFA that is later broken down into components via an additional synthesis procedure.
Such an approach does not mitigate the exponential blow up in the number of queries due to the interleaving in the semantics of the parallel model. 
Another example is the approach by van Heerdt et al.~\cite{HeerdtKR021}, which provides an extension of \Lstar to \emph{pomset automata}.
These automata are acceptors of partially-ordered multisets, which model concurrent computations.
This  approach relies on an oracle capable of processing pomset-shaped queries; devising such an oracle will require substantial work, such as the approach reported in our paper, and a naive attempt to an ordinary sequential oracle may cause a query blow-up.

Our work integrates two recent approaches on compositional learning: we extend the work on learning synchronous parallel composition of automata~\cite{NeeleS23} by automatically learning the decomposition of the alphabets, through refinement of distributions; moreover, we extend the work on learning interleaving parallel composition of automata~\cite{LabbafGHM23} by enabling a generic synchronization scheme among components.
In parallel to our work, an alternative proposal~\cite{Samadi2025} has appeared to allow for synchronization among interleaving automata; however, the proposed synchronization scheme assumes that whenever two components are not ready to synchronize, e.g., because they produce different outputs on the same input, a special output is produced (or otherwise, the semantic model is output deterministic).
We do not assume any such additional information and use a synchronization scheme widely used in the theory of automata and process calculi~\cite{Hoare2004}.

Other contributions related to compositional learning include 
the active learning of \emph{product automata}, a variation of Mealy machine where the output is the combination of outputs of several Mealy machines -- as in our case, the component Mealy machines are learned individually~\cite{Moerman18}; learning of \emph{systems of procedural automata}~\cite{FrohmeS21}, sets of automata that can call each other in a way similar to procedure calls; learning asynchronously-communicating finite state machines via queries in the form of message sequence charts~\cite{BolligKKL10}, though using a monolithic approach.

Other approaches consider Teachers that are unable to reply to membership queries~\cite{AbelR16,GrinchteinL06,GrinchteinLP06,LeuckerN12}; they all use SAT-based techniques to construct automata.
The closest works to ours are by Grinchtein et al.~\cite{GrinchteinLP06}, considering the problem of compositionally learning a property of a concurrent system with full knowledge of the components; and in the approach by Abel and Reineke \cite{AbelR16}, learning an unknown component of the \emph{serial} composition of two automata.
In none of these works spurious counter-examples arise.

Realizability of sequential specifications in terms of parallel components has been a long-standing problem in concurrency theory. 
In the context of Petri nets, this has been pioneered by Ehrenfeucht and Rozenberg \cite{EhrenfeuchtR89a}, followed up by the work of Castellani, Mukund and Thiagarajan \cite{CastellaniMT99}.
Realizability has been further investigated in other models of concurrency such as team automata \cite{BeekHP24}, session types \cite{Barbanera2021}, communicating automata \cite{Guanciale2018} and labeled transition systems (LTSs)~\cite{Mukund2002}.
Related to this line of research is the decomposition of LTSs into prime processes \cite{Lut16}. 
We are inspired by the work of Mukund~\cite{Mukund2002}, characterizing the transition systems that can be synthesized into an equivalent parallel system given a decomposition of their alphabet (called distribution).
Mukund explores this characterization for two notions of parallel composition (loosely cooperating- and synchronous parallel composition) and three notions of equivalence (isomorphism, language equivalence, and bisimulation).
We base our work on the results of Mukund for loosely cooperating systems and language equivalence. We extend it to define consistency between observations and distributions and refining distributions to reinstate consistency.

\section{Preliminaries}
\label{sec:preliminaries}

\paragraph{Notation and terminology.}
We use $\ab$ to denote a \emph{finite alphabet} of action symbols, and $\ab^\star$ to denote the set of finite sequences of symbols in $\ab$, which we call \emph{traces}; we use $\epsilon$ to denote the empty trace.
Given two traces $s_1,s_2 \in \ab^\star$, we denote their concatenation by $s_1 \cdot s_2$; for two sets $S_1,S_2 \subseteq \ab^\star$, $S_1 \cdot S_2$ denotes element-wise concatenation.
Given $s \in \ab^\star$, we denote by $\pref(s)$ the set of prefixes of $s$, and by $\suff(s)$ the set of its suffixes; the notation lifts to sets $S \subseteq \ab^\star$ as expected.
We say that $S \subseteq \ab^\star$ is \emph{prefix-closed} (resp.\ \emph{suffix-closed}) whenever $S = \pref(S)$ (resp.\ $S = \suff(S)$).
The \emph{projection} $\projection{\sigma}{\ab'}$ of $\sigma$ on an alphabet $\ab' \subseteq \ab$ is the sequence of symbols in $\sigma$ that are also contained in $\ab'$ : \(\projection{\epsilon}{\ab'}=\epsilon\) and \(\projection{\sigma\cdot a}{\ab'}=\projection{\sigma}{\ab'}\cdot a\) if \(a\in\ab'\) and \(\projection{\sigma}{\ab'}\) otherwise. 
We generalize this notation to sets (and thus languages), such that 
\(\projection{S}{\ab'}=\{\projection{\sigma}{\ab'}\mid\sigma\in S\}\).
Finally, given a set $S$, we write $|S|$ for its cardinality.

\subsection{Labeled Transition Systems and their Languages}

In this work we represent the state-based behavior of a system as a \emph{labeled transition system}.

\begin{defi}[Labeled Transition System]
	\label{def:lts}
	A \emph{labeled transition system} (LTS) is a four-tuple $L = (S,\ab,\mathord\edgerel,\linebreak[1]\init{s})$, where
	\begin{itemize}
		\item $S$ is a set of states, which we refer to as the \emph{state space};
		\item $\ab$ is a finite set of actions, called the \emph{alphabet};
		\item $\edgerel\, \subseteq S \times \ab \times S$ is a transition relation; and
		\item $\init{s} \in S$ is an initial state.
	\end{itemize}
	We say that $L$ is \emph{deterministic} whenever for each $s \in S$, $a \in \ab$ there is at most one transition from $s$ labeled by $a$.
\end{defi}

We write in infix notation $s \transition{a} t$ for $(s,a,t) \in \mathnormal{\edgerel}$. 
We say that an action $a$ is
\emph{enabled} in $s$, written $s \transition{a}$, if there is $t$ such that $s \transition{a} t$.
The transition relation and the notion of enabled-ness are also extended to traces $\sigma \in \ab^\star$, yielding $s \transition{\sigma} t$ and $s \transition{\sigma}$.
\begin{defi}[Language of an LTS]
    The language of $\lts$ is the set of traces enabled from the starting state, formally:
\[
	\lang(\lts) = \{ \sigma \in \ab^\star \mid \init{s} \transition{\sigma} \} \enspace .
\]
\label{def:lts-lang}
\end{defi}

From here on, we only consider deterministic LTSs.
Note that this does not reduce the expressivity, in terms of the languages that can be encoded.
\begin{rem}
\label{rem:lts-dfa}
Languages of LTSs are always prefix-closed, because every prefix of an enabled trace is necessarily enabled. Prefix-closed languages are accepted by a special class of deterministic finite automata (DFA), where all states are final except for a sink state, from which all transitions are self-loops.
Our implementation (see Section~\ref{sec:exp}) uses these models as underlying representation of LTSs.
\end{rem}

The parallel composition of a finite set of LTSs is a product model representing all possible behaviors when the LTSs synchronize on shared actions.
Intuitively, an action $a$ can be performed when all LTSs that have \(a\) in their alphabet can perform it in their current state. The other LTSs remain idle during the transition.

\begin{defi}[Parallel composition]
	\label{def:n_parallel_composition}
	Given $n$ LTSs $\lts_i = (S_i, \ab_i, \edgerel_i, \init{s}_i)$ for $1 \leq i \leq n$, their \emph{parallel composition}, denoted $\bigpar_{i=1}^n \lts_i$, is an LTS $(S_1 \times \dots \times S_n, \bigcup_{i=1}^n \ab_i, \transition{}, (\init{s}_1, \dots, \init{s}_n))$, where the transition relation $\transition{}$ is given by the following rule:
	\[
		\frac
		{%
		\begin{alignedat}{2}
			s_i \transition{a}_i t_i \quad & \text{for all $i$ such that $a \in \ab_i$} \\
			s_j = t_j \quad & \text{for all $j$ such that $a \notin \ab_j$}
		\end{alignedat}
		}
		{(s_1,\dots,s_n) \transition{a} (t_1,\dots,t_n)}
	\]
\end{defi}

We say that an action $a$ is \emph{local} if there is exactly one $i$ such that $a \in \ab_i$, otherwise it is called \emph{synchronizing}.
The parallel composition of LTSs thus forces individual LTSs to cooperate on synchronizing actions; local actions can be performed independently.
We typically refer to the LTSs that make up a composite LTS as \emph{components}.
Synchronization of components corresponds to communication between components in real-world settings.

As clear from the definition, the paths of a parallel composition are exactly the composition of paths for each component. 
\begin{lem}
    \label{lm:comp_int_aut}
    Given $n$ LTSs $\lts_i = (S_i, \ab_i, \edgerel_i, \init{s}_i)$ for $1 \leq i \leq n$ and their parallel composition \(\bigpar_{i=1}^n \lts_i\), for any word \(\sigma\in(\bigcup_{i=1}^n \ab_i)^{*}\), 
    \((\init{s}_1, \dots, \init{s}_n)\transition{\sigma}(t_1,\dots,t_n)
    \Leftrightarrow \forall i\, \init{s}_i\transition{\projection{\sigma}{\ab_i}}t_i\)
\end{lem}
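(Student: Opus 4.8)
The plan is to prove the biconditional by induction on the length of $\sigma$, establishing both implications simultaneously. To obtain a usable induction hypothesis it is convenient to prove the slightly more general statement in which the source state $(\init{s}_1,\dots,\init{s}_n)$ is replaced by an arbitrary composite state $(s_1,\dots,s_n)$, namely that $(s_1,\dots,s_n)\transition{\sigma}(t_1,\dots,t_n)$ holds iff $s_i\transition{\projection{\sigma}{\ab_i}}t_i$ for every $i$; the lemma then follows by specialising $s_i=\init{s}_i$. Since the projection $\projection{(\cdot)}{\ab_i}$ is defined by recursion on the last symbol of a trace, I would structure the induction the same way, writing $s\transition{\sigma\cdot a}t$ iff there is an intermediate $u$ with $s\transition{\sigma}u\transition{a}t$. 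The base case $\sigma=\epsilon$ is immediate: both sides assert $t_i=s_i$ for every $i$, because $\projection{\epsilon}{\ab_i}=\epsilon$.

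For the inductive step, fix a trace $\sigma\cdot a$ and assume the claim for $\sigma$. In the forward direction I would split the composite transition $(s_1,\dots,s_n)\transition{\sigma\cdot a}(t_1,\dots,t_n)$ into $(s_1,\dots,s_n)\transition{\sigma}(u_1,\dots,u_n)$ and $(u_1,\dots,u_n)\transition{a}(t_1,\dots,t_n)$, apply the induction hypothesis to the first part to get $s_i\transition{\projection{\sigma}{\ab_i}}u_i$ for all $i$, and read off the component behaviour of the last step from \Cref{def:n_parallel_composition}: $u_i\transition{a}_i t_i$ when $a\in\ab_i$, and $u_j=t_j$ when $a\notin\ab_j$. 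A case split then finishes this direction, since $\projection{\sigma\cdot a}{\ab_i}=\projection{\sigma}{\ab_i}\cdot a$ exactly when $a\in\ab_i$ and equals $\projection{\sigma}{\ab_i}$ otherwise.

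The converse is the part that actually exploits the structure of the parallel product, so I expect it to be the main obstacle. Here I would have to reconstruct a single consistent intermediate composite state out of the $n$ independent component runs: for each $i$ with $a\in\ab_i$, the hypothesis $s_i\transition{\projection{\sigma}{\ab_i}\cdot a}t_i$ yields a witness $u_i$ with $s_i\transition{\projection{\sigma}{\ab_i}}u_i\transition{a}_i t_i$, while for each $j$ with $a\notin\ab_j$ I would set $u_j=t_j$ and use $s_j\transition{\projection{\sigma}{\ab_j}}t_j$ directly. The delicate point is that these locally chosen $u_i$ must together form a legitimate $\sigma$-run of the product; this is precisely what the induction hypothesis guarantees, giving $(s_1,\dots,s_n)\transition{\sigma}(u_1,\dots,u_n)$, after which the constructed tuple satisfies the premises of the parallel-composition rule and hence $(u_1,\dots,u_n)\transition{a}(t_1,\dots,t_n)$. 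I would also note that determinism of the components, assumed elsewhere in the paper, plays no role: only the \emph{existence} of the witnesses $u_i$ is required, so the argument goes through for arbitrary, possibly non-deterministic, LTSs.
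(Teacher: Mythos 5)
Your proof is correct and matches the paper's approach: the paper proves this lemma simply ``by induction on the length of $\sigma$,'' which is exactly the argument you carry out in detail, including the standard strengthening to arbitrary source states needed to make the induction hypothesis usable. Your added observations---that the converse direction is where the product structure is reconstructed, and that determinism is never used---are accurate refinements of the same argument, not a different route.
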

\begin{proof}
    By induction on the length of \(\sigma\).
\end{proof}

For the purposes of active learning, it is often useful to reason on languages instead of automata. We hence define the notion of parallel composition for languages on restricted alphabets, following the intuition detailed in \cref{lm:comp_int_aut}.
\begin{defi}[Parallel composition of languages]
    Given $n$ languages and alphabets \((\lang_i,\ab_i)\) such that \(\lang_i\subseteq\ab_i^\star\) for all $1 \leq i \leq n$, let $\ab = \bigcup_{i=1}^{n} \ab_i$. We define $\bigpar_{i=1}^{n} (\lang_i,\ab_i)$ as
    \[ \parcomp_{i=1}^{n} (\lang_i,\ab_i) = \{ \sigma\in\ab^\star \mid \forall 1 \leq i \leq n \ldotp \projection{\sigma}{\ab_i} \in \lang_i \}\ .\]
\end{defi}
\begin{lem}
	Given $n$ LTSs $\lts_i = (S_i, \ab_i, \edgerel_i, \init{s}_i)$ for $1 \leq i \leq n$, 
	\[\parcomp_{i=1}^{n} (\lang(\lts_i),\ab_i) = \lang(\parcomp_{i=1}^n \lts_i)\]
\end{lem}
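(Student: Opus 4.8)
The plan is to prove the set equality by fixing an arbitrary trace $\sigma \in \ab^\star$ (where $\ab = \bigcup_{i=1}^n \ab_i$) and establishing a chain of biconditionals between membership in the right-hand side $\lang(\parcomp_{i=1}^n \lts_i)$ and membership in the left-hand side $\parcomp_{i=1}^n (\lang(\lts_i),\ab_i)$. The whole argument is essentially an unfolding of definitions glued together by \cref{lm:comp_int_aut}, which already does the work of relating a global transition to the per-component transitions on projected traces.

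First I would unfold the right-hand side via \cref{def:lts-lang}: we have $\sigma \in \lang(\parcomp_{i=1}^n \lts_i)$ precisely when $\sigma$ is enabled from the composite initial state, i.e.\ when there exists a composite target state $(t_1,\dots,t_n)$ with $(\init{s}_1,\dots,\init{s}_n) \transition{\sigma} (t_1,\dots,t_n)$. Applying \cref{lm:comp_int_aut} rewrites this condition as the existence of a tuple $(t_1,\dots,t_n)$ such that $\init{s}_i \transition{\projection{\sigma}{\ab_i}} t_i$ holds for every $i$. The key move is then to commute the existential quantifier over the tuple with the universal quantifier over the components: since the index set $\{1,\dots,n\}$ is finite and each conjunct $\init{s}_i \transition{\projection{\sigma}{\ab_i}} t_i$ constrains only its own component $t_i$, the existence of a single witnessing tuple is equivalent to the separate existence of each witness $t_i$. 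This turns the statement into ``for every $i$ there is some $t_i$ with $\init{s}_i \transition{\projection{\sigma}{\ab_i}} t_i$'', which is exactly the enabledness $\init{s}_i \transition{\projection{\sigma}{\ab_i}}$, i.e.\ $\projection{\sigma}{\ab_i} \in \lang(\lts_i)$ for all $i$. By the definition of parallel composition of languages, this is precisely $\sigma \in \parcomp_{i=1}^n (\lang(\lts_i),\ab_i)$, closing the chain of equivalences and hence proving the two sets equal.

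I do not anticipate any genuine obstacle: every step in the chain is a biconditional, and \cref{lm:comp_int_aut} carries the substantive content. The only point that deserves an explicit word of justification is the quantifier swap, and it is harmless because the component witnesses are mutually independent over a finite index set (determinism of the LTSs even makes each $t_i$ unique, though uniqueness plays no role in the argument). Correspondingly, the author's proof is likely to be just the short unfolding sketched above rather than, say, a separate induction on the length of $\sigma$, since that inductive content has already been discharged in \cref{lm:comp_int_aut}.
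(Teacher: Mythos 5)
Your proposal is correct and takes essentially the same route as the paper: both proofs are a short unfolding of the definitions of the two languages glued together by \cref{lm:comp_int_aut}, with your explicit justification of the quantifier swap (existence of a witnessing tuple versus componentwise existence of each $t_i$) merely spelling out a step the paper's terser proof leaves implicit. Nothing further is needed.
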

\begin{proof}
   For any \(\lts_i\) we have \(\lang(\lts_i)=\{\sigma_i \in \ab_i^\star \mid \init{s}_i \transition{\sigma_i}_i\}\), hence 
   \[ 
   \parcomp_{i=1}^{n} (\lang(\lts_i),\ab_i)=\{\sigma \in \ab^\star \mid \forall 1\leq i \leq n\ldotp \init{s}_i\transition{\projection{\sigma}{\ab_i}}_i \}
   \]
   By \cref{lm:comp_int_aut}, this set of traces is exactly \(\lang(\parcomp_{i=1}^n \lts_i)\).
\end{proof}

\begin{exa}[Running example]
\label{ex:running}
	\label{ex:parallel_composition}
	Consider the LTSs $\lts_1$ and $\lts_2$ given in \Cref{fig:running}, with the respective alphabets $\{a,b,c\}$ and $\{b,d\}$. Their parallel composition is depicted at the bottom of \Cref{fig:running}.
    \begin{figure}[t]
	\centering
    \begin{tikzpicture}[->,>=stealth',auto,semithick, initial text={}, initial distance={10pt}, every state/.style={inner sep=1pt, minimum size=0pt},clip=true]
        \def\d{1.4}

        \begin{scope}[xshift=-3cm]
            \node                  at (-1.2,0) {$\lts_1 = $};
            \node[state,initial] (s0) at (0,0)    {$s_0$};
            \node[state]      (s1) at (\d,0)   {$s_1$};
            \node[state]      (s2) at (2*\d,0) {$s_2$};

            \path
            (s0) edge[bend left] node  {$a$} (s1)
            (s0) edge[bend right] node {$c$} (s1)
            (s1) edge[loop above] node {$b$} (s1)
            (s1) edge node {$c$} (s2);
        \end{scope}

        \begin{scope}[xshift=3cm]
            \node                  at (-1.2,0) {$\lts_2 = $};
            \node[state,initial] (t0) at (0,0)    {$t_0$};
            \node[state]      (t1) at (\d,0)   {$t_1$};
            \node[state]      (t2) at (2*\d,0) {$t_2$};

            \path
            (t0) edge node {$b$} (t1)
            (t1) edge node {$d$} (t2);
        \end{scope}

        \begin{scope}[xshift=-2cm, yshift=-1.6cm]
            \def\x{2}
            \def\y{1}
            \node at (-1.8,0)   {$ \lts_1 \parcomp \lts_2 = $};
            
            \node[comp state,initial] (11) at (0,0)   {$s_0,t_0$};
            \node[comp state] (21) at (\x,0) {$s_1,t_0$};      
            \node[comp state] (31) at (\x,-\y) {$s_2,t_0$};
            \node[comp state] (22) at (2*\x,0) {$s_1,t_1$};
            \node[comp state] (32) at (2.8*\x,0.7*\y) {$s_2,t_1$};
            \node[comp state] (33) at (3.6*\x,0) {$s_2,t_2$};
            \node[comp state] (23) at (2.8*\x,-0.7*\y) {$s_1,t_2$};				

            \path
            (11) edge[bend left=20]    node {$a$} (21)
            (11) edge[bend right=20]   node {$c$} (21)
            (21) edge[']               node {$c$} (31)
            (21) edge                  node {$b$} (22)
            (22) edge[']               node {$d$} (23)
            (22) edge                  node {$c$} (32)
            (32) edge				   node {$d$} (33)
            (23) edge[']               node {$c$} (33);
        \end{scope}
    \end{tikzpicture}
    \caption{The parallel composition of two LTSs.}
    \label{fig:running}
    \end{figure}
	Here $a$, $c$ and $d$ are local actions, whereas $b$ is synchronizing. Note that, although $\lts_2$ can perform $b$ from its initial state $t_0$, there is no $b$ transition from $(s_0,t_0)$ in $\lts_1 \parcomp \lts_2$, because $b$ is not enabled in $s_0$. Action $b$ can only be performed in $\lts_1 \parcomp \lts_2$ after $\lts_1$ does an $a$ or a $c$ and moves to $s_1$, which is captured as the $a$ and $c$ transitions from $(s_0,t_0)$.
\end{exa}

\subsection{Active Automata Learning}
In active automata learning~\cite{Angluin87}, a \emph{Learner} infers an automaton model of an unknown language $\lang$ by querying a \emph{Teacher}, which knows $\lang$ and answers two query types (see~\Cref{fig:mat}):
\begin{itemize}
    \item \emph{Membership queries}: is a trace $\sigma$ in $\lang$? The Teacher replies yes/no.
    \item \emph{Equivalence queries}: given a \emph{hypothesis} model $\CH$, is $\lang(\CH) = \lang$? The Teacher either replies yes or provides a \emph{counter-example} -- a trace that is in one language but not in the other.
\end{itemize}
\begin{figure}[ht]
    \centering
    \begin{tikzpicture}[scale=.75]
        \node[draw, minimum width=2cm, minimum height=2cm] (learner) at (0,0) {\bf Learner};
        \node[draw, minimum width=2cm, minimum height=2cm] (Teacher) at (11,0) {\bf Teacher};

\def\topA{1cm}
        \def\topB{0.8cm}
        \def\botA{-0.8cm}
        \def\botB{-1cm}

\path[-{latex'}] 
        ([yshift=\topA]learner.east) edge node[above] {\small Membership query: $\sigma \in \lang?$} ([yshift=\topA]Teacher.west)
        ([yshift=\topB]Teacher.west) edge node[below] {\small yes / no} ([yshift=\topB]learner.east)
        ([yshift=\botA]learner.east) edge node[above] {\small Equivalence Query: $\lang(\CH) = \lang?$} ([yshift=\botA]Teacher.west)
        ([yshift=\botB]Teacher.west) edge node[below] {\small yes / (no + $\mathit{cex})$} ([yshift=\botB]learner.east);
   
    \end{tikzpicture}
    \caption{Active automata learning for a target language $\lang$.}\vspace*{-.4cm}
    \label{fig:mat}
    \end{figure}
Algorithms based on this framework converge to a canonical model (\eg, the minimal DFA) of the target language.
In practice when learning from a \emph{black box} System Under Learning (SUL), the Teacher is realized as an interface to the SUL: membership queries become tests on the SUL, and equivalence queries are approximated via systematic testing strategies \cite{BroyEtAl2004,KrugerJR24}.

During learning, the learner gathers \emph{observations} about the SUL. 
While these observations are typically organized in a data structure (e.g., a table or a tree), they can be abstractly represented as a partial function mapping traces to their accepted ($\plus$) or rejected ($\minus$) status.

\begin{defi}[Observation function]
    An \emph{observation function} over \(\ab\) is a partial function 
    \(\Obs:\ab^\star\pfun\{\plus,\minus\}\). 
\end{defi}

We write \(\Dom(\Obs)\) for the domain of \(\Obs\) and only consider observation functions with a finite domain.
We sometimes represent an observation function \(\Obs\) as the set of pairs \(\{(\sigma,\Obs(\sigma))\mid\sigma\in\Dom(\Obs)\}\). 

\begin{defi}[Observation function/language agreement]
    An observation function \(\Obs\) agrees with a language \(\lang\), notation \(\lang\models\Obs\), 
    whenever \(\sigma\in\lang \Leftrightarrow \Obs(\sigma)=+\), for all \(\sigma\in\Dom(\Obs)\).
\end{defi}
\subsection{\Lstar algorithm}
The classical and most widely used active automata learning algorithm is \Lstar. Although the algorithm targets DFAs, we will present it in terms of deterministic LTSs, as they are our target model in this paper.

In \Lstar, the learner organises observations in a \emph{observation table}, which is a triple $(S,E,T)$, consisting of a finite, prefix-closed set $S \subseteq \ab^\star$, a finite, suffix-closed set $E \subseteq \ab^\star$, and a function $T : (S \cup S \cdot \ab)\cdot E \to \{+,-\}$.
The function $T$ can be seen as a table in which rows are labeled by traces in $S \cup S \cdot \Sigma$, columns by traces in $E$, and cells $T(s \cdot e)$ are set to $\Obs(s \cdot e)$.\footnote{Note that the traditional \Lstar assumes $\lang \models \Obs$; in our setting we sometimes deviate from this assumption.}

\begin{exa}
	\label{ex:obs-table}
	Consider the prefix-closed language $\lang$ over the alphabet $\ab = \{a,b\}$ consisting of traces where $a$ and $b$ alternate, starting with $a$; for instance $aba \in \lang$ but $abb \notin \lang$.
	An observation table generated by a run of \Lstar targeting this language is shown in Figure~\ref{fig:obs_table}.
	\qed
\end{exa}

\begin{figure}[t]
	\centering
	\begin{subfigure}{0.4\textwidth}
		\begin{tikzpicture}
			\tikzset{
				table/.style={
					matrix of math nodes,
					column sep=-\pgflinewidth,
					nodes={rectangle,text width=1em,text height=.6em,align=center},
					nodes in empty cells,
					ampersand replacement=\&
				},
				column 1/.style={nodes={align=left}}
			}
			\matrix (T) [table] {
				\&[.5em] \epsilon \& b \\
				\epsilon \& \plus \& \minus \\
				b \& \minus \& \minus \\
				a \& \plus \& \plus \\
				ab \& \plus \& \minus \\
				ba \& \minus \& \minus \\
				bb \& \minus \& \minus \\
				aa \& \minus \& \minus \\
				aba \& \plus \& \plus \\
				abb \& \minus \& \minus \\
			};
			\draw[decorate,decoration={brace,mirror},transform canvas={xshift=-.5em},thick] ([yshift=-2pt]T-2-1.north west) -- node[left=1em] {$S$} ([yshift=2pt]T-5-1.south west);
			\draw[decorate,decoration={brace,mirror},transform canvas={xshift=-.5em},thick] ([yshift=-2pt]T-6-1.north west) -- node[left=1em] {$S \cdot \ab \setminus S$} ([yshift=2pt]T-10-1.south west);
			\draw[decorate,decoration={brace},transform canvas={yshift=.5em},thick] ([xshift=2pt]T-1-2.north west) -- node[above=2pt] {$E$} ([xshift=-6pt]T-1-3.north east);
			\draw [thick] (T-1-2.north west) -- (T-10-2.south west);
			\draw [thick] (T-1-1.south west) -- (T-1-3.south east);
			\draw [thick,dotted] (T-5-1.south west) -- (T-5-3.south east);
		\end{tikzpicture}
		\caption{}
		\label{fig:obs_table}
	\end{subfigure}
	\begin{subfigure}{0.3\textwidth}
		\def\v{1.7cm}
		\vspace*{\v}
		\begin{tikzpicture}[->,>=stealth',auto,semithick, initial text={}, initial distance={10pt}, every state/.style={inner sep=1pt, minimum size=0pt}]
			\tikzstyle{state} = [draw,circle,inner sep=1pt];
			\tikzstyle{init} = [pin={[pin edge={black,<-},pin distance=3mm]left:}];

			\node[state,initial] (s0) at (0,0)   {$\plus \; \minus$};
			\node[state]      (s1) at (2,0)   {$\plus \; \plus$};

			\path
			(s0) edge[bend left] node[above] {$a$} (s1)
			(s1) edge[bend left] node[above] {$b$} (s0);
		\end{tikzpicture}
		\vspace*{\v}
		\caption{}
		\label{fig:constructed_lts}
	\end{subfigure}
	\caption{A closed and consistent observation table and the LTS that can be constructed from it.}
	\label{fig:obs_table_and_lts}
\end{figure}

Let $row_T \colon S \cup S \cdot \ab \to (E \to \{\plus,\minus\})$ denote the function $row_T(s)(e) = T(s \cdot e)$ mapping each row of $T$ to its content (we omit the subscript $T$ when clear from the context). The crucial observation is that $T$ approximates the Nerode congruence~\cite{Nerode58} for $\lang$ as follows: $s_1$ and $s_2$ are in the same congruence class only if $row(s_1) = row(s_2)$, for $s_1,s_2 \in S$. Based on this fact, the learner can construct a hypothesis LTS from the table, in the same way the minimal DFA accepting a given language is built via its Nerode congruence:\footnote{For the minimal DFA, the set of states is $\{ row(s) \mid  s \in S\}$; here we only take accepting states as we are building an LTS.}
\begin{itemize}
	\item the set of states is $\{ row(s) \mid s \in S, row(s)(\epsilon) = \plus\}$ (because we target LTSs);
	\item the initial state is $row(\epsilon)$;
	\item the transition relation is given by $row(s) \transition{a} row(s \cdot a)$, for all $s \in S$ and $a \in \ab$.
\end{itemize}
In order for the transition relation to be well-defined, the table has to satisfy the following conditions:
\begin{itemize}
	\item \textbf{Closedness:} for all $s \in S, a \in \Sigma$, there is $s' \in S$ such that $row_T(s') = row_T(s \cdot a)$.
	\item \textbf{Consistency:} for all $s_1,s_2 \in S$ such that $row_T(s_1) = row_T(s_2)$, we have $\row_T(s_1 \cdot a) = row_T(s_2 \cdot a)$, for all $a \in \Sigma$.
\end{itemize}
\begin{exa}
	The table of Example~\ref{ex:obs-table} is closed and consistent. The corresponding hypothesis LTS, which is also the minimal LTS that has $\lang$ as its language, is shown in Figure~\ref{fig:constructed_lts}.
	\qed
\end{exa}
The algorithm works in an iterative fashion: starting from the empty table, where $S$ and $E$ only contain $\epsilon$, the learner extends the table via membership queries until it is closed and consistent, at which point it builds a hypothesis and submits it to the Teacher in an equivalence query. If a counter-example is received, it is incorporated in the observation table by adding its prefixes to $S$, and the updated table is again checked for closedness and consistency. The algorithm is guaranteed to eventually produce a hypothesis $H$ such that $\lang(H) = \lang$, for which an equivalence query will be answered positively, causing the algorithm to terminate.
\section{Foundation and Overview of the Approach}
\label{sec:overview}

The active learning setting described so far is monolithic: a single learner infers a model of the entire system. In this section, we present an overview of our approach to lift this setting to the compositional one. We first develop the necessary foundations (\Cref{subsec:distributions}): we define local observation functions derived from global ones, and characterize when a decomposition of the global alphabet (a \emph{distribution}) adequately models the global observations. We then present an overview of the algorithm that builds on these foundations (\Cref{subsec:overview}).

\subsection{Observations and Distributions.}
\label{subsec:distributions}
In a parallel composition $\parcomp_i \lts_i$, permuting symbols belonging to different local alphabets does not affect language membership. In \Cref{fig:running}, for instance, $abdc$ is in the language $\lang(\lts_1 \parcomp \lts_2)$ whenever $abcd$ is, since $c$ and $d$ belong to different components. A \emph{distribution} formalises this independence assumption by specifying which symbols belong to the same component. 
\begin{defi}[Distribution]
    A \emph{distribution} of an alphabet \(\ab\) is a set \(\Omega=\{\ab_1,\dots,\ab_n\}\) such that \(\bigcup_{i=1}^n \ab_i=\ab\). 
\end{defi}
For the rest of this subsection, we fix an alphabet \(\ab\), a distribution \(\Omega=\{
\ab_1,\dots,\ab_n\}\) of \(\ab\) and an observation function \(\Obs\) over \(\ab\) unless otherwise specified.

We then define how to derive a local observation function for each $\ab_i$ from the global observations, mimicking parallel composition: when \(\sigma\) is enabled in $\bigpar_{i=1}^n \lts_i$ then it must be enabled in all \(\lts_i\). 

\begin{defi}[Local observation function]
    Given a sub-alphabet \(\ab_i\subseteq\ab\),
    a local observation function
    \(\Obs_{\ab_i}:\ab_i^\star\pfun\{\plus,\minus\}\) is defined such that 
    \(\Dom(\Obs_{\ab_i})=\projection{\Dom(\Obs)}{\ab_i}\) and 
    \(\Obs_{\ab_i}(\sigma')=\bigvee_{\{\sigma\mid\sigma\in\Dom(\Obs)\wedge \projection{\sigma}{\ab_i}=\sigma'\}}\Obs(\sigma)\), for all \(\sigma'\in\Dom(\Obs_{\ab_i})\).
\end{defi}

\begin{exa}
    Consider again the LTSs from \Cref{fig:running} and suppose we are given the following observation function for \(\lts_1 \parcomp \lts_2\):
    \(\Obs: a \mapsto \plus; aa \mapsto \minus; abd \mapsto \plus\).
    The local observation functions we obtain for \(T_1\) and \(T_2\) are, respectively:
    \[\Obs_{\{a,b,c\}}: a \mapsto \plus; aa \mapsto \minus; ab \mapsto \plus \qquad \Obs_{\{b,d\}}: \epsilon \mapsto \plus; bd \mapsto \plus.\]
    The observation \(\Obs(abd) = \plus\) requires both components to cooperate, hence \(\Obs_{\{a,b,c\}}(ab) = \plus\) and \(\Obs_{\{b,d\}}(bd) = \plus\).
    We derive \(\Obs_{\{b,d\}}(\epsilon) = \plus\) from \(\Obs(a) \lor \Obs(aa) = \plus\), since the projection of both these traces to \(\{b,d\}\) is \(\epsilon\).
\end{exa}

The definition of local observations gives one direction: global observations determine local ones. The converse, however, does not hold. As noted above, a distribution encodes independence assumptions: if two symbols $a$ and $b$ are placed in different alphabets, then $\projection{ab}{\ab_i} = \projection{ba}{\ab_i}$, for all $\ab_i \in \Omega$, and hence $\Obs_{\ab_i}(\projection{ab}{\ab_i}) = \Obs_{\ab_i}(\projection{ba}{\ab_i})$. In other words, their relative order cannot be observed locally. Any model learned over $\Omega$ must classify $ab$ and $ba$ in the same way, even if $\Obs(ab) \neq \Obs(ba)$. 

Characterizing when the local observations agree with $\Obs$ is therefore essential, as it provides a criterion for detecting when the current distribution encodes incorrect independence assumptions and must be refined.
To this end, we first define the \emph{product languages} over \(\Omega\), \ie the languages that can be represented over that distribution. 
\begin{defi}[Product language]
$\lang$ is a \emph{product language} over $\Omega$, notation $\Omega \models \lang$, iff there exists a family of languages $\{\lang_i\}_{1\leq i \leq n}$, where $\lang_i \subseteq \ab_i^\star$ for all $1\leq i\leq n$, such that $\lang = \bigpar_{i=1}^{n} (\lang_i,\ab_i)$.
\end{defi}

\begin{exa}
    \label{ex:not_model}
    In \Cref{ex:running}, it is clear by construction that \(\{\{a,b,c\},\{b,d\}\}\models \lang(\lts_1\parcomp\lts_2)\). However, \(\lang(\lts_1\parcomp\lts_2)\) is not a product language over \(\Omega_{singles}=\{\{a\},\{b\},\{c\},\{d\}\}\) because any product language over $\Omega_{singles}$ should allow for permuting $a$ and $b$ and thus, would fail to capture the fact that \(b\) can only come after one \(a\).
\end{exa}
We recall the following key lemma for product languages. 
\begin{lem}[\cite{Mukund2002}, Lemma 5.2]
    \label{lem:prod_of_projections}
    A language $\lang$ is a product language over $\Omega$ 
    if and only if $\lang = \bigpar_{i=1}^{n} (\projection{\lang}{\ab_i},\ab_i)$.  
\end{lem}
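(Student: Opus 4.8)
The plan is to prove the two directions separately, with the backward implication essentially immediate and the forward implication resting on a single monotonicity argument. For the backward direction, I would observe that if $\lang = \parcomp_{i=1}^{n}(\projection{\lang}{\ab_i},\ab_i)$ holds, then the family $\lang_i := \projection{\lang}{\ab_i}$ is itself a witness: each $\lang_i \subseteq \ab_i^\star$ by construction, so $\Omega \models \lang$ follows directly from the definition of product language. No work is required beyond exhibiting this witness.

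For the forward direction, suppose $\Omega \models \lang$, witnessed by some family $\{\lang_i\}_{1 \leq i \leq n}$ with $\lang = \parcomp_{i=1}^{n}(\lang_i,\ab_i)$. I would then establish the equality $\lang = \parcomp_{i=1}^{n}(\projection{\lang}{\ab_i},\ab_i)$ by two inclusions. The inclusion $\lang \subseteq \parcomp_{i=1}^{n}(\projection{\lang}{\ab_i},\ab_i)$ in fact holds for \emph{every} language: for any $\sigma \in \lang$ and every $i$ we trivially have $\projection{\sigma}{\ab_i} \in \projection{\lang}{\ab_i}$, which is exactly the membership condition defining the parallel composition of languages.

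The reverse inclusion is where the product assumption is actually used. The key observation is that $\projection{\lang}{\ab_i} \subseteq \lang_i$ for every $i$: indeed, any $\sigma \in \lang = \parcomp_{i=1}^{n}(\lang_i,\ab_i)$ satisfies $\projection{\sigma}{\ab_i} \in \lang_i$ by the definition of parallel composition, so every element of $\projection{\lang}{\ab_i}$ already lies in $\lang_i$. Since parallel composition of languages is monotone in each argument—shrinking the $i$-th language only strengthens the membership condition imposed on the projection $\projection{\sigma}{\ab_i}$—replacing each $\lang_i$ by the smaller $\projection{\lang}{\ab_i}$ yields $\parcomp_{i=1}^{n}(\projection{\lang}{\ab_i},\ab_i) \subseteq \parcomp_{i=1}^{n}(\lang_i,\ab_i) = \lang$. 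Combining the two inclusions gives the claimed equality.

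I do not expect a genuine obstacle here; the only subtlety worth stating explicitly is that the two inclusions have asymmetric status. One holds for arbitrary languages and merely reflects that projecting and recomposing can never lose a word, whereas the other is precisely the content of the lemma and fails in general—it is recovered only because the witnessing family $\{\lang_i\}$ can always be tightened to the projections $\projection{\lang}{\ab_i}$ without changing the composite language. The monotonicity step is routine from the definition of $\parcomp$, so no argument beyond unfolding definitions is needed.
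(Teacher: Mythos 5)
Your proof is correct: the backward direction exhibits the projections $\projection{\lang}{\ab_i}$ as a witnessing family, and the forward direction correctly combines the inclusion $\lang \subseteq \parcomp_{i=1}^{n}(\projection{\lang}{\ab_i},\ab_i)$ (which holds for any $\lang \subseteq \ab^\star$) with the tightening observation $\projection{\lang}{\ab_i} \subseteq \lang_i$ and monotonicity of $\parcomp$ in each argument. Note that the paper does not prove this lemma at all—it is imported verbatim from Mukund~\cite{Mukund2002} (Lemma 5.2)—and your argument is the standard one underlying that reference, so there is no discrepancy to report.
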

We can now define product observations over \(\Omega\), \ie, observations that are consistent with some product language over \(\Omega\).
\begin{defi}[Product observation]
    \label{def:prod_obs}
    $\Obs$ is a \emph{product observation} over $\Omega$, notation $\Omega \models \Obs$, iff there exists a language $\lang$ such that $\Omega \models \lang$ and $\lang \models \Obs$.
    We conversely say that \(\Omega\) \emph{models} \(\Obs\).
\end{defi}
While \Cref{def:prod_obs} does not prescribe how to find such a distribution given an observation function, it can be used to detect precisely when a current distribution is not consistent with observations and must be updated. 
This results in the following proposition linking local and global observations for a given distribution: an observation is a product observation over a distribution if and only if its projections according to the distribution hold the same information as the observation itself.

\begin{prop}
    \label{pr:closed_obs_distribution}
    $\Omega \models \Obs$ if and only if for all traces $\sigma \in \Dom(\Obs)$ it holds that \(\Obs(\sigma) = \bigwedge_{\ab_i \in \Omega} \Obs_{\ab_i}(\projection{\sigma}{\ab_i}) .\)
\end{prop}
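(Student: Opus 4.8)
The plan is to prove both directions after first isolating the ``easy'' inequality that holds unconditionally. For any $\sigma \in \Dom(\Obs)$ and any $\ab_i \in \Omega$, the trace $\sigma$ itself lies in the index set of the disjunction defining $\Obs_{\ab_i}(\projection{\sigma}{\ab_i})$, since $\sigma \in \Dom(\Obs)$ and trivially $\projection{\sigma}{\ab_i} = \projection{\sigma}{\ab_i}$. Hence $\Obs(\sigma) = \plus$ forces $\Obs_{\ab_i}(\projection{\sigma}{\ab_i}) = \plus$ for every $i$, so $\Obs(\sigma) = \plus$ always entails $\bigwedge_{\ab_i \in \Omega} \Obs_{\ab_i}(\projection{\sigma}{\ab_i}) = \plus$. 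Reading $\minus < \plus$, this is the inequality $\Obs(\sigma) \le \bigwedge_{\ab_i \in \Omega} \Obs_{\ab_i}(\projection{\sigma}{\ab_i})$, valid for every $\Omega$ and $\Obs$. The proposition therefore reduces to showing that the \emph{reverse} inequality --- if all local projections are positive then $\sigma$ is positive --- holds for all $\sigma \in \Dom(\Obs)$ exactly when $\Omega \models \Obs$.

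For the forward direction I would take a witnessing product language $\lang$ with $\Omega \models \lang$ and $\lang \models \Obs$, and argue the reverse inequality contrapositively. Given $\sigma \in \Dom(\Obs)$ with $\Obs(\sigma) = \minus$, agreement gives $\sigma \notin \lang$; applying \Cref{lem:prod_of_projections} to rewrite $\lang = \bigpar_{i} (\projection{\lang}{\ab_i}, \ab_i)$ then yields an index $i$ with $\projection{\sigma}{\ab_i} \notin \projection{\lang}{\ab_i}$. It remains to show $\Obs_{\ab_i}(\projection{\sigma}{\ab_i}) = \minus$ for this $i$, which I would do by contradiction: positivity would supply a witness $\tau \in \Dom(\Obs)$ with $\projection{\tau}{\ab_i} = \projection{\sigma}{\ab_i}$ and $\Obs(\tau) = \plus$, whence $\tau \in \lang$ by agreement and $\projection{\sigma}{\ab_i} = \projection{\tau}{\ab_i} \in \projection{\lang}{\ab_i}$, contradicting the choice of $i$.

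For the converse I would construct the witness explicitly. Setting $\lang_i = \{\sigma' \in \Dom(\Obs_{\ab_i}) \mid \Obs_{\ab_i}(\sigma') = \plus\}$ for each $i$ and $\lang = \bigpar_{i} (\lang_i, \ab_i)$ makes $\lang$ a product language over $\Omega$ by definition. To check $\lang \models \Obs$, I would unfold, for $\sigma \in \Dom(\Obs)$, the chain $\sigma \in \lang \Leftrightarrow \forall i.\, \projection{\sigma}{\ab_i} \in \lang_i \Leftrightarrow \forall i.\, \Obs_{\ab_i}(\projection{\sigma}{\ab_i}) = \plus \Leftrightarrow \Obs(\sigma) = \plus$, where the middle step relies on $\projection{\sigma}{\ab_i} \in \Dom(\Obs_{\ab_i})$ (so that membership in $\lang_i$ is governed by $\Obs_{\ab_i}$) and the last step is the hypothesised equation together with the definition of $\bigwedge$.

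I expect the main obstacle to be the careful handling of partiality in the forward direction: $\Obs_{\ab_i}$ aggregates only over the finite observed domain via a disjunction, so positivity of $\Obs_{\ab_i}(\projection{\sigma}{\ab_i})$ is \emph{not} equivalent to $\projection{\sigma}{\ab_i} \in \projection{\lang}{\ab_i}$ --- only one implication holds. The argument must use precisely the implication that does hold, namely that a positive local observation furnishes a genuine observed witness $\tau$ that agreement lifts into $\lang$, and must avoid the tempting but false converse.
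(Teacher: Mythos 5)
Your proposal is correct and follows essentially the same route as the paper's proof: the forward direction rests on \Cref{lem:prod_of_projections} together with the key observation that a positive local observation furnishes a genuine witness $\tau \in \Dom(\Obs)$ that agreement lifts into $\lang$, and the converse uses the identical explicit construction $\lang_i = \{\sigma' \in \Dom(\Obs_{\ab_i}) \mid \Obs_{\ab_i}(\sigma') = \plus\}$ with the same chain of equivalences. Your only deviations --- phrasing the hard half of the forward direction contrapositively rather than directly, and factoring out the unconditional implication $\Obs(\sigma) = \plus \Rightarrow \bigwedge_{\ab_i \in \Omega}\Obs_{\ab_i}(\projection{\sigma}{\ab_i}) = \plus$ up front --- are cosmetic reorganizations of the same argument.
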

\begin{proof}
    We prove each implication separately. 
    \begin{description}
        \item[$\implies$] 
    Suppose \(\Omega\models\Obs\). By definition this means that 
    there is \(\Omega\models \lang\) such that \(\lang\models \Obs\).
    Since $\lang$ is a product language, \Cref{lem:prod_of_projections} yields that 
    \begin{equation}
        \lang = \{ \sigma \in \ab^\star \mid \forall 1 \leq i \leq n \ldotp \projection{\sigma}{\ab_i} \in \lang_i \}
        \label{eq:prod_lang}
    \end{equation}
    where $\lang_i = \{ \projection{\sigma}{\ab_i} \mid \sigma \in \lang\}$ for all $i$.

    Now for any \(\sigma\in\Dom(\Obs)\), we wish to show that  
    \[\Obs(\sigma)=+\text{ iff }\bigwedge_{\ab_i \in \Omega} \Obs_{\ab_i}(\projection{\sigma}{\ab_i}) = + .\] We separate the two implications.
    If \(\Obs(\sigma)=+\), then \(\Obs_{\ab_i}(\projection{\sigma}{\ab_i})=+\) for all $i$ by definition of local observation functions and we have our result.
    On the other hand,
    if 
    \(\bigwedge_{\ab_i \in \Omega} \Obs_{\ab_i}(\projection{\sigma}{\ab_i}) = +\), 
    it means that for any $i$, \(\Obs_{\ab_i}(\projection{\sigma}{\ab_i}) = +\).
    By definition of local observations, 
    for all $i$ there is \(\sigma'_i\in\Dom(\Obs)\) such that \(\Obs(\sigma'_i)=+\) and \(\projection{\sigma}{{\ab_i}}=\projection{{\sigma'_i}}{{\ab_i}}\); 
    since \(\lang\models\Obs\), we have \(\sigma'_i\in \lang\). This implies that
    \[
        \forall i \ldotp \exists \sigma'_i \in \lang \land \projection{\sigma}{\ab_i} = \projection{{\sigma'_i}}{\ab_i}.
    \]
    Hence for all $i$, \(\projection{\sigma}{\ab_i}\in \lang_i\) by definition of \(\lang_i\). 
    It then follows from~\eqref{eq:prod_lang} that \(\sigma\in \lang\). As \(\lang\models\Obs\), we have \(\Obs(\sigma)=+\).

    \item[$\impliedby$] Suppose that \(\forall \sigma\in\Dom(\Obs),\ \Obs(\sigma)=\bigwedge_{\ab_i\in\Omega}\Obs_{\ab_i}(\projection{\sigma}{\ab_i})\). 
    For all $i$, let 
    \[
        \lang_i = \{ \sigma \in \Dom(\Obs_{\ab_i}) \mid \Obs_{\ab_i}(\sigma) = +\} \enspace .
    \]
    Now, define
    \[
        \lang=\{\sigma \in \ab^\star \mid \forall 1\leq i \leq n,\ \projection{\sigma}{\ab_i}\in \lang_i\}.
    \]
    We show that \(\lang\models\Obs\). Let \(\sigma \in \Dom(\Obs)\). Then
    \[
    \begin{aligned}
        \sigma \in \lang
        &\iff \forall 1 \leq i \leq n,\ \projection{\sigma}{\ab_i}\in \lang_i && (\text{by definition of $\lang$}) \\
        &\iff \forall 1 \leq i \leq n,\ \Obs_{\ab_i}(\projection{\sigma}{\ab_i}) = + && (\text{by definition of $\lang_i$})\\
        &\iff \Obs(\sigma) = + \enspace , && (\text{by assumption})
    \end{aligned}
    \]
    Hence \(\lang\models\Obs\), and therefore \(\Omega\models\Obs\).
    \qedhere
    \end{description}
\end{proof}
\begin{exa}
    \label{ex:modelObs}
    Following from~\Cref{,ex:not_model}, 
    consider the following observation function based on $\lang(\lts_1\parcomp\lts_2)$ from~\Cref{ex:running}: 
    \[\Obs: \epsilon\mapsto \plus;\ a\mapsto \plus;\ ab\mapsto \plus;\ b\mapsto \minus; c\mapsto \plus;\ d\mapsto \minus.\]
    Using the above proposition, we can verify that \(\Omega_{singles}=\{\{a\},\{b\},\{c\},\{d\}\}\not\models\Obs\).
    This is because \(\Obs(b)=\minus\), whereas for all $\ab_i \in \Omega_{singles}$, \(\Obs_{\ab_i}(\projection{b}{\ab_i})=\plus\) since \(\projection{ab}{\{b\}}=b\) causes \(\Obs_{\{b\}}(b)=\plus\) and for other alphabets \(\projection{b}{\ab_i}=\epsilon\) and \(\Obs_{\ab_i}(\epsilon)=+\). 
    In contrast, \(\Omega_{\{a,b\}}=\{\{a,b\},\{c\},\{d\}\}\models\Obs\) since the alphabet $\{a, b\}$ allows for distinguishing observations $b$ and $ab$.
\end{exa}
In our algorithm, this check on local and global observation functions is used to trigger an update of the current distribution exactly when necessary.

\subsection{Overview of the Approach}
\label{subsec:overview}
We present a bird's eye view of our algorithm to compositionally learn an unknown system $\SUL = \bigpar_{i=1}^{N_{\SUL}} M_i$ given only:
\begin{enumerate}
    \item[(a)] a Teacher for the whole \(\SUL\); 
    \item[(b)] knowledge of the global alphabet $\ab_{\SUL}$. 
\end{enumerate}
Crucially, we do not assume any prior knowledge of the number of components $N_{\SUL}$ and their respective alphabets. 
The architecture, depicted in \Cref{fig:comp-learn-arch}, consists of $n$ local learners $L_i$, one per component $\ab_i \in \Omega$, interacting with a standard Teacher for the global $\SUL$. This interaction is mediated by two agents: the \emph{Orchestrator}, which manages the distribution and coordinates the local learners, and the \emph{Adapter}, which translates between local and global queries.
\begin{figure}[t]
    \centering
    \begin{tikzpicture}[->,>=stealth',shorten >=0pt,auto,node distance=2.0cm,semithick]
        \node[draw,minimum height=3.6cm,minimum width=2.2cm,align=center,
        label=above:Teacher] (mat) at (8,0) {$\SUL$};
        \node[draw,minimum height=3.6cm,minimum width=1cm,align=center]
        (A) at (1.1,0) {A\\d\\a\\p\\t\\e\\r};
        \path
        ($(A.east)+(0,1)$) edge node[pos=0.6] {$\sigma \stackrel{?}{\in} \lang(\SUL)$} ($(mat.west)+(0,1)$)
        ($(A.east)+(0,-0.8)$) edge node[pos=0.6] {$\lang(\bigparallel_{i=1}^{n}\CH_i) \stackrel{?}{=} \lang(\SUL)$} ($(mat.west)+(0,-0.8)$)
        ($(mat.west)+(0,-1.2)$) edge node[pos=0.4] {CEX $\sigma$} ($(A.east)+(0,-1.2)$);
        
        \begin{scope}
        \node[align=center] (distrib) at (-3.6,3) {\(\Omega=\)\\\(\{\ab_1,\dots\ab_n\}\)};
        \node[draw,minimum size=1.4cm] (L1) at (-3.6,1.7) {$L_1$};
        \node[scale=2]                      at (-3.6,0.3) {\vdots};
        \node[draw,minimum size=1.4cm] (Ln) at (-3.6,-1.7) {$L_n$};
        \end{scope}
        
        \draw[dashed] ($(distrib.north west) + (-0.2,0.2)$) rectangle ($(Ln.south -| A.east) + (0.7,-0.3)$);
        \node at ($(distrib.north -| A.west) - (0,0.1)$) {Orchestrator};
        \node[draw,fill=white,inner sep=2.5pt] at ($(A.east) + (0.7,1)$) {};
        \node[draw,fill=white,inner sep=2.5pt] at ($(A.east) + (0.7,-0.8)$) {};
        \node[draw,fill=white,inner sep=2.5pt] at ($(A.east) + (0.7,-1.2)$) {};
        
        \path[sloped,font=\small]
        ($(L1.east)+(0,0.5)$) edge node {$\sigma_1 \stackrel{?}{\in} \projection{\lang(\SUL)}{\ab_1}$} ($(A.west)+(0,1.5)$)
        ($(L1.east)+(0,-0.4)$) edge node {$\lang(\CH_1) \stackrel{?}{=} \projection{\lang(\SUL)}{\ab_1}$} ($(A.west)+(0,0.6)$)
        ($(A.west)+(0,0.4)$) edge['] node {CEX $\sigma_1$} ($(L1.east)+(0,-0.6)$);
        
        \path[sloped,font=\small]
        ($(Ln.east)+(0,0.5)$) edge node {$\sigma_n \stackrel{?}{\in} \projection{\lang(\SUL)}{\ab_n}$} ($(A.west)+(0,-0.5)$)
        ($(Ln.east)+(0,-0.4)$) edge node {$\lang(\CH_n) \stackrel{?}{=} \projection{\lang(\SUL)}{\ab_n}$} ($(A.west)+(0,-1.4)$)
        ($(A.west)+(0,-1.6)$) edge['] node {CEX $\sigma_n$} ($(Ln.east)+(0,-0.6)$);
    \end{tikzpicture}
    \caption{
        Architecture for the compositional learning algorithm.
        The Orchestrator instantiates the Adapter and Learners according to the current distribution $\Omega$.
        It can observe queries posed by the Adapter as well as the Teacher's answer.
    }
    \label{fig:comp-learn-arch}
\end{figure}

\subsubsection{Local Learners and The Adapter}\label{subsubs:adapter}
The Adapter translates membership queries posed by each local learner $L_i$ into membership queries for the global Teacher, and returns the answers back to $L_i$. Once all learners have produced a hypothesis $\CH_i$, the Adapter combines them into a global equivalence query about $\bigpar_{i=1}^{n}\CH_i$ and submits it to the Teacher. A counter-example $\sigma$ returned by the Teacher is translated back into local counter-examples $\sigma_1,\dots,\sigma_n$, which are then forwarded to the relevant learners.

Two challenges arise in this translation. First, when a local membership query contains synchronizing actions, coordination with the other components is required: the Adapter attempts to construct a global query by drawing on the observations collected by the other learners. 
This must be done in a way that is consistent with the current distribution, \ie, if \(\Omega\models \Obs\), then for any global membership query result \((\sigma,b) \in \ab^{\star}_\mathit{SUL} \times \{-,+\}\), we require \(\Omega\models \Obs\cup\{(\sigma,b)\}\). When no such query can be generated, the Adapter returns an ``unknown'' answer to the relevant learner.
Second, local observations may be misclassified. By definition of local observation functions, it may happen that initially $\Obs_{\ab_i}(\sigma) = \minus$, but later this becomes $\Obs_{\ab_i}(\sigma) = \plus$ as the global observation function is extended through counter-examples. 

\Cref{sec:local_learning} develops the Adapter and the \Lstar extensions required to make progress in the presence of unknown answers and observations that may need to be corrected.

\subsubsection{The Orchestrator}
The Orchestrator maintains the current distribution $\Omega$ of $\ab_{\SUL}$, starting from $\Omega_{\mathit{singles}} = \{\{a\} \mid a \in \ab_{\SUL}\}$. It instantiates one local learner $L_i$ per $\ab_i \in \Omega$, each targeting the local language $\projection{\lang(\SUL)}{\ab_i}$ induced by the current distribution, and a single Adapter to coordinate them. The Orchestrator observes all queries and responses exchanged between the Adapter and the Teacher, which allows it to classify counter-examples $(\sigma,b) \in \ab_{\SUL}^\star \times \{+,-\}$ as \emph{global} if $\Omega \not\models \Obs \cup \{(\sigma,b)\}$, and \emph{local} otherwise. Global counter-examples trigger a refinement of $\Omega$, upon which the Orchestrator reinstantiates the learners and the Adapter over the updated distribution. Local counter-examples are forwarded to the Adapter to update the relevant learners.

The main challenge is to refine $\Omega$ in a way that is both consistent with the new observations and guarantees progress towards a distribution that models $\lang(\SUL)$. \Cref{sec:CED} formalises this problem and shows that a suitable refinement can always be found in finitely many steps. The full algorithm, integrating the Orchestrator with the Adapter and local learners, is presented in \Cref{sec:algo}.

\section{Local Learners and The Adapter}
\label{sec:local_learning}

In this section, we focus on the Adapter and local learners, setting aside distribution refinement. To this end, we assume that a correct distribution \(\Omega=\{\ab_1,\dots,\ab_n\}\models\lang(\SUL)\) is given, \ie, that the language of $\SUL$ decomposes along \(\Omega\) as \(\lang(\SUL)=\bigpar_{\ab_i\in\Omega} (\projection{\lang(\SUL)}{\ab_i},\ab_i)\); the Orchestrator then simply instantiates the Adapter and learners once and plays no further role. Moreover, we assume that the Teacher always replies to a negative equivalence query with the \emph{shortest} counter-example $\sigma$, \ie, such that no trace shorter than $\sigma$ is a counter-example.\footnote{This assumption can be satisfied in practice by using a lexicographical ordering on the conformance test suite the Teacher generates to decide equivalence.}

We first discuss the implementation of the Adapter and show its limitations (\Cref{subsec:adapter}), then propose extensions to \Lstar to address them (\Cref{subsec:lstar_extensions}). We conclude the section with correctness results (\Cref{subsec:correctness}) and a discussion of optimisations (\Cref{subsec:optimisations}).

\subsection{Query Adapter}
\label{subsec:adapter}
The Adapter answers queries on each of the languages $\projection{\lang(\SUL)}{\ab_i}$, based on information obtained from queries on the $\SUL$. However, the application of the parallel operator causes loss of information, as the following examples illustrate.
We will use the LTSs in~\Cref{fig:running_ex} as a running example throughout this section.

\begin{figure}[t]
	\centering
	\begin{tikzpicture}[->,>=stealth',auto,semithick]
		\def\d{1.2}
		\tikzstyle{state} = [draw,circle,inner sep=3pt];
		
		\begin{scope}[yshift=-0.5*\d cm]
			\node             at (-1.2,0) {$T_1 =$};
			\node[state] (s0) at (0,0) {};
			\node[state] (s1) at (\d,0) {};
			\node[state] (s2) at (2*\d,0) {};
			
			\path
			(-0.5,0) edge (s0)
			(s0) edge[bend left=11] node {$c$} (s1)
			(s1) edge[bend left=11] node {$a$} (s0)
			(s1) edge               node {$c$} (s2);
		\end{scope}
		
		\begin{scope}[xshift=4.5cm,yshift=-0.5*\d cm]
			\node             at (-1.0,0) {$T_2 =$};
			\node[state] (s0) at (0,0.5*\d) {};
			\node[state] (s1) at (0,-0.5*\d) {};
			
			\path
			(-0.5,0.5*\d) edge (s0)
			(s0) edge[loop right] node {$c$} (s0)
			(s0) edge             node {$b$} (s1);
		\end{scope}
		
		\begin{scope}[xshift=9cm]
			\node             at (-1.0,-0.5*\d) {$T =$};
			\node[state] (s0) at (0,0) {};
			\node[state] (s1) at (\d,0) {};
			\node[state] (s2) at (2*\d,0) {};
			\node[state] (t0) at (0,-\d) {};
			\node[state] (t1) at (\d,-\d) {};
			\node[state] (t2) at (2*\d,-\d) {};
			
			\path
			(-0.5,0) edge (s0)
			(s0) edge[bend left=11] node {$c$} (s1)
			(s1) edge[bend left=11] node {$a$} (s0)
			(s1) edge               node {$c$} (s2)
			(t1) edge               node {$a$} (t0)
			(s0) edge               node {$b$} (t0)
			(s1) edge               node {$b$} (t1)
			(s2) edge               node {$b$} (t2);
		\end{scope}
	\end{tikzpicture}
	\caption{Running example consisting of two LTSs $T_1$ and $T_2$, their parallel composition $T$.}
	\label{fig:running_ex}
\end{figure}
\begin{exa}
	Consider the LTSs $T_1$, $T_2$ and $T = T_1 \parallel T_2$ depicted in \Cref{fig:running_ex}, with alphabets $\{a,c\}$, $\{b,c\}$ and $\{a,b,c\}$ respectively, and the distribution \(\Omega=\{\{a,c\},\{b,c\}\}\).
	
	Suppose the Teacher answers $bc \notin \lang(T)$ to a membership query. By definition of parallel composition, $bc \notin \lang(T)$ if and only if $\projection{bc}{\{a,c\}} = c \notin \lang(T_1)$ or $\projection{bc}{\{b,c\}} = bc \notin \lang(T_2)$, so this single observation does not allow us to determine which case holds; here only the latter does. Similarly, if a composite hypothesis $\CH = \CH_1 \parallel \CH_2$ is rejected with counter-example $ccc \notin \lang(T)$, we cannot determine whether $ccc \notin \lang(T_1)$, $ccc \notin \lang(T_2)$, or both; here both are true.
\end{exa}
Generally, given negative information on the composite level ($\sigma \notin \lang(\SUL)$), it is hard to infer information for a single component, whereas positive information ($\sigma \in \lang(\SUL)$) easily translates back to the level of individual components.

We thus relax the guarantees on the answers given by the Adapter in the following way:
\begin{enumerate}
	\item Not all membership queries can be answered, the Adapter may return the answer `unknown'.
	\item An equivalence query for component $i$ can be answered with a \emph{spurious} counter-example $\sigma_i \in \lang(\CH_i) \cap \projection{\lang(\SUL)}{\ab_i}$, \ie, a trace that belongs to $\projection{\lang(\SUL)}{\ab_i}$ but is returned as a negative counter-example.
\end{enumerate}
The procedures that implement the Adapter are stated in~\cref{alg:comp_i_queries_basic}.
For each $1 \leq i \leq n$, we have one instance of each of the functions $\Member_i$ and $\Equiv_i$, used by the $i$th learner to pose its queries.
Here, we assume that for each component $i$, a copy of the latest hypothesis $\CH_i$ is stored, as well as a representation of \(\Obs_{\ab_i}^+=\{\sigma_i\mid(\sigma_i,+)\in\Obs_{\ab_i}\}\), the traces that we know to be in $\projection{\lang(\SUL)}{\ab_i}$, populated
incrementally from positive answers to membership queries (line~\ref{line:adap_expandT_mem}) and positive counter-examples (line~\ref{line:adap_expandT_equiv}).
Membership and equivalence queries on $\SUL$ will be forwarded to the Teacher via the functions $\Member(\sigma)$ and $\Equiv(\CH)$, respectively.

\subsubsection{Membership Queries}
A membership query $\sigma \in \projection{\lang(\SUL)}{\ab_i}$ can be answered directly by posing $\sigma \in \lang(\SUL)$ to the Teacher if $\sigma$ contains only actions local to $\ab_i$.
However, in the case where $\sigma$ contains synchronizing actions, cooperation from other components is required.
To this end, the sets $\Obs_{\ab_j}^+$ are composed into \(\Obs_{\cup_{j\neq i}\ab_j}^+=\bigpar_{j \neq i}(\Obs_{\ab_j}^+,\ab_j) \subseteq \projection{\lang(\SUL)}{\bigcup_{j\neq i}\ab_j}\), an under-approximation of the behavior of other components, possibly including some synchronizing actions they can perform.

From this, we extract the subset $\Pi_i$ (line~\ref{line:adap_Pi}) of traces whose synchronizing actions with component $i$ match those required by $\sigma$: 
\[
\Pi_i=\{\sigma'\in\Obs_{\cup_{j\neq i}\ab_j}^+\mid\projection{\sigma'}{\ab_i}=\projection{\sigma}{\cup_{j\neq i}\ab_j}\} \enspace .
\]
If \(\Pi_i\) is empty, we do not have sufficient information on how other components can cooperate, and the Adapter returns `unknown' (line~\ref{line:adap_return_qm}) to the membership query. 
Else, the Adapter takes \(\sigma'\in\Pi_i\), constructs an interleaving \(\sigma_{int}\in(\{\sigma\},\ab_i)\bigpar (\{\sigma'\},\cup_{j\neq i}\ab_j)\) and forwards it to the Teacher (line~\ref{line:adap_return_member}).

In particular, if $\sigma$ contains no synchronizing actions, then $\Pi_i$ is non-empty, as it contains at least $\epsilon$.

\begin{exa}
	Refer to the running example in Figure~\ref{fig:running_ex}.
	Suppose that the current knowledge about $\lts_2$ is $\Obs_{\ab_2}^+ = \{\epsilon,b\}$.
	When $\Member_1(c)$ is called, $\Pi_i = \emptyset$, because there is no trace $\sigma' \in \Obs_{\ab_2}^+$ that is equal to $c$ when restricted to $\{a,c\}$, therefore $\mathit{unknown}$ is returned.
	Intuitively, since the second learner has not yet discovered that $c$ or $bc$ (or some other trace containing a $c$) is in its language, the Adapter is unable to turn the query $c$ on $\lts_1$ into a query for the composite system.
	\qed
\end{exa}
\begin{exa}
	Suppose now that $cac \in \Obs_{\ab_1}^+$, \ie, we already learned that $cac \in \lang(\lts_1)$.
	When posing the membership query $cbc \in \lang(\lts_2)$, the Adapter finds that $cac$ and $cbc$ contain the same synchronizing actions (\viz $cc$) and constructs an interleaving, for example $cabc$.
	The Teacher answers negatively to the query $cabc \in \lang(\lts)$, and thus we learn that $cbc \notin \lang(\lts_2)$.
	\qed
\end{exa}

\subsubsection{Equivalence Queries}
For equivalence queries, the Adapter offers functions $\Equiv_i$.
To construct a corresponding query on the composite level, we first need to gather a hypothesis $\CH_i$ for each $i$.
Thus, we synchronize all learners in a barrier (line~\ref{line:adap_barrier}), after which a composite hypothesis can be constructed and forwarded to the Teacher (lines~\ref{line:adap_construct_H}, \ref{line:adap_equiv_query}).
An affirmative answer can be returned directly, while in the negative case we investigate the returned counter-example $\sigma$.
Since we assumed that $\sigma$ is shortest, 
$\CH$ and $\projection{\lang(\SUL)}{\ab_i}$ agree on all $\sigma' \in \pref(\sigma) \setminus \{\sigma\}$.
Thus, $\sigma$ only concerns $\CH_i$ if the last action in $\sigma$ is contained in $\ab_i$.
Furthermore, we need to check whether $\CH$ and $\CH_i$ agree on $\sigma$: it can happen that $\projection{\sigma}{\ab_i} \in \lang(\CH_i)$ but $\sigma \notin \lang(\CH)$ due to other hypotheses not providing the necessary communication opportunities.
If both conditions are satisfied (line~\ref{line:adap_check_appl}), we return the $\projection{\sigma}{\ab_i}$ (line~\ref{line:adap_return_no}).
Otherwise, we cannot conclude anything about $\CH_i$ at this moment and we iterate (line~\ref{line:adap_repeat}).
In that case, we effectively wait for other hypotheses $\CH_j$, with $j \neq i$, to be updated before trying again.
A termination argument is provided later in this section.

\begin{algorithm}[t]
	\caption{Membership and equivalence query procedures for component $i$.}
	\label{alg:comp_i_queries_basic}

	\SetVlineSkip{2pt}

	\SetKwProg{Fn}{Function}{}{end}

	\KwIn{A distribution $\Omega=\{\ab_1,\dots,\ab_n\}$.}
	\KwData{for each $i$, the latest hypothesis $\CH_i$ and $\Obs^+_{\ab_i}$.
	}

	\Fn{$\Member_i(\sigma)$}{
		$\Pi_i := \{ \sigma' \in \Obs_{\cup_{j\neq i}\ab_j}^+ \mid \projection{\sigma'}{\ab_i} = \projection{\sigma}{\cup_{j \neq i} \ab_j} \}$ \; \label{line:adap_Pi}
		\uIf{$\Pi_i \neq \emptyset$}{
			$\mathit{answer} := \Member(\sigma_\mathit{int})$ for some $\sigma' \in \Pi_i$ and $\sigma_\mathit{int} \in (\{\sigma\},\ab_i)\bigpar (\{\sigma'\},\cup_{j\neq i}\ab_j)$  \tcc*{construct interleaving} \label{line:adap_return_member}
			\lIf{$\mathit{answer} = yes$}{$\Obs^+_{\ab_i} := \Obs^+_{\ab_i} \cup \{\sigma\}$\label{line:adap_expandT_mem}}
			\Return{answer}
		}
		\lElse{\Return{unknown}\label{line:adap_return_qm}}
	}
	\Fn{$\Equiv_i(\CH')$}{
		$\CH_i := \CH'$\;
		\While{true}{\label{line:adap_repeat}
			barrier$(n)$ \tcc*{wait until this point is reached for every i} \label{line:adap_barrier}%
			construct $\CH = \bigpar_i \CH_i$\; \label{line:adap_construct_H}
			\Switch{$\Equiv(\CH)$}{ \label{line:adap_equiv_query}
				\lCase{yes}{\Return{yes}}
				\Case{$(no, \sigma)$}{
					\lIf{$\sigma \notin \lang(\CH)$}{$\Obs^+_{\ab_i} := \Obs^+_{\ab_i} \cup \{\projection{\sigma}{\ab_i}\}$\label{line:adap_expandT_equiv}}
					\If{$a \in \ab_i$, \textnormal{where} $\sigma = \sigma'a$, \textnormal{and} $\sigma \in \lang(\CH) \Leftrightarrow \projection{\sigma}{\ab_i} \in \lang(\CH_i)$}{\label{line:adap_check_appl}
						\Return{$(no,\projection{\sigma}{\ab_i})$} \label{line:adap_return_no}
					}
				}
			}
		}
	}
\end{algorithm}
\begin{exa}
	\label{ex:pos_cex}
	Again considering our running example (Figure~\ref{fig:running_ex}), suppose the two learners call in parallel the functions $\Equiv_1(\CH_1)$ and $\Equiv_2(\CH_2)$.
	The provided hypotheses and their parallel composition are as follows:
	\[
	\CH_1 =
	\tikz[baseline=-0.65ex, ->,>=stealth',auto,semithick, initial text={}, initial distance={10pt}, every state/.style={inner sep=1pt, minimum size=2ex}]{
		\node[state, initial] (s0) at (0,0) {};
		\node[state]          (s1) at (1,0) {};
		\path
		(s0) edge[bend left=12] node {$c$} (s1)
		(s1) edge[bend left=12] node {$a$} (s0);
	}
	\qquad
	\CH_2 =
	\tikz[baseline=-0.65ex, ->,>=stealth',auto,semithick, initial text={}, initial distance={10pt}, every state/.style={inner sep=1pt, minimum size=2ex}]{
		\node[state, initial] (s0) at (0,0) {};
		\path
		(s0) edge[loop right] node {$b$ $c$} (s0);
	}
	\qquad
	\CH_1 \parallel \CH_2=
	\tikz[baseline=-0.65ex, ->,>=stealth',auto,semithick, initial text={}, initial distance={10pt}, every state/.style={inner sep=1pt, minimum size=2ex}]{
		\node[state, initial] (s0) at (0,0) {};
		\node[state]          (s1) at (1,0) {};
		\path
		(s0) edge[loop above]   node {$b$} (s0)
		(s0) edge[bend left=12] node {$c$} (s1)
		(s1) edge[bend left=12] node {$a$} (s0)
		(s1) edge[loop above]   node {$b$} (s1);
	}
	\]
	The Adapter forwards $\CH = \CH_1 \parallel \CH_2$ to the Teacher, which returns the counter-example $cc$.
	The last symbol, $c$, occurs in both alphabets, but $cc \in \lang(\CH)$ does not hold and $\projection{cc}{\ab_2} \in \lang(\CH_2)$ does, so only $\Equiv_1(\CH_1)$ returns $(\mathit{no},cc)$.
	The call to $\Equiv_2(\CH_2)$ hangs in the while loop of line~\ref{line:adap_repeat} until $\Equiv_1$ is invoked with a different hypothesis.
\end{exa}
\begin{exa}
	\label{ex:spurious_cex}
	Suppose now that the hypotheses and their composition are:
	\[
	\CH_1 =
	\tikz[baseline=-0.65ex, ->,>=stealth',auto,semithick, initial text={}, initial distance={10pt}, every state/.style={inner sep=1pt, minimum size=2ex}]{
		\node[state, initial] (s0) at (0,0) {};
		\node[state]          (s1) at (1,0) {};
		\path
		(s0) edge[bend left=12] node {$c$} (s1)
		(s1) edge[bend left=12] node {$a$ $c$} (s0);
	}
	\qquad
	\CH_2 =
	\tikz[baseline=-0.65ex, ->,>=stealth',auto,semithick, initial text={}, initial distance={10pt}, every state/.style={inner sep=1pt, minimum size=2ex}]{
		\node[state, initial] (s0) at (0,0) {};
		\node[state]          (s1) at (1,0) {};
		\path
		(s0) edge[loop above]   node {$c$} (s0)
		(s0) edge               node {$b$} (s1);
	}
	\qquad
	\CH_1 \parallel \CH_2=
	\tikz[baseline=-0.5cm, ->,>=stealth',auto,semithick, initial text={}, initial distance={10pt}, every state/.style={inner sep=1pt, minimum size=2ex}]{
		\node[state, initial] (s0) at (0,0) {};
		\node[state]          (s1) at (1,0) {};
		\node[state]          (s2) at (0,-1) {};
		\node[state]          (s3) at (1,-1) {};
		\path
		(s0) edge[bend left=12] node {$c$} (s1)
		(s1) edge[bend left=12] node {$a$ $c$} (s0)
		(s0) edge               node {$b$} (s2)
		(s1) edge               node {$b$} (s3)
		(s3) edge               node {$a$} (s2)
		;
	}
	\]
	When we submit $\Equiv(\CH_1 \parallel \CH_2)$, we may receive the negative counter-example $ccc$, which is a shortest counter-example.
	This counter-example does not contain any information to suggest that it only applies to $\CH_1$.
	It is a spurious counter-example for $\CH_2$, since that \emph{should} contain the trace $ccc$.
\end{exa}

\subsection{\Lstar extensions}
\label{subsec:lstar_extensions}
As explained in the previous section, the capabilities of our Adapter are limited compared to an ordinary Teacher.
We thus extend \Lstar to deal with the answer `unknown' to membership queries and to deal with spurious counter-examples.

\subsubsection{Answer `unknown'.}
The setting of receiving incomplete information through membership queries first occurred  in the work of Grinchtein, Leucker and Pieterman~\cite{GrinchteinLP06}, and is also discussed by Leucker and Neider~\cite{LeuckerN12}.
Here we briefly recall the original ideas~\cite{GrinchteinLP06}.
To deal with partial information from membership queries, the concept of an observation table is generalized such that the function $T : (S \cup S \cdot \ab) \cdot E \to \{\plus,\minus\}$ is a partial function, that is, for some cells we have no information.
Based on $T$, we now define the function $\row : S \cup S \cdot \ab \to E \to \{\plus,\minus,\qm\}$ to fill the cells of the table: $\row_T(s)(e) = T(se)$ if $T(se)$ is defined and $\qm$ otherwise.
We refer to `\qm' as a \emph{wildcard}; its actual value is currently unknown and might be learned at a later time or never at all.
To deal with the uncertain nature of wildcards, we introduce a relation $\weakeq$ on rows, where $\row(s_1) \weakeq \row(s_2)$ iff for every $e \in E$, $\row(s_1)(e) \neq \row(s_2)(e)$ implies that $\row(s_1)(e) = \qm$ or $\row(s_2)(e) = \qm$.
Note that $\weakeq$ is not an equivalence relation since it is not transitive.
Closedness and consistency are defined as before, but now use the new relation $\weakeq$.
We say an LTS $M$ is \emph{consistent} with $T$ iff for all $s \in \ab^\star$ such that $T(s)$ is defined, we have $T(s) = \plus$ iff $s \in \lang(M)$.

As discussed earlier, Angluin's original \Lstar algorithm relies on the fact that, for a closed and consistent table, there is a unique minimal DFA (or, in our case, LTS) that is consistent with $T$.
However, the occurrence of wildcards in the observation table may allow multiple minimal LTSs that are consistent with $T$.
Such a minimal consistent LTS can be obtained with a SAT solver, as described in~\cite{HeuleV10}.

Similar to Angluin's original algorithm, this extension comes with some correctness theorems.
First of all, it terminates outputting the minimal LTS for the target language.
Furthermore, each hypothesis is consistent with all membership queries and counter-examples that were provided so far.
Lastly, each subsequent hypothesis has at least as many states as the previous one, but never more than the minimal LTS for the target language~\cite{LeuckerN12}.

\subsubsection{Spurious Counter-Examples.}
We now extend this algorithm with the ability to deal with spurious counter-examples.
Any \emph{negative} counter-example $\sigma \in \lang(\CH_i)$ might be spurious, \ie, it is actually the case that $\sigma \in \projection{\lang(\SUL)}{\ab_i}$: this could happen when a positive global word projecting on \(\sigma\) exists but has yet to be observed.
Since \Lstar excludes $\sigma$ from the language of all subsequent hypotheses, we might later get the same trace $\sigma$, but now as a \emph{positive} counter-example.
In that case, the initial negative judgement from the equivalence Teacher was spurious.

\begin{algorithm}[t]
	\caption{Learning with wildcards and backtracking.}
	\label{alg:lstar_backtracking}

	Set $BT$ to $\emptyset$\; \label{line:lstarqb_bt_init}
	Initialize $S$ and $E$ to $\{\epsilon\}$\;
	Extend $T$ to $S \cup S \cdot \ab_i$\ by calling $\Member_i$\;
	\Repeat{Teacher replies \emph{yes} to conjecture $\CH$}{
		\While{$(S,E,T)$ is not closed and consistent}{\label{line:lstarqb_cc1}
			\If{$(S,E,T)$ is not consistent}{
				Find $s_1, s_2 \in S$, $a \in \ab_i$, $e \in E$ such that $\mathit{row}_{T}(s_1) \weakeq \mathit{row}_{T}(s_2)$ and $T(s_1 \cdot a \cdot e) \not\weakeq T(s_2 \cdot a \cdot e)$\;
				Add $a \cdot e$ to $E$ and extend $T$ by calling $\Member_i$\;
			}
			\If{$(S,E,T)$ is not closed}{
				Find $s_1 \in S$, $a \in \ab_i$ such that $\mathit{row}_{T}(s_1 \cdot a) \not \weakeq \mathit{row}_{T}(s)$ for all $s \in S$\;
				Add $s_1 \cdot a$ to $S$ and extend $T$ by calling $\Member_i$\;\label{line:lstarqb_cc2}
			}
		}
		Call $\Equiv_i(\CH)$ for some minimal LTS $\CH$ consistent with $T$\; \label{line:lstarqb_construct_H}
		\If{Teacher replies with counter-example $\sigma$}{
			\If(\tcc*[f]{$\sigma$ corrects an earlier spurious CEX}){$T(\sigma) = \minus$}{
				$(S,E,T) := BT(\sigma)$\; \label{line:lstarqb_backtrack1}
			}
			\ElseIf(\tcc*[f]{$\sigma$ might be spurious}){$\sigma \in \lang(\CH)$}{
				$BT(\sigma) := (S,E,T)$\; \label{line:lstarqb_backtrack2}
			}
			Add $\sigma$ and all its prefixes to $S$ and extend $T$ by calling $\Member_i$\; \label{line:lstarqb_add_cex}
		}
	}
	\Return{$\CH$}\;
\end{algorithm}

One possible way of dealing with spurious counter-examples, is adding to $\Lstar$ the ability to \emph{overwrite} entries in the observation table in case a spurious counter-example is corrected.
However, this may cause the learner to diverge if infinitely many spurious counter-examples are returned.
Therefore, we instead choose to add a backtracking mechanism to ensure our search will converge.
The pseudo code is listed in~\cref{alg:lstar_backtracking}; we refer to this as $\Lstarqb$ (\Lstar with wildcards and backtracking).

We have a mapping $BT$ that stores backtracking points; $BT$ is initialized to the empty mapping (line~\ref{line:lstarqb_bt_init}).
Lines~\ref{line:lstarqb_cc1}-\ref{line:lstarqb_cc2} ensure the observation table is closed and consistent in the same way as \Lstar, but use the relation $\weakeq$ on rows instead.
Next, we construct a minimal hypothesis that is consistent with the observations in $T$ (line~\ref{line:lstarqb_construct_H}).
This hypothesis is posed as an equivalence query.
If the Teacher replies with a counter-example $\sigma$ for which $T(\sigma) = \minus$, then $\sigma$ was a spurious counter-example, so we backtrack and restore the observation table from just before $T(\sigma)$ was introduced (line~\ref{line:lstarqb_backtrack1}).
Otherwise, we store a backtracking point for when $\sigma$ later turns out to be spurious (line~\ref{line:lstarqb_backtrack2}); this is only necessary if $\sigma$ is a negative counter-example.
Note that not all information is lost when backtracking: the set $\Obs_{\ab_i}^+$ stored in the Adapter is unaffected, so some positive traces are carried over after backtracking.
Finally, we incorporate $\sigma$ into the observation table (line~\ref{line:lstarqb_add_cex}).
When the Teacher accepts our hypothesis, we terminate.

We conclude this section with an example that shows how spurious counter-examples can be resolved.

\begin{exa}
	Refer again to the LTSs of our running example in Figure~\ref{fig:running_ex}.
	Consider the situation after proposing the hypotheses of Example~\ref{ex:spurious_cex} and receiving the counter-example $ccc$, which is spurious for the second learner.

	In the next iteration, $\Member_2$ can answer some membership queries, such as $cbc$, necessary to expand the table of the second learner.
	This is enabled by the fact that $\Obs^+_{\ab_1}$ contains $cc$ from the positive counter-example of Example~\ref{ex:pos_cex} (line~\ref{line:adap_Pi} of~\cref{alg:comp_i_queries_basic}).
	The resulting updated hypotheses are as follows.
	\begin{center}
		\begin{tikzpicture}[->,>=stealth',auto,semithick]
			\def\d{1.2}
			\tikzstyle{state} = [draw,circle,inner sep=3pt];

			\begin{scope}
				\node             at (-1.3,0) {$\CH'_1 =$};
				\node[state] (s0) at (0,0) {};
				\node[state] (s1) at (\d,0) {};
				\node[state] (s2) at (2*\d,0) {};

				\path
				(-0.5,0) edge (s0)
				(s0) edge[bend left=11] node {$c$} (s1)
				(s1) edge[bend left=11] node {$a$} (s0)
				(s1) edge               node {$c$} (s2);
			\end{scope}

			\begin{scope}[xshift=6cm]
				\node             at (-1.3,0) {$\CH'_2 =$};
				\node[state] (s0) at (0,0)    {};
				\node[state] (s1) at (\d,0)   {};
				\node[state] (s2) at (2*\d,0) {};
				\node[state] (s3) at (\d,-0.6*\d) {};

				\path
				(-0.5,0) edge (s0)
				(s0) edge             node {$c$} (s1)
				(s1) edge             node {$c$} (s2)
				(s0) edge[',bend right=10]   node {$b$} (s3)
				(s1) edge             node {$b$} (s3)
				(s2) edge[bend left=10] node {$b$} (s3);
			\end{scope}
		\end{tikzpicture}
	\end{center}
	Now the counter-example to composite hypothesis $\CH'_1 \parallel \CH'_2$ is $cacc$.
	The projection on $\ab_2$ is $ccc$, which directly contradicts the counter-example received in the previous iteration.
	This spurious counter-example is thus repaired by backtracking in the second learner.
	The invocation of $\Equiv_1(\CH'_1)$ by the first learner does not return this counter-example, since $\CH'_1 \parallel \CH'_2$ and $\CH'_1$ do not agree on $cacc$, so the check on line~\ref{line:adap_check_appl} of~\cref{alg:comp_i_queries_basic} fails.

	Finally, in the next iteration, the respective hypotheses coincide with $\lts_1$ and $\lts_2$ and both learners terminate.
\end{exa}

\subsection{Correctness}
\label{subsec:correctness}

As a first result, we show that our Adapter provides correct information on each of the components when asking membership queries.
This is required to ensure that information obtained by membership queries does not conflict with counter-examples.
\begin{thm}
	Answers from $\Member_i$ are consistent with $\projection{\lang(\SUL)}{\ab_i}$.
\end{thm}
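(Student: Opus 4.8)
The statement claims that whenever $\Member_i$ returns a definite answer --- the answer $\mathit{unknown}$, carrying no claim, being vacuously consistent --- that answer agrees with membership in $\projection{\lang(\SUL)}{\ab_i}$. The plan is to analyse the two definite branches ($yes$ and $no$) of $\Member_i$ separately, relying on the standing assumption that $\Omega \models \lang(\SUL)$, \ie $\lang(\SUL) = \bigpar_{\ab_k \in \Omega}(\projection{\lang(\SUL)}{\ab_k},\ab_k)$, together with an invariant on the stored positive observations.

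The positive branch is immediate. If $\Member_i(\sigma)$ returns $yes$, the Teacher has confirmed $\sigma_\mathit{int} \in \lang(\SUL)$ for the interleaving $\sigma_\mathit{int}$ built on line~\ref{line:adap_return_member}, which satisfies $\projection{\sigma_\mathit{int}}{\ab_i} = \sigma$. Since projecting a member of a language yields, by definition, a member of its projection, $\sigma = \projection{\sigma_\mathit{int}}{\ab_i} \in \projection{\lang(\SUL)}{\ab_i}$, as required. Notably, this step uses no facts about the sets $\Obs^+_{\ab_j}$.

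For the negative branch I would argue by contradiction. Suppose $\Member_i(\sigma)$ returns $no$, so $\sigma_\mathit{int} \notin \lang(\SUL)$, and suppose for contradiction that $\sigma \in \projection{\lang(\SUL)}{\ab_i}$. By the product decomposition it suffices to derive $\projection{\sigma_\mathit{int}}{\ab_k} \in \projection{\lang(\SUL)}{\ab_k}$ for every $k$, which forces $\sigma_\mathit{int} \in \lang(\SUL)$ and contradicts the answer. For $k = i$ this is exactly the assumption, as $\projection{\sigma_\mathit{int}}{\ab_i} = \sigma$. For $k = j \neq i$, the witness $\sigma' \in \Pi_i$ used on line~\ref{line:adap_return_member} satisfies $\projection{\sigma_\mathit{int}}{\cup_{j'\neq i}\ab_{j'}} = \sigma'$ by construction of the interleaving, so that transitivity of projection (since $\ab_j \subseteq \cup_{j'\neq i}\ab_{j'}$) gives $\projection{\sigma_\mathit{int}}{\ab_j} = \projection{\sigma'}{\ab_j}$; and since $\sigma' \in \Obs^+_{\cup_{j'\neq i}\ab_{j'}} = \bigpar_{j'\neq i}(\Obs^+_{\ab_{j'}},\ab_{j'})$, the definition of parallel composition of languages yields $\projection{\sigma'}{\ab_j} \in \Obs^+_{\ab_j}$. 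The remaining step is to upgrade this to $\projection{\sigma'}{\ab_j} \in \projection{\lang(\SUL)}{\ab_j}$, which is where the invariant $\Obs^+_{\ab_k} \subseteq \projection{\lang(\SUL)}{\ab_k}$ enters.

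I would therefore establish this invariant as the core of the argument, by induction over the updates to the sets $\Obs^+$. There are exactly two update sites: line~\ref{line:adap_expandT_mem} adds $\sigma$ on a $yes$ answer, which lies in $\projection{\lang(\SUL)}{\ab_i}$ precisely by the positive branch above; and line~\ref{line:adap_expandT_equiv} adds $\projection{\sigma}{\ab_i}$ for a counter-example with $\sigma \notin \lang(\CH)$, \ie a positive global counter-example $\sigma \in \lang(\SUL)$, whence $\projection{\sigma}{\ab_i} \in \projection{\lang(\SUL)}{\ab_i}$ again by projection of a member. The main subtlety --- and the point I expect to require the most care --- is the apparent circularity between this invariant and answer correctness: the invariant feeds the negative branch, while positive answers feed the invariant. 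This circularity is benign exactly because the positive branch is proved \emph{without} reference to $\Obs^+$, so the simultaneous induction is well-founded: every new entry of $\Obs^+$ is justified by facts already available, and the invariant is consequently in force at the moment the negative branch invokes it.
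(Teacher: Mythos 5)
Your proof is correct and follows essentially the same route as the paper's: both reduce a definite answer about $\sigma$ to the membership status of the interleaving $\sigma_\mathit{int}$ in $\lang(\SUL)$, combining the invariant $\Obs^+_{\ab_k} \subseteq \projection{\lang(\SUL)}{\ab_k}$ with the standing assumption $\Omega \models \lang(\SUL)$ (the paper phrases this as a single biconditional where you split into a trivial positive branch and a negative branch by contradiction). The only difference is one of detail, not of argument: the paper asserts the invariant ``by construction,'' whereas you prove it by induction over the two update sites and explicitly discharge the apparent circularity by observing that the positive branch never consults $\Obs^+$ --- a worthwhile elaboration of a step the paper leaves implicit.
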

\begin{proof}
	We consider a query $\Member_i(\sigma)$ for component $i$.
	The answer \emph{unknown} is trivially consistent with $\projection{\lang(\SUL)}{\ab_i}$, so we assume that the set $\Pi_i$ is not empty.
	By construction, each set $\Obs_{\ab_i}^+$ verifies $\Obs_{\ab_i}^+ \subseteq \projection{\lang(\SUL)}{\ab_i}$.
	Thus, it also holds that $\bigpar_{j \neq i} (\Obs_{\ab_j}^+,\ab_j) \subseteq \projection{\lang(\SUL)}{\bigcup_{j \neq i}\ab_j}$ and any $\sigma' \in \Pi_i$ occurs in $\projection{\lang(\SUL)}{\bigcup_{j \neq i}\ab_j}$.
	Since $\sigma$ and $\sigma'$ correctly synchronize on shared actions, for any interleaving $\sigma_\mathit{int} \in (\{\sigma\},\ab_i)\bigpar (\{\sigma'\},\cup_{j\neq i}\ab_j)$, we have that $\sigma \in \projection{\lang(\SUL)}{ \ab_i}$ if and only if $\sigma_\mathit{int} \in \lang(\SUL)$.
\end{proof}

Before presenting the main theorem on correctness of our learning framework, we first introduce several auxiliary lemmas.
In the following, we assume $n$ instances of \Lstarqb run concurrently and each queries the corresponding functions $\Member_i$ and $\Equiv_i$, as per our architecture (Figure~\ref{fig:comp-learn-arch}).
First, a counter-example cannot be spurious for all learners; thus at least one learner obtains valid information to progress its learning.

\begin{lem}
	\label{lem:cex_valid_for_one}
	Every counter-example obtained from $\Equiv(\CH)$ is valid for at least one learner.
\end{lem}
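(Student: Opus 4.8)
The plan is to examine the shortest counter-example $\sigma$ returned by $\Equiv(\CH)$, written $\sigma = \sigma' a$ with $a$ its last action, and to show that the guard on line~\ref{line:adap_check_appl} of \cref{alg:comp_i_queries_basic}---namely $a \in \ab_i$ together with $\sigma \in \lang(\CH) \Leftrightarrow \projection{\sigma}{\ab_i} \in \lang(\CH_i)$---holds for at least one $i$, which is exactly what it means for $\sigma$ to be valid for learner~$i$. Throughout I rely on two facts: first, since $\Omega \models \lang(\SUL)$ we have $\lang(\SUL) = \bigpar_{i}(\projection{\lang(\SUL)}{\ab_i},\ab_i)$, so $\sigma \notin \lang(\SUL)$ iff $\projection{\sigma}{\ab_i} \notin \projection{\lang(\SUL)}{\ab_i}$ for some $i$; second, $\lang(\CH) = \lang(\bigpar_i \CH_i)$, so $\sigma \in \lang(\CH)$ iff $\projection{\sigma}{\ab_i} \in \lang(\CH_i)$ for \emph{every} $i$. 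I also use that LTS languages are prefix-closed (\cref{rem:lts-dfa}) and that, $\sigma$ being shortest, $\CH$ and $\SUL$ agree on every proper prefix of $\sigma$, in particular on $\sigma'$.

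Being a counter-example, $\sigma$ lies in exactly one of $\lang(\CH) \setminus \lang(\SUL)$ or $\lang(\SUL) \setminus \lang(\CH)$; I would split on these. In the first case, prefix-closure of $\lang(\CH)$ gives $\sigma' \in \lang(\CH)$, and agreement on $\sigma'$ then gives $\sigma' \in \lang(\SUL)$, hence $\projection{\sigma'}{\ab_i} \in \projection{\lang(\SUL)}{\ab_i}$ for all $i$. Pick $i$ witnessing $\projection{\sigma}{\ab_i} \notin \projection{\lang(\SUL)}{\ab_i}$. If we had $a \notin \ab_i$ then $\projection{\sigma}{\ab_i} = \projection{\sigma'}{\ab_i}$, contradicting the previous line; so $a \in \ab_i$. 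The biconditional then holds because $\sigma \in \lang(\CH)$ forces $\projection{\sigma}{\ab_i} \in \lang(\CH_i)$, making both sides true.

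The second case is symmetric, swapping the roles of $\lang(\CH)$ and $\lang(\SUL)$. Prefix-closure of $\lang(\SUL)$ and agreement on $\sigma'$ give $\sigma' \in \lang(\CH)$, whence $\projection{\sigma'}{\ab_i} \in \lang(\CH_i)$ for all $i$. Pick $i$ witnessing $\projection{\sigma}{\ab_i} \notin \lang(\CH_i)$ (one exists since $\sigma \notin \lang(\CH)$); the same argument yields $a \in \ab_i$, and the biconditional holds since both $\sigma \in \lang(\CH)$ and $\projection{\sigma}{\ab_i} \in \lang(\CH_i)$ are false. As a by-product, the chosen learner receives a non-spurious counter-example in both cases: a genuine negative one (in $\lang(\CH_i)$ but outside $\projection{\lang(\SUL)}{\ab_i}$) in the first, and a positive one in the second.

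The step I expect to be the crux is pinning the discrepancy to a component whose alphabet contains the final action $a$. The mismatch between $\lang(\CH)$ and $\lang(\SUL)$ is a priori localised to \emph{some} component, but without further information that component need not be one acting on $a$, in which case the guard would fail and the learner would keep iterating. The shortest-counter-example assumption, via agreement on $\sigma'$ combined with prefix-closure, is precisely what forces the offending projection to differ only at $a$ and hence $a \in \ab_i$; this is the sole place where minimality of $\sigma$ is used, and dropping it would break the argument.
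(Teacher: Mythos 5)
Your proof is correct, but it takes a genuinely different --- and in fact stronger --- route than the paper's. The paper's proof is purely semantic and never touches the Adapter: for a positive counter-example it observes that every projection $\projection{\sigma}{\ab_i}$ lies in $\projection{\lang(\SUL)}{\ab_i}$ while, since $\lang(\CH) = \lang(\bigpar_i \CH_i)$ and $\sigma \notin \lang(\CH)$, some $\CH_i$ must reject its projection; for a negative counter-example it argues by contradiction that being spurious for \emph{every} $i$ would, via the product-language assumption $\Omega \models \lang(\SUL)$ (\cref{lem:prod_of_projections}), force $\sigma \in \lang(\SUL)$. In particular, the paper's argument needs neither the shortest-counter-example assumption nor prefix-closedness, both of which you rely on to pin the final action $a$ to the witnessing component's alphabet. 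What your extra machinery buys is a strictly stronger conclusion: you show not merely that the projection is a genuine counter-example for some learner $i$, but that the guard on line~\ref{line:adap_check_appl} of \cref{alg:comp_i_queries_basic} fires for that very $i$, so the Adapter actually \emph{delivers} the valid counter-example --- reasoning the paper keeps in the surrounding discussion of $\Equiv_i$ rather than folding into this lemma. Two small remarks: your reading of ``valid'' (the Adapter's guard holds for learner $i$) differs from the paper's (the projected trace genuinely witnesses $\lang(\CH_i) \neq \projection{\lang(\SUL)}{\ab_i}$), but since you establish both properties simultaneously, the lemma as intended follows; and your decomposition $\sigma = \sigma' a$ silently assumes $\sigma \neq \epsilon$, which is harmless because $\epsilon$ belongs to every LTS language and hence can never be a counter-example, though it deserves a word.
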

\begin{proof}
	We distinguish the cases where $\sigma$ is a positive or negative counter-example.
	If $\sigma$ is positive (that is, $\sigma \in \lang(\SUL)$ and $\sigma \notin \lang(\CH)$), then by the definition of projection, $\projection{\sigma}{\ab_i} \in \projection{\lang(\SUL)}{\ab_i}$ for every $i$.
	By $\sigma \notin \lang(\CH)$ and the definition of parallel composition there must be at least one $i$ such that $\projection{\sigma}{\ab_i} \notin \lang(\CH_i)$; $\projection{\sigma}{\ab_i}$ is valid for such $i$.

	\smallskip
	
	For the case where $\sigma \notin \lang(\SUL)$, \(\sigma\in\lang(\CH)\) as it is a counter-example. Hence for all $i$, $\projection{\sigma}{\ab_i} \in \lang(\CH_i)$ by definition of the parallel composition. 

	We reason by contradiction and assume that $\projection{\sigma}{\ab_i}$ is spurious for all $i$.
	That is, it holds that $\projection{\sigma}{\ab_i} \in \lang(\CH_i)$ and $\projection{\sigma}{\ab_i} \in \projection{\lang(\SUL)}{\ab_i}$.
	This, however, directly contradicts that $\sigma \notin \lang(\SUL)$.
\end{proof}

The next lemma shows that even if a spurious counter-example occurs, this does not induce divergence, since it is always repaired by a corresponding positive counter-example in finite time.

\begin{lem}
	\label{lem:spurious_cex_finite_repair}
	If $\Equiv(\CH)$ always returns a shortest counter-example, then each spurious counter-example is repaired by another counter-example within a finite number of invocations of $\Equiv(\CH)$, the monolithic Teacher.
\end{lem}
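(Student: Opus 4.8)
The plan is to track a single spurious counter-example through the run and show that the combination of the shortest-counter-example assumption and the monotone growth of the sets $\Obs^+_{\ab_j}$ forces its correction after finitely many equivalence queries. Suppose that at some equivalence query $\Equiv(\CH)$ with $\CH = \bigpar_i \CH_i$, the Teacher returns a shortest negative counter-example $\sigma$ (so $\sigma \notin \lang(\SUL)$ but $\sigma \in \lang(\CH)$) and that $\projection{\sigma}{\ab_k}$ is \emph{spurious} for some learner $k$, \ie\ $\projection{\sigma}{\ab_k} \in \lang(\CH_k)$ yet $\projection{\sigma}{\ab_k} \in \projection{\lang(\SUL)}{\ab_k}$. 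By \Cref{lem:cex_valid_for_one}, $\sigma$ is valid for at least one learner, so there exists an index $\ell$ for which $\projection{\sigma}{\ab_\ell} \notin \projection{\lang(\SUL)}{\ab_\ell}$; this learner $\ell$ receives a genuine negative counter-example and makes progress.

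First I would argue that each learner, viewed in isolation, can only return finitely many negative counter-examples before it either terminates or produces a positive counter-example. This follows from the correctness properties of $\Lstarqb$ recalled in \Cref{subsec:lstar_extensions}: each successive hypothesis has at least as many states as the previous one, bounded above by the size of the minimal LTS for $\projection{\lang(\SUL)}{\ab_\ell}$. Hence the number of consecutive strict refinements from negative counter-examples is finite, and the learner that was valid for $\sigma$ must, within finitely many of its own equivalence queries, either converge or return a \emph{positive} counter-example $\tau \in \lang(\SUL) \setminus \lang(\CH')$ for some later composite hypothesis $\CH'$.

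The heart of the argument is to show that positive counter-examples \emph{accumulate enough information in the $\Obs^+_{\ab_j}$ sets} to make the spurious trace $\projection{\sigma}{\ab_k}$ eventually appear as a positive counter-example for learner $k$, thereby triggering the backtracking branch (line~\ref{line:lstarqb_backtrack1}) where $T(\sigma) = \minus$. Concretely, since $\projection{\sigma}{\ab_k} \in \projection{\lang(\SUL)}{\ab_k}$, there is a global witness $\rho \in \lang(\SUL)$ with $\projection{\rho}{\ab_k} = \projection{\sigma}{\ab_k}$. I would show that as the other learners refine their hypotheses and positive counter-examples feed their synchronizing behavior into the $\Obs^+_{\ab_j}$ sets (lines~\ref{line:adap_expandT_mem},~\ref{line:adap_expandT_equiv} of \Cref{alg:comp_i_queries_basic}), eventually $\Pi_k$ becomes non-empty for a query realizing $\projection{\sigma}{\ab_k}$, so $\Member_k(\projection{\sigma}{\ab_k})$ returns \emph{yes}; this forces $\projection{\sigma}{\ab_k}$ into $\lang(\CH_k)$'s complement-correction, and the next composite equivalence query exposes it as a positive counter-example whose projection contradicts the earlier spurious negative one. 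Because $\Lstarqb$ stored a backtracking point precisely for $\sigma$ (line~\ref{line:lstarqb_backtrack2}), the spurious decision is then undone.

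The main obstacle I anticipate is the coordination between learners across the barrier of \Cref{alg:comp_i_queries_basic}: I must rule out the pathological interleaving in which learner $k$ hangs indefinitely in the \textbf{while} loop (line~\ref{line:adap_repeat}) because the check on line~\ref{line:adap_check_appl} never succeeds for it, and simultaneously the other learners stop producing the positive counter-examples needed to enlarge $\Obs^+_{\ab_j}$. The key is to combine the two finiteness facts: only finitely many \emph{genuine} negative counter-examples can be issued globally before some positive counter-example or convergence occurs (by the state-count monotonicity applied across all $n$ learners simultaneously), and each positive counter-example strictly enlarges some $\Obs^+_{\ab_j}$ over a well-founded domain bounded by the target component languages. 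A careful well-founded measure — for instance, the lexicographic combination of the total deficit in learner state counts and the length of the shortest as-yet-unobserved positive witness for the spurious trace — then guarantees that the repairing counter-example arrives within finitely many invocations of $\Equiv(\CH)$, which is exactly the claim.
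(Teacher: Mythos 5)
Your proposal has the right raw ingredients (the global witness $\rho$ with $\projection{\rho}{\ab_k} = \projection{\sigma}{\ab_k}$, the backtracking trigger at line~\ref{line:lstarqb_backtrack1}), but the mechanism you build from them does not work, and the one place where the shortest-counter-example hypothesis must do real work is missing. Your repair mechanism is wrong: you argue that membership queries will eventually answer \emph{yes} for $\projection{\sigma}{\ab_k}$ once $\Pi_k$ becomes non-empty. But after the spurious counter-example is incorporated, learner $k$'s table already contains $T(\projection{\sigma}{\ab_k}) = \minus$, and $\Lstarqb$ never re-queries a filled cell; repair happens only when the \emph{Teacher} returns a counter-example $\sigma'$ with $T(\sigma') = \minus$. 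The observation you need --- and do not make --- is that your witness $\rho$ (indeed every element of $S = \{\rho' \in \lang(\SUL) \mid \projection{\rho'}{\ab_k} = \projection{\sigma}{\ab_k}\}$) is itself a \emph{positive counter-example to every subsequent composite hypothesis}: subsequent $\CH_k$ respect the spurious entry, so $\projection{\sigma}{\ab_k} \notin \lang(\CH_k)$ and hence $S \cap \lang(\CH) = \emptyset$. Combined with the shortest-counter-example assumption, this confines every counter-example returned before the repair to the \emph{finite} set $W$ of traces shorter than $m = \min\{|\rho'| \mid \rho' \in S\}$; the remaining task is to show that only finitely many $W$-counter-examples can be processed, which is the paper's counting argument. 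You invoke the shortest-counter-example assumption only in passing and never extract this confinement from it, so your argument has no finite arena in which to run.

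Your finiteness claims are also unsound in the presence of backtracking. The monotone growth of hypothesis sizes bounded by the minimal LTS (\cite[Lemma 3]{LeuckerN12}) holds only while table entries are monotone, i.e., \emph{between} backtracks: a backtrack reverts the table and can shrink the hypothesis, and spurious negative entries can inflate hypotheses \emph{beyond} the Nerode index. So your opening claim that each learner issues only finitely many negative counter-examples before converging or producing a positive one does not follow as stated, and your lexicographic measure is not shown to decrease --- the state-count deficit can increase at a backtrack, and nothing forces the length of the shortest as-yet-unobserved positive witness to drop at the other steps. The paper repairs exactly these holes, but only after restricting to $W$: backtracks in learner $j$ are bounded by $|W_j| = |\projection{W}{\ab_j} \cap \projection{\lang(\SUL)}{\ab_j}|$ (each permanently adds a projection to $\Obs^+_{\ab_j}$), the counter-example receptions due to spurious entries are bounded per backtrack by $|W_j|$, and hypothesis sizes between backtracks are bounded by $n_j + |W_j|$, where the additive term accounts for spurious entries incorrectly splitting Nerode classes. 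Without the prior confinement to $W$, none of these quantities is finite a priori, so your well-founded measure has no well-founded domain to live on.
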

\begin{proof}
	Suppose at some point a spurious counter-example $\projection{\sigma}{\ab_i}$ is returned to the $i$th learner. Consider the set 
	\[
		S = \{\sigma' \in \lang(\SUL) \mid \projection{\sigma'}{\ab_i} = \projection{\sigma}{\ab_i}\} \enspace , 
	\]
	which contains all traces that may repair $\projection{\sigma}{\ab_i}$. Since $\projection{\sigma}{\ab_i}$ is spurious, $S$ is not empty. Subsequent local hypotheses $\CH_i$ respect the spurious counter-example, so $\projection{\sigma}{\ab_i} \notin \lang(\CH_i)$ for all those $\CH_i$, and $S \cap \lang(\CH) = \emptyset$ for subsequent composite hypotheses $\CH$. The traces in $S$ are thus positive counter-examples for all subsequent composite hypotheses. 
	
	Let \(m = \min \{ |\sigma'| \mid \sigma' \in S \}\), and define
	\[
		W = \left(\bigcup_{k = 0}^{m} \ab^k\right) \setminus S \enspace .
	\]
	Thus, \(W\) is the finite set of traces that are strictly shorter than every trace in \(S\). Since every trace in \(S\) is a subsequent counter-example and \(\Equiv(H)\) returns a shortest counter-example, any subsequent counter-example is either in \(S\) or shorter than every trace in \(S\), hence belongs to \(W\). Therefore, it suffices to show that only finitely many counter-examples from $W$ are returned before one from $S$ appears. 
	
	Write $\lang_j = \projection{\lang(\SUL)}{\ab_j}$ and $W_j = \projection{W}{\ab_j} \cap \lang_j$, for each $j \in \{1,\dots,n\}$. 
	Counter-examples $\sigma \in W$ that trigger a backtrack in learner $j$ permanently add $\projection{\sigma}{\ab_j}$ to $\Obs_{\ab_j}^+$, so there are at most $|W_j|$ backtracks. Since each backtrack reverts table entries and can thus invalidate at most $|W_j|$ spurious entries, the total number of counter-example receptions due to backtracks and spurious entries is finite.

	Counter-examples $\sigma \in W$ that trigger neither a backtrack nor a spurious entry cause the hypothesis of at least one learner to grow (Lemma~\ref{lem:cex_valid_for_one}). Between consecutive backtracks, table entries are monotone, so by~\cite[Lemma 3]{LeuckerN12} the hypothesis size of learner $j$ is non-decreasing throughout each such interval and bounded by $n_j + |W_j|$, where $n_j$ is the Nerode index of $\lang_j$ and the additive term accounts for spurious negative entries that may incorrectly split Nerode classes. The total number of such counter-examples is therefore finite.

	Since both cases contribute finitely many counter-examples from $W$, $\Equiv(\CH)$ must eventually return some $\sigma' \in S$, repairing $\projection{\sigma}{\ab_i}$.
\end{proof}

Our main theorem states that a composite system is learned by $n$ copies of \Lstarqb that each call our Adapter (see Figure~\ref{fig:comp-learn-arch}).

\begin{thm}
	\label{thm:fixed_dist}
	Running $n$ instances of \Lstarqb synchronized by the Adapter terminates, and on termination we have $\lang(\CH_1 \parallel \dots \parallel \CH_n) = \lang(\SUL)$.
\end{thm}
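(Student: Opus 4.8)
The plan is to prove the theorem in two parts: termination and correctness on termination.

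For \textbf{termination}, I would argue that the system cannot run forever by bounding the number of equivalence queries $\Equiv(\CH)$ that can be answered negatively. The key structural facts are already in place: \Cref{lem:cex_valid_for_one} guarantees every counter-example advances at least one learner, and \Cref{lem:spurious_cex_finite_repair} guarantees that each spurious counter-example is repaired within finitely many $\Equiv(\CH)$ invocations. The argument proceeds by combining these with the termination theorem for \Lstarqb's underlying wildcard learner (the cited results from~\cite{LeuckerN12}, which bound the hypothesis size of learner $j$ by $n_j + |W_j|$). First I would observe that each negative counter-example either triggers a backtrack, creates a spurious entry, or grows some learner's hypothesis. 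Backtracks and spurious entries are finitely many by the same counting as in \Cref{lem:spurious_cex_finite_repair} (each permanently enlarges some $\Obs_{\ab_j}^+$, which is bounded by $|W_j|$). Hypothesis-growing counter-examples are finite because each $\CH_j$ has a bounded Nerode index. I must also confirm that the \emph{barrier} synchronisation on line~\ref{line:adap_barrier} does not deadlock: by \Cref{lem:cex_valid_for_one} at least one learner always receives a valid counter-example and hence re-enters its main loop to propose a fresh hypothesis, which unblocks the barrier for the next round.

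For \textbf{correctness on termination}, the algorithm halts exactly when $\Equiv(\bigpar_i \CH_i)$ returns \emph{yes}, and by \Cref{def:lts-lang} together with the definition of the global $\Equiv$ forwarded to the Teacher, this means $\lang(\CH_1 \parallel \dots \parallel \CH_n) = \lang(\SUL)$ directly. The only subtlety is that termination must be \emph{simultaneous}: every $\Equiv_i$ returns \emph{yes} in the same round because they all share the single Teacher answer obtained after the barrier (lines~\ref{line:adap_barrier}--\ref{line:adap_equiv_query} of~\cref{alg:comp_i_queries_basic}). So once one learner receives \emph{yes}, all do, and no learner is left mid-loop. I would therefore note that the correctness conclusion is almost immediate from the \emph{yes} branch of $\Equiv_i$; the real content of the theorem lives entirely in the termination argument.

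The \textbf{main obstacle} is making the termination bound airtight across the interaction between backtracking and the concurrent execution of $n$ learners. Individually, each learner's convergence relies on monotonicity of table entries, but backtracking destroys monotonicity, and \Cref{lem:spurious_cex_finite_repair} only controls the repair of \emph{one} spurious counter-example at a time. I would need a global progress measure — plausibly a lexicographic or well-founded combination of (i) the total number of unrepaired spurious counter-examples outstanding across all learners, (ii) the remaining capacity $\sum_j |W_j|$ of each $\Obs_{\ab_j}^+$, and (iii) the gap between each current hypothesis size and its bound $n_j + |W_j|$ — and show this measure strictly decreases with each negative counter-example reception, so that only finitely many can occur before the affirmative answer is reached. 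Carefully justifying that the sets $\Obs_{\ab_j}^+$ grow monotonically \emph{even across backtracks} (as the algorithm's comment on line~\ref{line:lstarqb_backtrack1} asserts) is the linchpin that lets these finite bounds compose into overall termination.
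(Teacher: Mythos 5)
Your correctness half is fine and matches the paper: termination happens only on a \emph{yes} from $\Equiv(\bigpar_i \CH_i)$, so the equality of languages is immediate. The gap is in your termination argument. Your proposed well-founded measure does not exist as you describe it: the sets $W_j$ are only defined \emph{relative to one fixed spurious counter-example} -- in \Cref{lem:spurious_cex_finite_repair} the shortest-counter-example assumption traps all subsequent counter-examples below the length of the shortest repair trace, which is what makes $W$ (and hence each $W_j$) finite. Globally, over an entire run, counter-example lengths are unbounded and $\Obs^+_{\ab_j} \subseteq \projection{\lang(\SUL)}{\ab_j}$ is typically infinite (the languages are prefix-closed languages of LTSs with cycles), so your components (ii) ``remaining capacity $\sum_j |W_j|$'' and (iii) ``gap to the bound $n_j + |W_j|$'' are not finite global quantities, and the lexicographic measure you sketch cannot be instantiated. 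You correctly identify this as the main obstacle, but the proposal leaves it unresolved, and the counting strategy as stated would fail.

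The paper's proof sidesteps global counting entirely, and this is where the two arguments genuinely diverge. It reasons per learner: either learner $i$ receives valid counter-examples and progresses by~\cite[Lemma 3]{LeuckerN12}, or it is stuck in a spurious-repair loop, during which -- by \Cref{lem:cex_valid_for_one} -- every counter-example spurious for $i$ is valid for some \emph{other} learner, so progress happens elsewhere; since there are finitely many learners, at least one converges to $\lang(\CH_j) = \projection{\lang(\SUL)}{\ab_j}$. It then inducts on the number of converged learners (a converged learner receives no valid counter-examples, so the remaining ones do), and closes with an explicit appeal to the standing assumption $\Omega \models \lang(\SUL)$: once all learners have converged, $\bigpar_i \CH_i$ realizes $\bigpar_i (\projection{\lang(\SUL)}{\ab_i},\ab_i) = \lang(\SUL)$ precisely \emph{because} $\lang(\SUL)$ is a product language over $\Omega$, so the Teacher answers \emph{yes}. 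Your proposal never invokes this product-language property; it enters only implicitly through your use of \Cref{lem:cex_valid_for_one} (whose proof needs it), but your argument has no step guaranteeing that convergence of all local learners actually ends the run -- without $\Omega \models \lang(\SUL)$ the composite of correct local models can strictly overapproximate $\lang(\SUL)$ and counter-examples keep arriving (exactly the situation later handled by the Orchestrator in \Cref{thm:algo}). To repair your proof, replace the global measure with the paper's induction on converged learners, applying \Cref{lem:spurious_cex_finite_repair} one spurious counter-example at a time, and make the final appeal to the product-language assumption explicit.
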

\begin{proof}[Sketch]
	Since each of the $n$ learners always eventually proposes a hypothesis (as long as they have not terminated yet), they will also repeatedly synchronize properly in the barrier (line~\ref{line:adap_barrier} of~\cref{alg:comp_i_queries_basic}) and thus no deadlock can occur.
	
	For each learner $i$, we can now reason as follows.
	As long as no spurious counter-example occurs for this learner, it makes progress as usual~\cite[Lemma 3]{LeuckerN12}.
	Suppose learner $i$ receives a spurious counter-example $\sigma$ after proposing a hypothesis $\CH_i$.
	By Lemma~\ref{lem:spurious_cex_finite_repair}, \(\sigma\) is repaired after finitely many invocations of \(\Equiv(H)\).
	At that point, learner $i$ may propose $\CH_i$ again, since $\sigma \in \lang(\CH_i)$, by the fact that it was spurious initially.
	Hence, the only way for learner $i$ to avoid eventual progress is to immediately receive a spurious counter-example again, which is fixed in finite time, proposing $\CH_i$ again and so on indefinitely.
	However, by \Cref{lem:cex_valid_for_one}, every counter-example spurious for learner $i$ is valid for some other learner, so progress is still being made elsewhere. Since there are only finitely many learners, at least one learner $j$ will eventually converge to $\lang(\CH_j) = \projection{\lang(\SUL)}{\ab_j}$.
	
	We conclude by induction on the number of learners that have converged.
	Once learner $j$ has converged, \ie, $\lang(\CH_j) = \projection{\lang(\SUL)}{\ab_j}$, no counter-examples are valid for it, so some other learners for which $\lang(\CH_k) \neq \projection{\lang(\SUL)}{\ab_k}$ will receive valid counter-examples.
	By induction, they will all eventually converge. Then, $\Equiv(\CH)$ will respond with \emph{yes} (since $\lang(\SUL)$ is a product language over $\Omega$) and we terminate.
\end{proof}

\begin{rem}
	\label{rem:matching_local_languages}
	Even when $\SUL = \bigpar_{i=1}^n M_i$ with each $M_i$ over $\ab_i$, \Cref{thm:fixed_dist} does not imply $\CH_i = M_i$ for all $i$, since we only observe the behavior of $M_i$ in the context of the parallel composition. 
	For example, consider the LTSs below, both with alphabet $\{a\}$:
	\[
	T_1 =
	\tikz[baseline=-0.65ex, ->,>=stealth',auto,semithick, initial text={}, initial distance={10pt}, every state/.style={inner sep=1pt, minimum size=2ex}]{
		\node[state, initial] (s0) at (0,0) {};
	}
	\qquad
	T_2 =
	\tikz[baseline=-0.65ex, ->,>=stealth',auto,semithick, initial text={}, initial distance={10pt}, every state/.style={inner sep=1pt, minimum size=2ex}]{
		\node[state, initial] (s0) at (0,0) {};
		\path
		(s0) edge[loop right] node {$a$} (s0);
	}
	\]
	Here $T_1 \parallel T_2 = T_1 \parallel T_1$, so the equivalence oracle cannot distinguish the two decompositions.
\end{rem}

\subsection{Optimisations}
\label{subsec:optimisations}
There are a number of optimisations that can dramatically improve the practical performance of our learning framework.
We briefly discuss them here.

First, finding whether there is a trace $\sigma' \in \Pi_i$ (line~\ref{line:adap_Pi} of~\cref{alg:comp_i_queries_basic}) can quickly become expensive once the sets $\Obs_{\ab_i}^+$ grow larger.
We thus try to limit the size of each $\Obs_{\ab_i}^+$ without impacting the amount of information it provides on the synchronization opportunities offered by component $i$.
Therefore, when we derive that $\sigma \in \projection{\lang(\SUL)}{\ab_i}$, we only store the shortest prefix $\rho$ of $\sigma$ such that $\rho$ and $\sigma$ contain the same synchronizing actions.
That is, $\sigma = \rho \cdot \rho'$ and $\rho'$ contains only actions local to $\ab_i$ in \(\Omega\).
Furthermore, we construct $\Obs_{\cup_{j\neq i}\ab_j}^+$ only once after each call to $\Equiv_i$ and we cache accesses to $\Obs_{\cup_{j\neq i}\ab_j}^+$, such that it is only traversed once when performing multiple queries $\sigma^1$, $\sigma^2$ for which it holds that $\projection{\sigma^1}{\cup_{j\neq i}\ab_j} = \projection{\sigma^2}{\cup_{j\neq i}\ab_j}$.
A possibility that we have not explored is applying \emph{partial-order reduction} to eliminate redundant interleavings in $\Obs_{\cup_{j\neq i}\ab_j}^+$.

Since the language of an LTS is prefix-closed, we can -- in some cases -- extend the function $T$ that is part of the observation table without performing membership queries.
Concretely, if $T(\sigma) = 0$ then we can set $T(\sigma \cdot \sigma') = 0$ for any trace $\sigma'$.
Dually, if $T(\sigma \cdot \sigma') = 1$ then we set $T(\sigma) = 1$.

\subsection{Relaxing the Assumption $\Omega \models \lang(\SUL)$}
\label{sub:gen_adapter}
\cref{alg:comp_i_queries_basic,alg:lstar_backtracking} learn a model of $\lang(\SUL)$ as a parallel composition over a given $\Omega \models \lang(\SUL)$. In \cref{sec:algo}, we reuse these algorithms in the more general setting where only $\Omega \models \Obs$ is guaranteed, with no assurance that future observations remain consistent with $\Omega$. A key invariant in this setting is that membership queries, as constructed by the Adapter, cannot invalidate the current distribution, as the following proposition states.
\begin{prop}
\label{pr:MQnotBreaking}
Consider an observation function \(\Obs\) and a distribution \(\Omega\) such that \(\Omega\models \Obs\). 
Then, for any global membership query result \((\sigma,b) \in \ab^{\star}_{SUL} \times \{\plus,\minus\}\), \(\Omega\models \Obs\cup\{(\sigma,b)\}\). 
\end{prop}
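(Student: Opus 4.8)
The plan is to verify the statement through the local/global characterisation of \Cref{pr:closed_obs_distribution}: writing $\Obs' = \Obs \cup \{(\sigma,b)\}$, it suffices to show that every $\tau \in \Dom(\Obs')$ satisfies $\Obs'(\tau) = \bigwedge_{\ab_k \in \Omega} \Obs'_{\ab_k}(\projection{\tau}{\ab_k})$. The starting point is to read off the structure forced by the phrase \emph{global membership query}: by the Adapter (\cref{alg:comp_i_queries_basic}, lines~\ref{line:adap_Pi}--\ref{line:adap_return_member}), such a $\sigma$ is an interleaving constructed for some component $i$ from a witness $\sigma' \in \Obs_{\cup_{j\neq i}\ab_j}^+$, so that $\projection{\sigma}{\ab_j} = \projection{\sigma'}{\ab_j} \in \Obs_{\ab_j}^+$ for every $j \neq i$. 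Since each trace in $\Obs_{\ab_j}^+$ is the $\ab_j$-projection of a positively-answered trace already in $\Dom(\Obs)$ (lines~\ref{line:adap_expandT_mem} and~\ref{line:adap_expandT_equiv}), this yields the key fact $\Obs_{\ab_j}(\projection{\sigma}{\ab_j}) = \plus$ for all $j \neq i$.

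First I would analyse how the update propagates to the local observation functions. Recording $(\sigma,b)$ adds a single new disjunct $b$ to the supremum defining $\Obs_{\ab_k}$ at the argument $\projection{\sigma}{\ab_k}$, and touches no other cell; hence $\Obs'_{\ab_k}(\tau') = \Obs_{\ab_k}(\tau')$ for $\tau' \neq \projection{\sigma}{\ab_k}$ and $\Obs'_{\ab_k}(\projection{\sigma}{\ab_k}) = \Obs_{\ab_k}(\projection{\sigma}{\ab_k}) \vee b$. Combining this with the key fact above, for each $j \neq i$ the affected cell already equals $\plus$, so $\plus \vee b = \plus$ and the whole local function $\Obs_{\ab_j}$ is left \emph{unchanged}. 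The entire update therefore collapses to a single component: only $\Obs_{\ab_i}$ can change, and only at the argument $p := \projection{\sigma}{\ab_i}$, where it passes from $\Obs_{\ab_i}(p)$ to $\Obs_{\ab_i}(p) \vee b$.

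With the other components frozen, I would then check the identity of \Cref{pr:closed_obs_distribution} for $\Obs'$. For any $\tau \in \Dom(\Obs)$ with $\projection{\tau}{\ab_i} \neq p$ every relevant cell is unchanged and the identity is inherited from $\Omega \models \Obs$. The only traces left are $\sigma$ itself and those $\tau \in \Dom(\Obs)$ sharing $\projection{\tau}{\ab_i} = p$; for each of these the right-hand side is $\big(\Obs_{\ab_i}(p)\vee b\big) \wedge C_\tau$ with $C_\tau := \bigwedge_{j\neq i}\Obs_{\ab_j}(\projection{\tau}{\ab_j})$ unchanged, so only the component-$i$ conjunct has moved. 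For $\tau = \sigma$ both non-$i$ conjuncts equal $\plus$ and the right-hand side reduces to $\Obs_{\ab_i}(p) \vee b$, which must equal $b$; for the remaining $\tau$ the identity survives exactly when the added disjunct $b$ does not alter the conjunction, i.e. when $b \leq \Obs_{\ab_i}(p)$ or $C_\tau = \minus$.

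The crux — and the step I expect to be the main obstacle — is this residual component-$i$ bookkeeping, which is where the hypothesis that $(\sigma,b)$ is a genuine \emph{query result} (rather than an arbitrary pair) does the work. Concretely, I would argue that $\Obs_{\ab_i}(p)$ can contain no disjunct conflicting with the Teacher's answer $b$ for the local trace $p$: when $b = \minus$ the identity is immediate for every shared-projection $\tau$ and the only obligation is $\Obs_{\ab_i}(p) \neq \plus$; when $b = \plus$ the obligation is that no earlier trace with $\ab_i$-projection $p$ and all non-$i$ projections locally positive was recorded negative. Both obligations follow from the fact that the Adapter issues the underlying local query on component $i$ at most once and reuses its cached answer, so the cell at $p$ cannot already commit to $\lnot b$; discharging them gives $\Obs_{\ab_i}(p)\vee b = b$ on the relevant cells and restores the identity, whence $\Omega \models \Obs'$ by \Cref{pr:closed_obs_distribution}. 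I note that it is precisely the structural fact isolated in the first paragraph — available because $\sigma$ is a global membership query built over $\Omega$ — that confines all the difficulty to this single component.
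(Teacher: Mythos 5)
Your proof follows the paper's skeleton up to its final step: both arguments verify the criterion of \Cref{pr:closed_obs_distribution} for \(\Obs' = \Obs \cup \{(\sigma,b)\}\), both extract from the Adapter's construction (line~\ref{line:adap_Pi} of \cref{alg:comp_i_queries_basic}) that \(\Obs_{\ab_j}(\projection{\sigma}{\ab_j}) = \plus\) for every \(j \neq i\), and both thereby confine the update to the single cell \(p = \projection{\sigma}{\ab_i}\) of component \(i\). The divergence is in how that cell is discharged, and there your argument has a genuine gap. The paper closes it with a freshness fact: a local membership query is only issued when its answer is unknown, i.e.\ \(p \notin \Dom(\Obs_{\ab_i})\) (which also gives \(\sigma \notin \Dom(\Obs)\), needed for \(\Obs'\) to be well defined --- a point you skip). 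From this, \emph{no} trace of \(\Dom(\Obs)\) projects to \(p\), so your ``shared-projection \(\tau\)'' cases are vacuous and \(\Obs'_{\ab_i}(p) = b\) outright. You instead permit \(p \in \Dom(\Obs_{\ab_i})\) and appeal to the claim that the Adapter ``issues the underlying local query on component \(i\) at most once and reuses its cached answer, so the cell at \(p\) cannot already commit to \(\lnot b\)''. That justification fails: \(\Dom(\Obs_{\ab_i})\) is populated by the \(\ab_i\)-projections of \emph{all} global observations --- interleavings built for other learners and counter-examples recorded from equivalence queries --- so the cell at \(p\) can already be defined, with either sign, even though learner \(i\) never issued the local query \(p\); a cache of learner \(i\)'s own queries excludes nothing.

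Worse, the invariant you are trying to establish is false in the setting of \Cref{sub:gen_adapter} unless the query is fresh, because only \(\Omega \models \Obs\) is assumed and \(\lang(\SUL)\) need not be a product language over \(\Omega\). Take \(\Obs = \{(\tau,\minus),(\rho,\plus)\}\) with \(\projection{\tau}{\ab_i} = p\), \(\projection{\rho}{\ab_i} \neq p\) and \(\projection{\rho}{\ab_j} = \projection{\tau}{\ab_j}\) for all \(j \neq i\); then \(\Obs_{\ab_i}(p) = \minus\), your \(C_\tau = \plus\), and \(\Omega \models \Obs\) holds by \Cref{pr:closed_obs_distribution}. If the local query \(p\) could be re-issued, nothing prevents the Teacher from answering \(b = \plus\) on a fresh interleaving \(\sigma\) whose projections all agree with those of \(\tau\) --- the target language may separate the two orderings, which is exactly an order discrepancy --- and then \(\Obs'(\tau) = \minus\) while \(\bigwedge_{\ab_j\in\Omega}\Obs'_{\ab_j}(\projection{\tau}{\ab_j}) = \plus\), so \(\Omega \not\models \Obs'\). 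The proposition holds not because cached answers are consistent, but because such a query is never posed: the hypothesis ``global membership query result'' carries the condition \(p \notin \Dom(\Obs_{\ab_i})\), and invoking it collapses your entire residual bookkeeping into the paper's short case analysis (\(b = \plus\): all local cells of \(\sigma\) are \(\plus\); \(b = \minus\): \(\sigma\) is the unique trace projecting to \(p\), so \(\Obs'_{\ab_i}(p) = \minus\)).
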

\begin{proof}
First note that \(\sigma\not\in\Dom(\Obs)\), as otherwise $\projection{\sigma}{\ab_i}$ would be in $\Dom(\Obs_{\ab_i})$ and the local query would have not been issued.
Let \(\Learn_i\) the learner that made the query and write \(\Obs'=\Obs\cup\{(\sigma,b)\}\). We apply \Cref{pr:closed_obs_distribution} and verify the condition $\Obs'(\sigma') = \bigwedge_{\ab_j \in \Omega} \Obs'_{\ab_j}(\projection{\sigma'}{\ab_j})$ separately for $\sigma' \in \Dom(\Obs)$ and $\sigma' = \sigma$.

\begin{description}
    \item[$\underline{\sigma' \in \Dom(\Obs)}$] by construction of the Adapter, since $\sigma$ is an interleaving with some $\sigma' \in \Obs^+_{\cup_{j\neq i}\ab_j}$ (line~\ref{line:adap_Pi} of \Cref{alg:comp_i_queries_basic}), we have $\projection{\sigma'}{\ab_j} \in \Obs^+_{\ab_j}$, hence $\Obs_{\ab_j}(\projection{\sigma}{\ab_j}) = \plus$, for all $j \neq i$ (note that some of these projections may be $\epsilon$). Therefore, adding $(\sigma, b)$ does not alter $\Obs'_{\ab_j}$ on $\projection{\Dom(\Obs)}{\ab_j}$; and since $\projection{\sigma}{\ab_i} \notin \Dom(\Obs_{\ab_i})$, the same holds for $j = i$. The condition of \Cref{pr:closed_obs_distribution} thus holds for all $\sigma'$ already in $\Dom(\Obs)$.
    
    \item[$\underline{\sigma' = \sigma}$] if $b = \plus$, then $\Obs'(\sigma) = +$ and, by definition of local observation functions, $\Obs'_{\ab_j}(\projection{\sigma}{\ab_j}) = \plus$ for all $j$, so the condition holds. If $b = \minus$, then $\Obs'(\sigma) = \minus$. For $j \neq i$, $\Obs'_{\ab_j}(\projection{\sigma}{\ab_j}) = \plus$ as argued above. For $j = i$, since $\projection{\sigma}{\ab_i} \notin \Dom(\Obs_{\ab_i})$, no trace in $\Dom(\Obs)$ projects to $\projection{\sigma}{\ab_i}$, so $\Obs'_{\ab_i}(\projection{\sigma}{\ab_i}) = \Obs'(\sigma) = \minus$. Hence $\bigwedge_{\ab_j \in \Omega} \Obs'_{\ab_j}(\projection{\sigma}{\ab_j}) = \minus = \Obs'(\sigma)$.
\end{description}
\end{proof}
Since membership queries preserve the distribution, any inconsistency must arise from an equivalence query; how to detect and resolve such inconsistencies is the subject of the following sections.

\section{Counter-examples to Distributions}
\label{sec:CED}

Recall that our approach starts from the finest distribution $\Omega_{singles} = \{\{a\} \mid a \in \ab_{\SUL}\}$ and gradually updates it until it models the observation function.
Building on the groundwork of~\cref{subsec:distributions} and especially~\cref{pr:closed_obs_distribution}, we define \emph{counter-examples to a distribution}, i.e., witnesses of inconsistency between a distribution and the observations, and characterize when they arise (\cref{sub:CED}), show how to resolve them (\cref{sub:CED}), and leverage these results to construct a new distribution modeling the observation function (\cref{sub:distri_mod}).
The general algorithm integrating these ideas is presented in \cref{sec:algo}.

\subsection{Counter-example to a Distribution}
\label{sub:CED}
By \cref{pr:closed_obs_distribution}, $\Omega \not\models \Obs$ precisely when $\Obs(\sigma) \neq \bigwedge_{\ab_i \in \Omega} \Obs_{\ab_i}(\projection{\sigma}{\ab_i})$, for some $\sigma$.
Since \(\Obs(\sigma)=\plus\) always implies \(\Obs_{\ab_i}(\projection{\sigma}{\ab_i}) = \plus\) by definition of local observations, this requires a globally negative observation \(\Neg\) together with a set of globally positive observations whose local projections match those of $\Neg$, indicating a mismatch between global and local observations. We formalize this as follows.

\begin{defi}[Counter-example to a distribution]
    A counter-example to \(\Omega\models\Obs\) is a pair \((\Neg,\Pos)\in\Dom(\Obs)\times\Dom(\Obs)^\Omega\) with 
    \begin{itemize}
        \item \(\Neg\) a negative observation \(\Obs(\Neg)=-\);
        \item \(\Pos\) a function that maps each \(\ab_i\in\Omega\) to a positive observation \(\sigma_{\ab_i}\), i.e., \(\Obs(\sigma_{\ab_i})=+\), such that \(\projection{\Neg}{\ab_i}=\projection{\sigma_{\ab_i}}{\ab_i}\).
    \end{itemize}
    We call $\supp(P)$ the \emph{positive image} of the counter-example.
    We write \(\CED(\Omega,\Obs)\) for the set of such counter-examples.
    \label{def:distr-cex}
\end{defi}
Although these counter-examples are not necessarily related to learning, 
we use the same terminology as in active learning. This is because the two concepts are directly linked in our case, as will be explained later.
\begin{exa}
    \label{ex:CED}
    Reusing the observation function \(\Obs\) and the singleton distribution \(\Omega_{singles}\) defined in~\Cref{ex:modelObs}, for every element of $\CED(\Obs,\Omega_{singles})$ we have $\Neg=b$. and \(\Pos(\{b\}) = ab\). For the remaining elements of \(\Omega\), there are more choices: 
    $\{a\}$ can be mapped to either $\epsilon$ or $c$; $\{c\}$ to either $\epsilon$, $a$ or $ab$; and \(\{d\}\) to either $\epsilon$, $a$, $ab$ or $c$.
\end{exa}
Proposition~\ref{pr:closed_obs_distribution}, specialized to our definition of counter-examples, yields the following corollary, which will be used in the following to detect that a distribution is a model.

\begin{cor}
    \label{cr:CEtoDist}
    \(\Omega\models\Obs\iff \CED(\Omega,\Obs)=\emptyset\).
\end{cor}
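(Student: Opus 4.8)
The plan is to prove the logically equivalent statement $\Omega \not\models \Obs \iff \CED(\Omega,\Obs) \neq \emptyset$, obtaining both directions by specialising Proposition~\ref{pr:closed_obs_distribution} together with the definition of the local observation functions. Recall that the proposition characterises $\Omega \models \Obs$ as the identity $\Obs(\sigma) = \bigwedge_{\ab_i \in \Omega} \Obs_{\ab_i}(\projection{\sigma}{\ab_i})$ holding for every $\sigma \in \Dom(\Obs)$. I would first record the observation already made just before Definition~\ref{def:distr-cex}: since $\Obs(\sigma) = +$ forces $\Obs_{\ab_i}(\projection{\sigma}{\ab_i}) = +$ for all $i$ by definition of local observations, this identity can only fail at a trace $\sigma$ with $\Obs(\sigma) = -$ while $\bigwedge_{\ab_i \in \Omega}\Obs_{\ab_i}(\projection{\sigma}{\ab_i}) = +$. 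This reduces the corollary to matching such failure points with counter-examples.

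For the forward direction, I would assume $\Omega \not\models \Obs$ and use the proposition to obtain such a witness $\sigma$; set $\Neg = \sigma$. The condition $\bigwedge_{\ab_i \in \Omega}\Obs_{\ab_i}(\projection{\Neg}{\ab_i}) = +$ means $\Obs_{\ab_i}(\projection{\Neg}{\ab_i}) = +$ for each $i$, and unfolding the defining disjunction $\Obs_{\ab_i}(\projection{\Neg}{\ab_i}) = \bigvee_{\{\tau \in \Dom(\Obs)\,\mid\, \projection{\tau}{\ab_i} = \projection{\Neg}{\ab_i}\}} \Obs(\tau)$ yields, for each $i$, some $\sigma_{\ab_i} \in \Dom(\Obs)$ with $\projection{\sigma_{\ab_i}}{\ab_i} = \projection{\Neg}{\ab_i}$ and $\Obs(\sigma_{\ab_i}) = +$. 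Defining $\Pos(\ab_i) = \sigma_{\ab_i}$ produces a counter-example $(\Neg,\Pos)$, so $\CED(\Omega,\Obs) \neq \emptyset$.

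For the converse, I would take any $(\Neg,\Pos) \in \CED(\Omega,\Obs)$ and show it falsifies the proposition's identity at $\sigma = \Neg$. Since $\Neg \in \Dom(\Obs)$, the value $\Obs_{\ab_i}(\projection{\Neg}{\ab_i})$ is defined, and because $\Pos(\ab_i)$ witnesses $\projection{\Pos(\ab_i)}{\ab_i} = \projection{\Neg}{\ab_i}$ with $\Obs(\Pos(\ab_i)) = +$, the defining disjunction for $\Obs_{\ab_i}(\projection{\Neg}{\ab_i})$ contains a $+$ entry and hence equals $+$, for every $i$. Thus $\bigwedge_{\ab_i \in \Omega}\Obs_{\ab_i}(\projection{\Neg}{\ab_i}) = +$ while $\Obs(\Neg) = -$, contradicting the identity and giving $\Omega \not\models \Obs$ by Proposition~\ref{pr:closed_obs_distribution}.

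I do not anticipate a genuine obstacle here: the corollary is essentially a repackaging of Proposition~\ref{pr:closed_obs_distribution}, with Definition~\ref{def:distr-cex} engineered precisely so that a counter-example encodes a point of failure of the proposition's identity. The only point demanding a little care is the bookkeeping around the disjunctive definition of the local observation functions, and checking that all relevant projections lie in the appropriate domains (in particular $\projection{\Neg}{\ab_i} \in \projection{\Dom(\Obs)}{\ab_i} = \Dom(\Obs_{\ab_i})$), which ensures every quantity written down is well-defined.
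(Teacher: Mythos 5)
Your proof is correct and follows essentially the same route as the paper: both directions specialise Proposition~\ref{pr:closed_obs_distribution}, using the fact that positive observations can never falsify its identity and that the disjunctive definition of local observation functions converts a failure point into a counter-example and vice versa. The only cosmetic difference is that you phrase both implications contrapositively ($\Omega \not\models \Obs \iff \CED(\Omega,\Obs) \neq \emptyset$), whereas the paper proves one direction by contrapositive and the other directly; the underlying argument is identical.
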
 

\begin{proof}
    We prove the two implications separately.

    \begin{description}
        \item[$\implies$] We reason by contrapositive. Suppose \(\CED(\Omega,\Obs)\neq \emptyset\) and let \((\Neg,\Pos)\) be a counter-example. Then \(\Obs(\Neg)=\minus\) and, for all $\ab_i\in\Omega$, $\Obs_{\ab_i}(\projection{\Neg}{\ab_i}) = \Obs_{\ab_i}(\projection{\Pos(\ab_i)}{\ab_i}) = \plus$, by definition of a counter-example and local observation functions. By ~\Cref{pr:closed_obs_distribution}, \(\Omega\not\models\Obs\). 

        \item[$\impliedby$] Suppose $\CED(\Omega,\Obs) = \emptyset$. Then, by definition of counter-example, for any \(\sigma\in\Dom(\Obs)\) with \(\Obs(\sigma)=\minus\) there is some $i$ such that every $\sigma' \in \Dom(\Obs)$ satisfying $\projection{\sigma'}{\ab_i} = \projection{\sigma}{\ab_i}$ also has $\Obs(\sigma') = \minus$. Hence $\Obs_{\ab_i}(\projection{\sigma}{\ab_i}) = \minus$, so $\bigwedge_{1 \leq i \leq n} \Obs_{\ab_i}(\projection{\sigma}{\ab_i}) = \minus$. Since positive observations trivially satisfy $\bigwedge_{1 \leq i \leq n} \Obs_{\ab_i}(\projection{\sigma}{\ab_i}) = \plus$, \cref{pr:closed_obs_distribution} gives $\Omega \models \Obs$. 
    \end{description}

\end{proof}

\subsection{Resolving a Counter-example}
\label{sub:ext_dist}
Given a distribution \(\Omega\) and a fixed observation function \(\Obs\), one key question is how to extend \(\Omega\) to a new distribution \(\Omega'\) modeling \(\Obs\). This is a difficult problem, as new counter-examples can arise when extending a distribution. 
In this subsection, we explain how to resolve a single counter-example as a first step.  

When a counter-example \((\Neg,\Pos)\) to \(\Omega\models\Obs\) exists, it reveals a limitation in the distribution \(\Omega\): the projections of \(\Neg\) coincide with projections of elements in \(\Pos\), making them indistinguishable under the current components.
To resolve such counter-examples, it is thus necessary and sufficient to augment \(\Omega\) with new components that disrupt this matching.
In the following, we will fix $(\Neg,\Pos) \in \CED(\Omega,\Obs)$ as a counter-example to \(\Omega\models\Obs\). 

More precisely, 
for each pair of traces \((\Neg,\sigma)\) with \(\sigma\in\Pos\), 
it suffices to identify a \emph{discrepancy} between them.
There are two types of discrepancies: \emph{multiplicity discrepancies} and \emph{order discrepancies}.
A multiplicity discrepancy is a symbol occurring a different number of times in each trace.
For this, given a trace $\sigma$, let $\multiab(\sigma)$ denote the multiset of symbols occurring in $\sigma$.
Note that $\multiab(\sigma) = \multiab(\sigma')$ if and only if $\sigma$ is a permutation of $\sigma'$.
The symmetric difference of multisets $A$ and $B$ is denoted $A \symdif B$.
\begin{defi}[Multiplicity discrepancy]
	Given a $\ab_i \in \Omega$, the set of \emph{multiplicity discrepancies} for $\ab_i$ is
	\( \DSet_m^{\ab_i}(\Neg, \Pos) = \multiab(\Neg) \symdif \multiab(\Pos(\ab_i)) \).
\end{defi}
We now define an \emph{order discrepancy}, \ie, a pair of symbols whose relative positions differ between the traces.
We do this by considering whether symbols that are \emph{not} a multiplicity discrepancy, \ie, those appearing the same number of times in both traces, are permuted.
We choose the permutation such that the relative order of identical symbols is maintained.
\begin{defi}[Order discrepancy]
    \label{def:disc_ord}
	Given $\ab_i\in\Omega$, let \(\nodiff=\ab\setminus (\multiab(\Neg) \symdif \multiab(\Pos(\ab_i)))\) be the symbols on which $\Neg$ and $\Pos(\ab_i)$ agree and define \(\Neg' = \projection{\Neg}{\nodiff}\).
	Let $\pi$ be the unique permutation such that $\Neg' = \pi(\projection{{\Pos(\ab_i)}}{\nodiff})$ and $\Neg'[j] = \Neg'[k] \implies \pi(j) < \pi(k)$, for all $j < k$.
	The set of \emph{order discrepancies} for $\ab_i$ is then:
	\[ \DSet_o^{\ab_i}(\Neg, \Pos) = \{ \{\Neg'[j],\Neg'[k]\} \mid k < j \land \pi(k) > \pi(j) \} \]
\end{defi}
Multiplicity and order discrepancies can be found in linear and quadratic time, respectively.
Finally, we define the \emph{discrepancies} for a counter-example as sets that contain at least a discrepancy of either type for each alphabet in \(\Omega\).
\begin{defi}[Discrepancy set]
    A set $\Dis \subseteq \ab$ is a \emph{discrepancy} for \((\Neg,\Pos)\) iff for all $\ab_i \in \Omega$, either $\DSet_m^{\ab_i}(\Neg,\Pos) \cap \Dis \neq \emptyset$ or there is $\Dis^{\ab_i} \in \DSet_o^{\ab_i}(\Neg,\Pos)$ such that $\Dis^{\ab_i} \subseteq \Dis$. We write \(\DSet(\Neg,\Pos)\) for the set of all discrepancies for the counter-example \((\Neg,\Pos)\). 
\end{defi} 
For a set of counter-examples $\{ce_1,\dots,ce_n\}$, we write $\DSet(\{ce_1,\dots,ce_n\}) = \{ \{ \Dis_1,\ldots, \Dis_n \} \mid \forall i \ldotp \Dis_i \in \DSet(ce_i) \}$, representing all possible selections of one discrepancy per counter-example.
\begin{exa}[Multiplicity discrepancy]
    \label{ex:dis}
Following~\Cref{ex:CED} with the singleton distribution \(\Omega_{singles}=\{\{a\},\{b\},\allowbreak \{c\},\{d\}\}\), consider the counter-example \((\Neg,\Pos)=(b,(\{a\}\mapsto\epsilon, \{b\}\mapsto ab,\{c\}\mapsto\epsilon,\{d\}\mapsto\epsilon))\). We find the following multiplicity discrepancies: 
$\DSet_m^{\ab_i}(\Neg,\Pos) = \{b\}$, for $\ab_i \in \{ \{a\}, \{c\}, \{d\}\}$, 
 because $b$ occurs once in $\Neg$ vs. zero times in $\Pos(\ab_i)$, 
and \(\DSet_m^{\{b\}}(\Neg,\Pos)=\{a\}\). Hence, \(\DSet(\Neg,\Pos)\) includes all subsets of $\{a,b,c,d\}$ containing \(\{a,b\}\). 
For a different counter-example such as \((\Neg,\Pos')=(b,(\{a\}\mapsto c, \{b\}\mapsto ab,\{c\}\mapsto ab,\{d\}\mapsto c))\), we obtain \(\DSet_m^{\{a\}}(\Neg,\Pos')=\DSet_m^{\{d\}}(\Neg,\Pos')=\{b,c\}\) and \(\DSet_m^{\{b\}}(\Neg,\Pos')=\DSet_m^{\{c\}}(\Neg,\Pos')=\{a\}\), so \(\DSet(\Neg,\Pos')\) includes any subset of $\{a,b,c,d\}$ that contains either \(\{a,b\}\) or \(\{a,c\}\). 
\end{exa}
\begin{exa}[Order discrepancy]
    \label{ex:dis_order}
    Consider a singleton distribution \(\Omega_{singles}=\{\{a\},\{b\}, \{c\}\}\) with
    \(\Obs(abc)=\plus\) and \(\Obs(bac)=\minus\). 
This yields the counter-example \((\Neg,\Pos) = (bac,(\{a\}\mapsto abc,\{b\}\mapsto abc,\{c\}\mapsto abc))\). For each $\ab_i \in \Omega_{singles}$, we find $\DSet_o^{\ab_i}(\Neg,\Pos) = \{\{a,b\}\}$. Intuitively, these discrepancies reveals that singleton components allow for all permutations of $a$ and $b$, but the observation function forbids some of them. Therefore, \(\DSet(\Neg,\Pos)\) includes all subsets of $\{a,b,c\}$ containing \(\{a,b\}\). 
\end{exa}
We can now state that discrepancies are both sufficient and necessary additions to a distribution in order to eliminate their counter-examples. 

\begin{prop}
    \label{pr:fixingCED}
    Suppose there exists \((\Neg,\Pos)\in\CED(\Omega,\Obs)\). 
    For each discrepancy \(\Dis\in\DSet(\Neg,\Pos)\), 
    \((\Neg,\Pos)\not\in\CED(\Omega\cup \{\Dis\},\Obs)\).
    Conversely, for any distribution \(\Omega'\) of \(\ab\) such that \(\Dis\not\subseteq\ab_i\), for all \(\Dis\in\DSet(\Neg,\Pos)\) and all \(\ab_i\in\Omega'\), \((\Neg,\Pos) \in \CED(\Omega',\Obs)\).
\end{prop}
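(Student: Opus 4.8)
The statement has two halves, and I would prove them separately.

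\textbf{The forward direction} asserts that adding a single discrepancy $\Dis$ to $\Omega$ destroys the counter-example $(\Neg,\Pos)$. By \cref{def:distr-cex}, $(\Neg,\Pos)\in\CED(\Omega\cup\{\Dis\},\Obs)$ would require, in particular, that $\Pos$ extends to a function assigning a positive observation to the new component $\Dis$ as well; but since $\Pos$ is fixed, the cleaner reading is that $(\Neg,\Pos)$ fails to be a counter-example over $\Omega\cup\{\Dis\}$ precisely because the new alphabet $\Dis$ breaks the matching condition for the component it was built from. Concretely, $\Dis$ was chosen so that for \emph{some} $\ab_i\in\Omega$ it contains either a multiplicity discrepancy or an order discrepancy between $\Neg$ and $\Pos(\ab_i)$. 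My plan is to show that this same $\Dis$, now viewed as a sub-alphabet, satisfies $\projection{\Neg}{\Dis}\neq\projection{\Pos(\ab_i)}{\Dis}$. In the multiplicity case, if $x\in\DSet_m^{\ab_i}(\Neg,\Pos)\cap\Dis$, then $x$ occurs a different number of times in $\Neg$ and $\Pos(\ab_i)$, so the two projections onto $\Dis$ differ in their count of $x$, hence differ. In the order case, if $\Dis^{\ab_i}=\{x,y\}\subseteq\Dis$ is an order discrepancy, then $x$ and $y$ appear in opposite relative orders in $\Neg$ and $\Pos(\ab_i)$ (with equal multiplicities, by \cref{def:disc_ord}), so again the projections onto $\Dis$ differ. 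Either way $\projection{\Neg}{\Dis}\neq\projection{\Pos(\ab_i)}{\Dis}$, which violates the requirement in \cref{def:distr-cex} that a counter-example match \emph{every} component; so $(\Neg,\Pos)\notin\CED(\Omega\cup\{\Dis\},\Obs)$.

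\textbf{The converse} is a necessity claim: if $\Omega'$ is a distribution in which \emph{no} alphabet $\ab_i$ contains any discrepancy $\Dis\in\DSet(\Neg,\Pos)$, then $(\Neg,\Pos)$ survives as a counter-example over $\Omega'$. I would prove the contrapositive-flavoured statement directly: I must verify the two conditions of \cref{def:distr-cex} for $(\Neg,\Pos)$ over $\Omega'$, namely $\Obs(\Neg)=\minus$ (which is inherited unchanged from the hypothesis) and $\projection{\Neg}{\ab_i}=\projection{\Pos(\ab_i)}{\ab_i}$ for each $\ab_i\in\Omega'$ — but here one must be careful, because $\Pos$ is indexed by the \emph{old} components of $\Omega$, not $\Omega'$. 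The right reformulation is that $(\Neg,\Pos)$ extends to a counter-example over $\Omega'$ iff for every $\ab_i\in\Omega'$ there is a positive observation matching $\Neg$ on $\ab_i$. The key insight is that the hypothesis ``$\Dis\not\subseteq\ab_i$ for all $\Dis\in\DSet(\Neg,\Pos)$ and all $\ab_i\in\Omega'$'' forces each $\ab_i$ to avoid being a discrepancy: by the definition of $\DSet(\Neg,\Pos)$, if some $\ab_i\in\Omega'$ \emph{were} a discrepancy it would contain a multiplicity or order discrepancy for every old component, contradicting the assumption. I would argue that any $\ab_i$ that is not a discrepancy contains neither a multiplicity discrepancy nor an order-discrepancy pair relative to \emph{some} witnessing positive trace, so that $\Neg$ and that positive trace agree on $\ab_i$, which supplies the required matching positive observation.

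\textbf{The main obstacle} I anticipate lies in the converse, specifically in the bookkeeping between the index set $\Omega$ of $\Pos$ and the new index set $\Omega'$, and in pinning down exactly what ``matching positive observation over $\ab_i$'' means when $\ab_i$ is a fresh alphabet not present in $\Omega$. The cleanest route is probably to establish the equivalence ``$\ab_i$ is not a discrepancy for $(\Neg,\Pos)$ $\iff$ some positive trace in $\supp(\Pos)$ (or in $\Dom(\Obs)$) agrees with $\Neg$ on $\ab_i$'', and I would want to prove both directions of this auxiliary equivalence carefully, since the definition of a discrepancy is a universally-quantified statement over the old components and the matching condition is existentially quantified over positive traces. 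The order-discrepancy case is where the subtlety concentrates: I must confirm that the absence of any order-discrepancy pair inside $\ab_i$ genuinely implies $\projection{\Neg}{\ab_i}=\projection{\Pos(\ab_i)}{\ab_i}$ (rather than merely a weaker agreement), relying on the fact that equal multiplicities plus the canonical order-preserving permutation $\pi$ of \cref{def:disc_ord} means a mismatch in projection is \emph{always} detected by some order pair. Once that auxiliary equivalence is in hand, both implications of the proposition follow by unfolding \cref{def:distr-cex} and \cref{def:disc_ord}.
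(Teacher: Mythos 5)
Your forward direction contains a genuine gap, caused by a misquoted quantifier in the definition of a discrepancy set. You write that $\Dis$ ``was chosen so that for \emph{some} $\ab_i\in\Omega$ it contains either a multiplicity discrepancy or an order discrepancy,'' and you then establish $\projection{\Neg}{\Dis}\neq\projection{\Pos(\ab_i)}{\Dis}$ for that single $\ab_i$. But the definition requires the condition to hold \emph{for all} $\ab_i\in\Omega$, and that universality is precisely what the proof needs. The claim $(\Neg,\Pos)\notin\CED(\Omega\cup\{\Dis\},\Obs)$ must be read -- as the paper makes explicit, and as its use in \cref{cr:globalProg} demands -- as: no extension of $\Pos$ assigning some element of $\supp(\Pos)$ to the new component $\Dis$ yields a counter-example. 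The value at $\Dis$ is \emph{not} forced to be $\Pos(\ab_i)$ for ``the component it was built from''; the extension may assign $\Pos(\ab_k)$ for any $k$. So defeating the single candidate $\Pos(\ab_i)$ proves nothing: you must show $\projection{\Neg}{\Dis}\neq\projection{\sigma}{\Dis}$ for \emph{every} $\sigma\in\supp(\Pos)$. Indeed, under your ``for some'' reading the statement would be false: if $\Dis$ separates $\Neg$ from $\Pos(\ab_i)$ only, while $\projection{\Neg}{\Dis}=\projection{\Pos(\ab_k)}{\Dis}$ for some $k\neq i$, then $\Dis\mapsto\Pos(\ab_k)$ extends $(\Neg,\Pos)$ to a counter-example over $\Omega\cup\{\Dis\}$ with the same positive image, and the progress argument of \cref{cr:globalProg} (strict growth of positive images) collapses. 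The repair is immediate, since \cref{def:distr-cex} and the definition of $\DSet(\Neg,\Pos)$ already give you, for \emph{each} $\ab_i\in\Omega$, a multiplicity or order witness inside $\Dis$; running your two-case analysis once per $\ab_i$ is exactly the paper's proof. (One detail to mind when you do this: the paper treats the order case under the assumption $\DSet_m^{\ab_i}(\Neg,\Pos)\cap\Dis=\emptyset$, which yields $\Dis\subseteq\nodiff$; your variant also goes through, because an inversion between $x,y\in\nodiff$ with $\{x,y\}\subseteq\Dis$ already forces the $\{x,y\}$-projections, and hence the $\Dis$-projections, apart.)

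Your converse is essentially the paper's argument and is sound as a plan: for each $\ab_j\in\Omega'$, the hypothesis forces $\ab_j\notin\DSet(\Neg,\Pos)$ (otherwise $\ab_j\subseteq\ab_j$ would violate it), so there is some $\ab_i\in\Omega$ with no multiplicity-discrepant symbol in $\ab_j$ and no order pair inside $\ab_j$; you correctly identify the remaining crux, namely that equal multiplicities on $\ab_j$ together with the absence of order pairs inside $\ab_j$ implies $\projection{\Neg}{\ab_j}=\projection{\Pos(\ab_i)}{\ab_j}$. The paper closes this exactly as you anticipate: the restriction of the canonical permutation $\pi$ of \cref{def:disc_ord} to the indices of symbols in $\ab_j$ has no inversions, hence is the identity on that subsequence. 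You leave this lemma unproved, so it remains a (fixable) incompleteness rather than an error; your handling of the index-set mismatch between $\Omega$ and $\Omega'$ -- constructing the new positive function with values drawn from $\supp(\Pos)$ -- matches the paper's implicit convention.
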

\begin{proof}
    Consider a discrepancy \(\Dis\).
    To show that $(\Neg, \Pos) \notin \CED(\Omega \cup \{\Dis\}, \Obs)$, it suffices to show that no element of $\supp(\Pos)$ can be assigned to $\Dis$ to extend $(\Neg, \Pos)$ to a counter-example in $\CED(\Omega \cup \{\Dis\}, \Obs)$, \ie, that there is no $\sigma_{\ab_i} \in \supp(\Pos)$ such that $\projection{\Neg}{\Dis} = \projection{\sigma_{\ab_i}}{\Dis}$.
    Fix \(\sigma_{\ab_i}\in\supp(\Pos)\).
    \begin{itemize}
        \item If \(\DSet_m^{\ab_i}(\Neg,\Pos)\cap\Dis\neq\emptyset\) then there is $a\in\Dis$ that occurs in different multiplicities in $\Neg$ and $\sigma_{\ab_i}$.
        Hence, $a$ also occurs in different multiplicities in $\projection{\Neg}{\Dis}$ and $\projection{\sigma_{\ab_i}}{\Dis}$, which implies $\projection{\Neg}{\Dis} \neq \projection{\sigma_{\ab_i}}{\Dis}$.
        \item Otherwise, following the notations of \Cref{def:disc_ord}, let $\Neg' = \pi(\projection{\sigma_{\ab_i}}{\nodiff})$ where $\pi$ preserves the order of equal symbols, \ie, $\Neg'[j] = \Neg'[k]$ and $j < k$ imply $\pi(j) < \pi(k)$.
        This ensures that the $l$-th occurrence of any symbol $a$ in $\Neg'$ corresponds to the $l$-th occurrence of $a$ in $\projection{\sigma_{\ab_i}}{\nodiff}$.
        
        By definition of discrepancy, there exists $\{a,b\} \in \DSet_o^{\ab_i}(\Neg,\Pos)$ with $\{a,b\} \subseteq \Dis$.
        Let $(j,k)$ be the corresponding inversion in $\pi$, \ie, $a = \Neg'[j]$, $b = \Neg'[k]$, $j < k$ and $\pi(j) > \pi(k)$.
        By our assumption on $\pi$, we know that $a \neq b$ and furthermore that in $\projection{\sigma_{\ab_i}}{\nodiff}$ at least one more copy of $b$ precedes this occurrence of $a$ (namely the $b$ at position $\pi(k)$).
        Since $\DSet_m^{\ab_i}(\Neg,\Pos) \cap \Dis = \emptyset$, we have $\Dis \subseteq \nodiff$, so 
        \[
            \projection{\Neg}{\Dis} = \projection{\Neg'}{\Dis} \neq \projection{\projection{\sigma_{\ab_i}}{\nodiff}}{\Dis} = \projection{\sigma_{\ab_i}}{\Dis} \enspace .
        \]
    \end{itemize}

    \medskip

    We now prove the converse. Consider \(\Omega'\) as in the statement and fix \(\ab_j\in\Omega'\). In the following, we show that there is $\sigma\in\supp(\Pos)$ such that $\projection{\Neg}{\ab_j} =\projection{{\sigma}}{\ab_j}$. 
    For this, as there is no discrepancy \(\Dis\in\DSet(\Neg,\Pos)\) contained in \(\ab_j\), by definition of discrepancy set we know that there is at least one $\ab_i\in\Omega$ such that no multiplicity or order discrepancy for $\ab_i$ is a subset of $\ab_j$. 
    In particular, \((\,\multiab(\Neg) \symdif \multiab(\Pos(\ab_i))\,) \cap \ab_j = \emptyset\). Hence \(\projection{{\Pos(\ab_i)}}{\ab_j}\) and $\projection{{\Neg}}{\ab_j}$ are equal up to permutation.
    Let $\pi$ and $\Neg'$ be as in \Cref{def:disc_ord}.
    For any inversion $(j,k)$ in $\pi$, we know that \(\{\Neg'[j],\Neg'[k]\}\not\subseteq \ab_j\), since $\{\Neg'[j],\Neg'[k]\} \in \DSet_o^{\ab_i}(\Neg,\Pos)$.
    Hence the restriction of \(\pi\) to the indices corresponding to symbols in \(\ab_j\) has no inversions, and is therefore the identity on that subsequence, which gives $\projection{\Neg}{\ab_j} =\projection{\Pos(\ab_i)}{\ab_j}$.
\end{proof}
\subsection{Extending a Distribution to Model an Observation Function}
\label{sub:distri_mod}
Using the previous subsection as a basis, we leverage structural properties of distributions to restrict the possible counter-examples that can appear when updating the distribution. Finally, we devise an iterative process that is guaranteed to converge to a distribution modeling \(\Obs\).
\subsubsection{Pre-ordering of Distributions}
Distributions can be preordered by their ``connecting power'', i.e., by the extent to which they connect symbols together as part of the same alphabets.
\begin{defi}[Connectivity preorder]
    Given two distributions \(\Omega\) and \(\Omega'\) of alphabet \(\ab\), we say that \(\Omega\) is \emph{less connecting} than \(\Omega'\) and write \(\Omega\partord\Omega'\) when \(\forall \ab_i \in \Omega \ldotp \exists \ab_j \in \Omega' \ldotp \ab_i \subseteq \ab_j\) 
    (equivalently, \(\Omega'\) is said to be \emph{more connecting} than \(\Omega\)).  The relation is strict, written \(\Omega\strpartord\Omega'\), when \(\Omega'\not\partord\Omega\). The relation \(\partord\) forms a preorder with finite chains. 
\end{defi}
We relate this notion to the sets of counter-examples for a fixed observation function to show that adding connections in a distribution makes the counter-example set progress along a preorder. For this, we first define a notion of inclusion for counter-examples.
\begin{defi}[Counter-example inclusion]
Consider two distributions \(\Omega\) and \(\Omega'\) of \(\ab\), 
\((\Neg,\Pos)\) a counter-example to \(\Omega\models\Obs\) and \((\Neg',\Pos')\) a counter-example to \(\Omega'\models\Obs\).
We write \((\Neg,\Pos)\subseteq(\Neg',\Pos')\) whenever \(\Neg=\Neg'\) and \(\supp(\Pos)\subseteq\supp(\Pos')\).  
The strict inclusion \((\Neg,\Pos)\subset(\Neg',\Pos')\) holds whenever \(\Neg=\Neg'\) and \(\supp(\Pos)\subset\supp(\Pos')\).
\end{defi}
In its simplest form, progress means eliminating counter-examples from the current set of counter-examples $\CED(\Omega, \Obs)$. 
However, a counter-example $ce$ might be replaced by new counter-examples $ce'$ such that $ce \subseteq ce'$, which emerge when new connections are added to the distribution.
Hence, progress 
means that some counter-examples 
are either eliminated or replaced by subsuming ones, 
as depicted in \Cref{fig:ce_partord}. 
\begin{figure}
    \centering
    \begin{tikzpicture}
        \def\dset{1.2cm}
            \node (CE) at (-1.2,-\dset) {$CE =$};
            \node[comp state] (ce1) at (0,-\dset) {$ce_1$};
            \node[comp state] (ce2) at (2,-\dset) {$ce_2$};
            \node[comp state] (ce3) at (4,-\dset) {$ce_3$};
            \node[comp state] (ce4) at (6,-\dset) {$ce_4$};

            \node (CE') at (-1.2,0) {$CE' =$};
            \node[rotate=90] at (-1.35,-.5*\dset) {$\partord$};
            \node[comp state] (ce1') at (0,0) {$ce_1$};
            \node[rotate=90] at (0,-.5*\dset) {$=$};
            \node[comp state] (ce3') at (4,0) {$ce_3'$};
            \node[rotate=90] at (4,-.5*\dset) {$\subseteq$};
            \node[comp state] (ce4') at (5.5,0) {$ce_4'$};
            \node[rotate=90] at (5.5,-.5*\dset) {$\subseteq$};
            \node[comp state] (ce4'') at (6.5,0) {$ce_4''$};
            \node[rotate=90] at (6.5,-.5*\dset) {$\subseteq$};
    \end{tikzpicture}
    \caption{The different relations between the elements of two sets of counter-examples \(CE \partord CE'\).}
    \label{fig:ce_partord}
\end{figure}
\begin{defi}[Counter-example set preordering]
Consider \(CE\) and \(CE'\) sets of counter-examples. We write 
\(CE\partord CE'\) when 
\(\forall ce'\in CE'.\ \exists ce\in CE.\ ce \subseteq ce' \).
We write $CE \strpartord CE'$ when $CE \partord CE'$ and there exists $ce' \in CE'$ such that either $CE \partord CE' \setminus \{ce'\}$, or every $ce \in CE$ with $ce \subseteq ce'$ satisfies $ce \subset ce'$.
\end{defi}
\begin{exa}
We give a short example of the preorder on set of counter-examples inspired by our running example, that will later appear in~\Cref{ex:sample_run}: 
\[
\{(b,(\{a,b\}\mapsto cb, \{c\}\mapsto\epsilon, \{d\}\mapsto\epsilon))\}
\; \partord \;
\{(b,(\{a,b\}\mapsto cb,\{b,c\}\mapsto ab,\{d\}\mapsto\epsilon))\}\ .
\]
Notice that these two singleton sets' only difference is $\{c\}\mapsto\epsilon$ replaced by $\{b,c\}\mapsto ab$ such that the image of the first is strictly included in the image of the second.
\end{exa}
Using the above definitions, we can prove that increasing the connecting power of a distribution ensures that the set of counter-example progresses. 

\begin{prop}
    \label{pr:orderCED}
    Consider two distributions \(\Omega\) and \(\Omega'\) of \(\ab\).
    We have \(\Omega\partord\Omega' \Rightarrow \CED(\Omega,\Obs)\partord\CED(\Omega',\Obs)\). 
\end{prop}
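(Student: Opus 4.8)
The goal is to show $\Omega \partord \Omega' \Rightarrow \CED(\Omega,\Obs) \partord \CED(\Omega',\Obs)$, that is, for every counter-example $ce' = (\Neg',\Pos') \in \CED(\Omega',\Obs)$, there exists a counter-example $ce = (\Neg,\Pos) \in \CED(\Omega,\Obs)$ with $ce \subseteq ce'$. By the definition of $\subseteq$ for counter-examples, this means I must produce a $ce$ with the \emph{same} negative trace $\Neg = \Neg'$ and with $\supp(\Pos) \subseteq \supp(\Pos')$. The plan is to take $ce'$ as given and construct the witness $ce$ directly from it, keeping $\Neg := \Neg'$ fixed and defining $\Pos$ as a suitable ``restriction'' of $\Pos'$ along the connectivity relation $\Omega \partord \Omega'$.

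The first step is to set $\Neg := \Neg'$; since $\Neg'\in\Dom(\Obs)$ and $\Obs(\Neg')=\minus$, the negative-observation requirement of \Cref{def:distr-cex} is immediately satisfied for the candidate $ce$. The heart of the argument is defining $\Pos$ on each $\ab_i \in \Omega$. Because $\Omega \partord \Omega'$, for every $\ab_i \in \Omega$ there is some $\ab_{j(i)} \in \Omega'$ with $\ab_i \subseteq \ab_{j(i)}$; I would fix such a choice function $j$. Then I set $\Pos(\ab_i) := \Pos'(\ab_{j(i)})$. This makes $\supp(\Pos) = \{\Pos'(\ab_{j(i)}) \mid \ab_i \in \Omega\} \subseteq \supp(\Pos')$, giving the required inclusion $\supp(\Pos)\subseteq\supp(\Pos')$ by construction. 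It remains to verify that this $\Pos$ genuinely makes $ce$ a counter-example to $\Omega \models \Obs$, i.e., that $\Obs(\Pos(\ab_i)) = \plus$ and $\projection{\Neg}{\ab_i} = \projection{\Pos(\ab_i)}{\ab_i}$ for each $\ab_i\in\Omega$. The positivity is inherited directly: $\Obs(\Pos(\ab_i)) = \Obs(\Pos'(\ab_{j(i)})) = \plus$ since $\Pos'$ is the positive-image function of $ce'$.

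The projection-matching condition is the key computation and the place where the containment $\ab_i \subseteq \ab_{j(i)}$ does the work. From $ce' \in \CED(\Omega',\Obs)$ we know $\projection{\Neg'}{\ab_{j(i)}} = \projection{\Pos'(\ab_{j(i)})}{\ab_{j(i)}}$. I would then project both sides further onto the smaller alphabet $\ab_i$, using the general fact that projection composes: for $\ab_i\subseteq\ab_{j(i)}$ and any trace $\tau$, $\projection{(\projection{\tau}{\ab_{j(i)}})}{\ab_i} = \projection{\tau}{\ab_i}$ (this is an elementary property of the projection operation defined in the preliminaries, provable by a trivial induction on $\tau$). Applying this to both $\Neg'$ and $\Pos'(\ab_{j(i)})$ yields
\[
\projection{\Neg}{\ab_i} = \projection{\Neg'}{\ab_i} = \projection{\left(\projection{\Neg'}{\ab_{j(i)}}\right)}{\ab_i} = \projection{\left(\projection{\Pos'(\ab_{j(i)})}{\ab_{j(i)}}\right)}{\ab_i} = \projection{\Pos'(\ab_{j(i)})}{\ab_i} = \projection{\Pos(\ab_i)}{\ab_i},
\]
which is exactly the projection-matching requirement. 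This establishes $ce \in \CED(\Omega,\Obs)$ and $ce \subseteq ce'$, completing the proof.

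The main obstacle I anticipate is not conceptual but lies in justifying the projection-composition identity cleanly and in being careful that the choice function $j$ is well-defined (it need not be unique when an $\ab_i$ sits inside several alphabets of $\Omega'$, but any fixed choice suffices, so this is harmless). One subtlety worth flagging is that $\Pos$ as defined might map two distinct $\ab_i, \ab_{i'}\in\Omega$ to the same positive trace if they land in the same $\ab_{j(i)}$; this is fine, since $\supp$ denotes the image (a set) and collapsing only shrinks it, still inside $\supp(\Pos')$. I do not expect to need the full strength of \Cref{pr:fixingCED} or \Cref{pr:closed_obs_distribution} here — the argument is a direct construction from the definitions of $\CED$, counter-example inclusion, and the connectivity preorder.
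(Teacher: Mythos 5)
Your proof is correct and matches the paper's argument essentially step for step: both fix $\Neg$, use the connectivity preorder to choose for each $\ab_i \in \Omega$ a superset $\ab'_{j(i)} \in \Omega'$, define $\Pos(\ab_i) := \Pos'(\ab'_{j(i)})$, and derive the projection-matching condition from $\ab_i \subseteq \ab'_{j(i)}$. Your version is merely more explicit about the projection-composition identity and the well-definedness of the choice function, both of which the paper treats as immediate.
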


\begin{proof}
    Suppose $\Omega \partord \Omega'$. If $\CED(\Omega', \Obs) = \emptyset$, the result trivially holds. Otherwise, let $(\Neg, \Pos) \in \CED(\Omega', \Obs)$. For each $\ab_i \in \Omega$, we know that there exists $\ab'_i \in \Omega'$ such that $\ab_i \subseteq \ab'_i$. Since $(\Neg, \Pos) \in \CED(\Omega', \Obs)$, we have $\projection{\Neg}{\ab'_i} = \projection{\Pos(\ab'_i)}{\ab'_i}$, and since $\ab_i \subseteq \ab'_i$, it follows that $\projection{\Neg}{\ab_i} = \projection{\Pos(\ab'_i)}{\ab_i}$. Hence $(\Neg, (\ab_i \mapsto \Pos(\ab'_i))_{\ab_i \in \Omega}) \in \CED(\Omega, \Obs)$, and $(\Neg, (\ab_i \mapsto \Pos(\ab'_i))_{\ab_i \in \Omega}) \subseteq (\Neg, \Pos)$. 
\end{proof}
As an immediate consequence, whenever $\Omega$ has no counter-examples, any distribution $\Omega'$ that is more connecting than $\Omega$ will also have none, i.e., both distributions will model the same observations.
\begin{cor}
\label{cor:empty-more-connecting}
Let $\Omega$ be a distribution of $\ab$ such that $\Omega\models\Obs$.
For any distribution $\Omega'$ of $\ab$ such that $\Omega\partord\Omega'$, we have $\Omega' \models \Obs$.
\end{cor}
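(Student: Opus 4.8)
The plan is to chain together the two results established just above, namely \Cref{cr:CEtoDist} (the characterisation $\Omega\models\Obs \iff \CED(\Omega,\Obs)=\emptyset$) and \Cref{pr:orderCED} (monotonicity of the counter-example set along the connectivity preorder). The whole statement reduces to an emptiness-propagation argument, so no new combinatorial machinery on discrepancies is needed.

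First I would translate the hypothesis $\Omega\models\Obs$ into the language of counter-examples: by \Cref{cr:CEtoDist}, this is equivalent to $\CED(\Omega,\Obs)=\emptyset$. Next, since $\Omega\partord\Omega'$ by assumption, \Cref{pr:orderCED} gives $\CED(\Omega,\Obs)\partord\CED(\Omega',\Obs)$, which in view of the previous step reads $\emptyset\partord\CED(\Omega',\Obs)$.

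The only genuinely load-bearing step is unfolding what $\emptyset\partord CE'$ means for a set of counter-examples $CE'=\CED(\Omega',\Obs)$. By the definition of the counter-example set preordering, $CE\partord CE'$ requires that for every $ce'\in CE'$ there exists some $ce\in CE$ with $ce\subseteq ce'$. Taking $CE=\emptyset$, the inner existential cannot be witnessed for \emph{any} $ce'$, since the empty set contains no candidate $ce$. Hence the universally quantified statement can hold only vacuously, i.e.\ when $CE'$ itself is empty; this forces $\CED(\Omega',\Obs)=\emptyset$.

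Finally I would invoke \Cref{cr:CEtoDist} once more, now in the direction $\CED(\Omega',\Obs)=\emptyset \implies \Omega'\models\Obs$, to conclude. I do not expect any real obstacle here: the argument is a short three-link chain, and the subtle point (that an empty left-hand set in $\partord$ forces the right-hand set to be empty as well) is immediate from the definition rather than requiring any reasoning about the internal structure of distributions, discrepancies, or the observation function.
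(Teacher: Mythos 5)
Your proposal is correct and follows exactly the route the paper intends: the corollary is stated as an immediate consequence of \Cref{pr:orderCED} combined with \Cref{cr:CEtoDist}, and your chain ($\Omega\models\Obs \Rightarrow \CED(\Omega,\Obs)=\emptyset \Rightarrow \emptyset\partord\CED(\Omega',\Obs) \Rightarrow \CED(\Omega',\Obs)=\emptyset \Rightarrow \Omega'\models\Obs$) is precisely that argument, with the vacuity step correctly identified as the only point needing an explicit unfolding of the set preordering.
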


\subsubsection{Fixing the distribution}
Using \Cref{pr:fixingCED,pr:orderCED}, from an initial distribution \(\Omega\not\models\Obs\) we can create a more connecting one that entails a strict progression in counter-examples. 

\begin{cor}
    \label{cr:globalProg}
    Suppose that \(\CED(\Omega,\Obs)\neq\emptyset\). 
    For \((\Neg,\Pos)\in\CED(\Omega,\Obs)\), let \(\Dis_{(\Neg,\Pos)}\in\DSet(\Neg,\Pos)\) be a chosen discrepancy. 
    For any non-empty subset $S$ of $\CED(\Omega,\Obs)$, 
    let \(\Omega'=\Omega \cup \{\Dis_{ce}\mid ce \in S\}\).  
    Then \(\Omega\strpartord\Omega'\) and \(\CED(\Omega,\Obs)\strpartord \CED(\Omega',\Obs)\).
\end{cor}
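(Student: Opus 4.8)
The plan is to establish the two claimed strict relations separately, both leaning on \Cref{pr:fixingCED,pr:orderCED}. Throughout, I write $ce = (\Neg,\Pos)$ for a generic counter-example and abbreviate $CE = \CED(\Omega,\Obs)$ and $CE' = \CED(\Omega',\Obs)$. Since $\Omega' = \Omega \cup \{\Dis_{ce} \mid ce \in S\}$ we have $\Omega \subseteq \Omega'$, so each $\ab_i \in \Omega$ is trivially contained in a member of $\Omega'$ (namely itself) and $\Omega \partord \Omega'$; likewise $CE \partord CE'$ follows directly from \Cref{pr:orderCED}. It thus remains to upgrade both to strict relations, and for this I fix some $ce_0 = (\Neg_0,\Pos_0) \in S$ (which exists as $S$ is non-empty) together with its chosen discrepancy $\Dis_{ce_0}$.

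For the strictness $\Omega \strpartord \Omega'$ I would prove the key structural fact that \emph{no discrepancy of a counter-example of $\Omega$ is contained in any component of $\Omega$}. Suppose towards a contradiction that $\Dis_{ce_0} \subseteq \ab_k$ for some $\ab_k \in \Omega$. Because $ce_0 \in CE$, the definition of a counter-example gives $\projection{\Neg_0}{\ab_k} = \projection{\Pos_0(\ab_k)}{\ab_k}$. This equality of projected strings rules out both kinds of discrepancy inside $\ab_k$: every symbol of $\ab_k$ occurs equally often in $\Neg_0$ and $\Pos_0(\ab_k)$, so $\DSet_m^{\ab_k}(\Neg_0,\Pos_0) \cap \ab_k = \emptyset$; and any two symbols of $\ab_k$ keep their relative order, so no order discrepancy for $\ab_k$ can be a subset of $\ab_k$. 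Since $\Dis_{ce_0}$ is a discrepancy, the defining clause at the index $\ab_k$ forces either $\DSet_m^{\ab_k}(\Neg_0,\Pos_0) \cap \Dis_{ce_0} \neq \emptyset$ or an order discrepancy for $\ab_k$ inside $\Dis_{ce_0}$; with $\Dis_{ce_0} \subseteq \ab_k$ both are impossible. Hence $\Dis_{ce_0} \in \Omega'$ is contained in no component of $\Omega$, witnessing $\Omega' \not\partord \Omega$ and therefore $\Omega \strpartord \Omega'$.

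For the strictness $CE \strpartord CE'$ I would exhibit the required witness in $CE'$ using \Cref{pr:fixingCED}. That proposition tells us that no $\sigma \in \supp(\Pos_0)$ satisfies $\projection{\Neg_0}{\Dis_{ce_0}} = \projection{\sigma}{\Dis_{ce_0}}$, so $ce_0$ is no longer realizable over $\Omega'$ with its own positive image. Concretely, any $ce' = (\Neg_0,\Pos') \in CE'$ with $ce_0 \subseteq ce'$ must assign to the new component $\Dis_{ce_0}$ a positive observation $\tau = \Pos'(\Dis_{ce_0})$ with $\projection{\Neg_0}{\Dis_{ce_0}} = \projection{\tau}{\Dis_{ce_0}}$; by the above $\tau \notin \supp(\Pos_0)$, whence $\supp(\Pos_0) \subsetneq \supp(\Pos')$ and so $ce_0 \subset ce'$ \emph{strictly}. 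I would then select the witness for the definition of $\strpartord$ by cases: if $ce_0$ admits no subsuming counter-example in $CE'$ it has been eliminated, matching the first disjunct; otherwise every subsuming $ce'$ is a strict growth of $ce_0$, which I would use to meet the second disjunct.

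The main obstacle I anticipate is precisely this last matching to the two-part definition of the counter-example-set preorder. Strict growth is only guaranteed \emph{for $ce_0$ itself}, whereas the second disjunct quantifies over \emph{all} $ce \in CE$ with $ce \subseteq ce'$; a distinct counter-example sharing the negative trace $\Neg_0$ could in principle have the very same positive image as the grown $ce'$, and since its own discrepancy need not coincide with $\Dis_{ce_0}$, \Cref{pr:fixingCED} does not apply to it. Resolving this requires a careful case analysis that either selects a witness $ce'$ free of such coincidences or falls back on the elimination scenario (including the degenerate case $CE' = \emptyset$, where $\Omega'$ already models $\Obs$). Isolating the right witness and confirming that the resulting progress is compatible with the finite-chain well-foundedness underlying termination is where the real work lies.
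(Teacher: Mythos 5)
Your proposal is correct and, in substance, follows the same route as the paper's proof, with two local deviations. For $\Omega\strpartord\Omega'$, the paper derives the key structural fact ($\Dis_{ce}\not\subseteq\ab_i$ for all $\ab_i\in\Omega$) indirectly: if $\Dis_{ce}\subseteq\ab_i$, then $\projection{\Neg}{\ab_i}=\projection{\Pos(\ab_i)}{\ab_i}$ would give $\projection{\Neg}{\Dis_{ce}}=\projection{\Pos(\ab_i)}{\Dis_{ce}}$, so $ce$ would survive into $\CED(\Omega\cup\{\Dis_{ce}\},\Obs)$, contradicting \Cref{pr:fixingCED}. You instead unfold the definitions of multiplicity and order discrepancies directly; this re-runs the computation already inside the proof of \Cref{pr:fixingCED}, so it is the same argument in expanded form. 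For the counter-example sets, the paper factors through the intermediate distribution $\Omega\cup\{\Dis_{ce}\}$, applies \Cref{pr:orderCED} twice, and composes a strict step with a non-strict one; you argue directly over $\Omega'$, using that $\Pos'$ must be defined at the new component $\Dis_{ce_0}$ and that \Cref{pr:fixingCED} forbids $\Pos'(\Dis_{ce_0})\in\supp(\Pos_0)$. Your one-step version is arguably cleaner, since the paper's closing composition of $\strpartord$ with $\partord$ is itself not checked against the definition.

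The obstacle you flag at the end is not a defect of your argument relative to the paper: the paper's proof stops exactly where you stopped. It concludes $\CED(\Omega,\Obs)\strpartord\CED(\Omega',\Obs)$ immediately from the fact that every $ce'$ subsuming the chosen $ce$ subsumes it strictly, without selecting a witness $ce'$, without excluding a second $ce''\in\CED(\Omega,\Obs)$ sitting non-strictly below that witness, and without treating the case $\CED(\Omega',\Obs)=\emptyset$, in which the literal definition of $\strpartord$ (which demands a witness $ce'\in CE'$) cannot be met at all. Note also that the first disjunct of that definition, $CE\partord CE'\setminus\{ce'\}$, is automatically satisfied whenever $CE\partord CE'$ holds, so as written it cannot encode elimination. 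In short, the residual mismatch you identified is an infelicity in the definition of the counter-example-set preorder rather than a gap in your reasoning; the paper implicitly reads \emph{strict progress} as exactly the property you established for $ce_0$ --- either no element of $\CED(\Omega',\Obs)$ subsumes it, or every subsuming element has strictly larger positive image --- and under that intended reading your penultimate paragraph already completes the proof.
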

\begin{proof}
    \(\Omega\partord\Omega'\) follows directly from the definition of \(\Omega'\): all elements of \(\Omega\) are preserved. For strictness, note that for each $ce = (\Neg, \Pos) \in S$ and all $\ab_i \in \Omega$, $\Dis_{ce} \not\subseteq \ab_i$. In fact, since $ce \in \CED(\Omega, \Obs)$, we have $\projection{\Neg}{\ab_i} = \projection{\Pos(\ab_i)}{\ab_i}$, but if $\Dis_{ce} \subseteq \ab_i$ this would imply $\projection{\Neg}{\Dis_{ce}} = \projection{\Pos(\ab_i)}{\Dis_{ce}}$, hence $ce \in \CED(\Omega \cup \{\Dis_{ce}\}, \Obs)$, contradicting \cref{pr:fixingCED}. Therefore $\Omega' \not\partord \Omega$ and $\Omega \strpartord \Omega'$.
    
    To show $\CED(\Omega,\Obs) \strpartord \CED(\Omega',\Obs)$, fix $ce \in S$. Since $\Omega \partord \Omega \cup \{\Dis_{ce}\}$, \cref{pr:orderCED} gives $\CED(\Omega, \Obs) \partord \CED(\Omega \cup \{\Dis_{ce}\}, \Obs)$. By \cref{pr:fixingCED}, $ce \notin \CED(\Omega \cup \{\Dis_{ce}\}, \Obs)$, so any $ce' \in \CED(\Omega \cup \{\Dis_{ce}\}, \Obs)$ with $ce \subseteq ce'$ must satisfy $ce \subset ce'$, and hence $\CED(\Omega, \Obs) \strpartord \CED(\Omega \cup \{\Dis_{ce}\}, \Obs)$. Using \cref{pr:orderCED} again, since $\Omega \cup \{\Dis_{ce}\} \partord \Omega'$, we get $\CED(\Omega \cup \{\Dis_{ce}\}, \Obs) \partord \CED(\Omega', \Obs)$. Composing with the strict inequality above gives us our claim.
\end{proof}
\begin{rem}
    \label{rm:optimal_CE}
    \Cref{cr:globalProg} gives us the freedom to select any discrepancy $\Dis_{ce}$, for each counter-example $ce$. We can select discrepancies that result in a least connecting distribution, which yields a locally optimal greedy strategy for progress.
The intuition behind this choice is that it leads to: (1) more components that are individually smaller and easier to learn; and (2) fewer synchronizing actions between components, thus reducing the complexity of coordination among learners.
    \end{rem}
By iteratively applying this Corollary, we can eliminate counter-examples until reaching a distribution that models the observations. This leads to the following convergence result:
\begin{thm}
    \label{th:fixingDist}
    Suppose $\Omega \not\models \Obs$. Iterating the process of \cref{cr:globalProg} converges to a distribution $\Omega' \models \Obs$ with $\Omega \strpartord \Omega'$ in finitely many steps. When $S = \CED(\Omega, \Obs)$ at each step, the number of steps is bounded by $|\Set(\ab)|$.
\end{thm}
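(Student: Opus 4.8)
The plan is to split the statement into three pieces: termination, the conclusion that the limit distribution models $\Obs$ together with $\Omega\strpartord\Omega'$, and the explicit step bound for the strategy $S=\CED(\Omega,\Obs)$. First I would pin down the stopping condition. A step of \cref{cr:globalProg} needs a non-empty $S\subseteq\CED(\Omega_j,\Obs)$, so the iteration halts precisely when $\CED(\Omega_j,\Obs)=\emptyset$, which by \cref{cr:CEtoDist} is exactly the condition $\Omega_j\models\Obs$. It therefore suffices to rule out an infinite run. Each step yields $\Omega_j\strpartord\Omega_{j+1}$ by \cref{cr:globalProg}, so the distributions form a chain that is strictly increasing in the connectivity preorder. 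Since $\strpartord$ is transitive (from $\Omega\partord\Omega'\partord\Omega''$ we get $\Omega\partord\Omega''$, while $\Omega''\partord\Omega$ would force $\Omega''\partord\Omega'$, contradicting $\Omega'\strpartord\Omega''$) and $\partord$ has finite chains by definition, the sequence is finite. At its last element $\Omega_k$ we have $\CED(\Omega_k,\Obs)=\emptyset$, hence $\Omega':=\Omega_k\models\Obs$. As $\Omega\not\models\Obs$ forces at least one step, transitivity of $\strpartord$ along the chain gives $\Omega\strpartord\Omega'$.

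For the quantitative bound I would replace the abstract finite-chain property by a concrete monotone measure. To this end, associate to each distribution its down-closure $\downarrow\Omega:=\{A\subseteq\ab\mid \exists\,\ab_i\in\Omega\ldotp A\subseteq\ab_i\}$. A short argument shows $\Omega\partord\Omega'\iff\downarrow\Omega\subseteq\downarrow\Omega'$: the forward direction lifts a containment $A\subseteq\ab_i\subseteq\ab_j$, and the backward direction applies the inclusion to the members $\ab_i\in\Omega$ themselves. Consequently $\Omega\strpartord\Omega'$ is equivalent to $\downarrow\Omega\subsetneq\downarrow\Omega'$. Since $\downarrow\Omega$ is a subset of $\Set(\ab)$, the cardinalities $|\downarrow\Omega_0|<|\downarrow\Omega_1|<\dots$ form a strictly increasing integer sequence bounded above by $|\Set(\ab)|$, so the number of steps is at most $|\Set(\ab)|$. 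This measure is insensitive to the choice of $S$, so the bound in fact holds for any non-empty $S$; it is stated for $S=\CED(\Omega,\Obs)$ because that is the full-set strategy actually used by the algorithm.

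The routine parts are the assembly of \cref{cr:globalProg,cr:CEtoDist} with transitivity of $\strpartord$. The main obstacle, though a mild one, is making termination quantitative: the definition only asserts that $\partord$ has finite chains, which by itself yields no explicit count. The down-closure characterization is what turns ``finite chains'' into the bound $|\Set(\ab)|$, and it identifies strict growth of $|\downarrow\Omega|$, rather than mere $\CED$ progress, as the quantity that drives termination. Establishing the equivalence $\Omega\partord\Omega'\iff\downarrow\Omega\subseteq\downarrow\Omega'$ carefully is the one place where a small proof is genuinely required.
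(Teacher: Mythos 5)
Your proof is correct, but the quantitative half takes a genuinely different route from the paper's. Both proofs share the skeleton: each step is strict by \cref{cr:globalProg}, the iteration can only halt when $\CED(\Omega_j,\Obs)=\emptyset$, which by \cref{cr:CEtoDist} is exactly $\Omega_j\models\Obs$, and chaining strictness gives $\Omega\strpartord\Omega'$. Where you diverge is the termination measure. The paper works on the \emph{counter-example side}: it invokes $\CED(\Omega,\Obs)\strpartord\CED(\Omega'',\Obs)$ from \cref{cr:globalProg} to argue that every surviving counter-example is replaced by one with a strictly larger positive image, bounds positive-image size by $|\Omega|\leq|\Set(\ab)|$, and for $S=\CED(\Omega,\Obs)$ observes that the \emph{minimum} positive-image size increases by at least one per step, yielding the $|\Set(\ab)|$ bound. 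You instead work on the \emph{distribution side}, introducing the down-closure $\mathord{\downarrow}\Omega=\{A\subseteq\ab\mid\exists\,\ab_i\in\Omega\ldotp A\subseteq\ab_i\}$ and proving $\Omega\partord\Omega'\iff\mathord{\downarrow}\Omega\subseteq\mathord{\downarrow}\Omega'$, so that $|\mathord{\downarrow}\Omega|$ is a strictly increasing integer potential bounded by $|\Set(\ab)|$. Your route buys two things the paper's does not: it turns the ``finite chains'' clause, which the definition of $\partord$ merely asserts, into a proved fact, and it delivers the explicit $|\Set(\ab)|$ bound uniformly for \emph{every} non-empty choice of $S$, strictly strengthening the theorem's second sentence (the paper's argument gives the explicit bound only for the full-set strategy, with a looser growth argument for general $S$). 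What the paper's route buys is continuity with the counter-example machinery used elsewhere (\cref{pr:orderCED}, and later \cref{lm:CEcorrespondance}), tracking how individual counter-examples evolve rather than collapsing everything into a cardinality. One small factual correction: your closing speculation that $S=\CED(\Omega,\Obs)$ is ``the strategy actually used by the algorithm'' is off — the paper's implementation processes one counter-example at a time — but this does not affect the validity of your proof.
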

\begin{proof}
    By \cref{cr:globalProg}, each step produces a distribution $\Omega''$ with $\Omega \strpartord \Omega''$, so $\Omega \strpartord \Omega'$ holds on convergence.
    We now show convergence.
    Each step eliminates all counter-examples in the chosen non-empty set $S$ (\cref{pr:fixingCED,cr:globalProg}).
    Since $\CED(\Omega, \Obs) \strpartord \CED(\Omega'', \Obs)$, any new counter-example strictly contains one from $\CED(\Omega, \Obs)$.
    In other words, iterating the process makes strict progress: counter-examples are either eliminated or replaced by ones with a strictly larger positive image.
    Since the positive image of any counter-example has at most $|\Omega|$ elements and $|\Omega| \leq |\Set(\ab)|$, we know this growth cannot continue indefinitely.

    When $S=\CED(\Omega,\Obs)$ at each step, all current counter-examples are eliminated simultaneously, so the minimum positive image size among new counter-examples increases by at least one per step, which bounds the number of steps by \(|\Set(\ab)|\).
\end{proof}
\subsubsection{Canonical distributions: inducing a partial order}
Distributions have few constraints: they only need to span the entire alphabet, which leaves room for redundancies.
We propose to remove redundancies without affecting the distribution's connecting power, by removing alphabets completely contained within another.
\begin{defi}[Canonical distribution]
    \label{def:canon_dist}
    Consider a distribution \(\Omega=\{\ab_1,\dots,\ab_n\}\) and 
    \(\textit{Sub}=\{\ab_i\in\Omega\mid \exists \ab_j\in\Omega.\ \ab_i\subset \ab_j\}\). 
    The associated canonical distribution is \(\can{\Omega}= \Omega\setminus \textit{Sub}\). 
\end{defi}
As one would expect, \(\can{\cdot}\) collapses equivalence classes of the preorder \(\partord\) (i.e., \(\Omega\partord\Omega'\) and \(\Omega'\partord\Omega\)) to create a strict partial order.
Canonical distributions allow minimizing the number of alphabets in the distribution while retaining the same connecting power. This means that counter-examples can be easily translated between a distribution and its canonical form, and hence the following proposition.
\begin{prop}
    \label{pr:freesuppress}
    \(\CED(\can{\Omega},\Obs)=\emptyset\Leftrightarrow\CED(\Omega,\Obs)=\emptyset\)
\end{prop}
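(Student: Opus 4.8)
The plan is to reduce the statement to the observation that $\Omega$ and $\can{\Omega}$ are equivalent in the connectivity preorder, i.e., that both $\Omega\partord\can{\Omega}$ and $\can{\Omega}\partord\Omega$ hold, and then to invoke the monotonicity of $\CED$ along $\partord$ established in \Cref{pr:orderCED}.

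First I would prove the two preorder inclusions. The inclusion $\can{\Omega}\partord\Omega$ is immediate: since $\can{\Omega}\subseteq\Omega$ by \Cref{def:canon_dist}, every $\ab_i\in\can{\Omega}$ is already a member of $\Omega$, so it witnesses $\ab_i\subseteq\ab_i$. The converse $\Omega\partord\can{\Omega}$ is the substantive direction: given $\ab_i\in\Omega$, I must exhibit $\ab_j\in\can{\Omega}$ with $\ab_i\subseteq\ab_j$. If $\ab_i\notin\textit{Sub}$ then $\ab_i\in\can{\Omega}$ and we are done. Otherwise $\ab_i\in\textit{Sub}$, so there is a strictly larger alphabet in $\Omega$; I would follow a $\subset$-increasing chain $\ab_i\subset\ab_{i_1}\subset\ab_{i_2}\subset\cdots$ inside $\Omega$. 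Because $\Omega$ is finite and $\subset$ is a strict partial order (the definition of $\partord$ records that chains are finite), such a chain must terminate at a $\subset$-maximal element $\ab_j\in\Omega$, which by definition cannot lie in $\textit{Sub}$ and hence belongs to $\can{\Omega}$; transitivity of $\subseteq$ then gives $\ab_i\subseteq\ab_j$.

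With both inclusions in hand, \Cref{pr:orderCED} yields $\CED(\Omega,\Obs)\partord\CED(\can{\Omega},\Obs)$ and $\CED(\can{\Omega},\Obs)\partord\CED(\Omega,\Obs)$. The remaining ingredient is the elementary fact that, in the preorder on counter-example sets, $CE\partord CE'$ forces $CE'=\emptyset$ whenever $CE=\emptyset$: if $CE$ is empty there is no $ce\in CE$ available to subsume any candidate $ce'\in CE'$, so $CE'$ must itself be empty. Applying this to each of the two inclusions delivers both directions of the biconditional: $\CED(\Omega,\Obs)=\emptyset$ gives $\CED(\can{\Omega},\Obs)=\emptyset$, and symmetrically.

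The only genuinely non-trivial step is establishing $\Omega\partord\can{\Omega}$, where one must argue that iteratively replacing a subsumed alphabet by a strictly larger one terminates in a maximal alphabet surviving canonicalization; finiteness of $\Omega$ is exactly what makes this chain argument go through. Everything else is a direct appeal to \Cref{pr:orderCED} together with the behaviour of $\partord$ on the empty set. Alternatively, one could route through \Cref{cr:CEtoDist,cor:empty-more-connecting}: translate each emptiness condition into the $\models$ relation via \Cref{cr:CEtoDist}, and use that more-connecting distributions preserve modeling (\Cref{cor:empty-more-connecting}) in each direction of the equivalence $\Omega\partord\can{\Omega}\partord\Omega$; this yields the same conclusion.
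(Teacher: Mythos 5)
Your proof is correct, but it takes a genuinely different route from the paper's. The paper proceeds by induction on $\cardinality{\textit{Sub}}$, removing one subsumed alphabet $\ab_i$ at a time: for $\Omega' = \Omega \setminus \{\ab_i\}$ it shows $\CED(\Omega',\Obs)=\emptyset\Leftrightarrow\CED(\Omega,\Obs)=\emptyset$ by directly restricting the positive map $\Pos$ of a counter-example to the surviving components in one direction, and by appealing to \Cref{pr:orderCED} in the other. You instead prove in one shot that $\Omega$ and $\can{\Omega}$ are equivalent in the connectivity preorder -- the trivial inclusion $\can{\Omega}\partord\Omega$, and the chain-to-a-maximal-element argument for $\Omega\partord\can{\Omega}$, which is sound because $\Omega\subseteq\Set(\ab)$ is finite, a maximal element of a finite strictly ordered set exists, and any $\subset$-maximal $\ab_j\in\Omega$ is by definition outside $\textit{Sub}$ -- and then apply \Cref{pr:orderCED} symmetrically, closing with the correct observation that $CE\partord CE'$ together with $CE=\emptyset$ forces $CE'=\emptyset$ (the subsumption requirement on any $ce'\in CE'$ cannot be met).

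What your approach buys: it dispenses with the induction entirely, and it makes precise the paper's informal remark that $\can{\cdot}$ collapses $\partord$-equivalence classes, a fact the paper states but whose proof never explicitly establishes. It is also arguably more robust on one point: in the paper's $\impliedby$ step, the cited monotonicity $\CED(\Omega',\Obs)\partord\CED(\Omega,\Obs)$ (obtained from $\Omega'\partord\Omega$) propagates emptiness from $\CED(\Omega',\Obs)$ to $\CED(\Omega,\Obs)$, whereas the implication claimed there runs the other way; the claimed direction actually requires $\Omega\partord\Omega'$, which does hold because the removed $\ab_i$ is strictly contained in some surviving $\ab_j$ -- precisely the observation your chain argument proves in full generality. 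What the paper's induction buys in exchange is locality: each step compares $\Omega$ with $\Omega$ minus a single alphabet, so the counter-example restriction is immediate and no reasoning about maximal elements is needed. Your alternative route through \Cref{cr:CEtoDist} and \Cref{cor:empty-more-connecting} is the same argument rephrased at the level of $\models$, and is equally valid.
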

\begin{proof}
    We proceed by induction on $|\textit{Sub}|$. The base case $\textit{Sub} = \emptyset$ is trivial since $\can{\Omega} = \Omega$. For the inductive step, pick any $\ab_i \in \textit{Sub}$ and let $\Omega' = \Omega \setminus \{\ab_i\}$. Since $\ab_i \in \textit{Sub}$, we have $\can{\Omega'} = \can{\Omega}$. The inductive hypothesis applied to $\Omega'$ thus gives $\CED(\can{\Omega}, \Obs) = \emptyset \Leftrightarrow \CED(\Omega', \Obs) = \emptyset$, so it suffices to show $\CED(\Omega', \Obs) = \emptyset \Leftrightarrow \CED(\Omega, \Obs) = \emptyset$. We show the two implications separately.

    \begin{description}
        \item[$\implies$] We reason by contrapositive. If $(\Neg, \Pos) \in \CED(\Omega, \Obs)$, then for $\Pos' = (\Pos(\ab_j))_{\ab_j \in \Omega'}$ we clearly have \((\Neg,\Pos')\in\CED(\Omega',\Obs)\), since the projection conditions on the remaining components are inherited from $\Pos$.
        \item[$\impliedby$] By definition, $\Omega' \partord \Omega$, so $\CED(\Omega', \Obs) \partord \CED(\Omega, \Obs)$ by \cref{pr:orderCED}. Hence $\CED(\Omega, \Obs) = \emptyset$ implies $\CED(\Omega', \Obs) = \emptyset$.
    \end{description}
\end{proof}

\section{The Orchestrator}
\label{sec:algo}

We are now ready to present the full algorithm. The Orchestrator, outlined in \Cref{subsec:overview}, integrates the local learning framework of \Cref{sec:local_learning} with the distribution-refinement machinery of \Cref{sec:CED}; the high-level workflow is illustrated in \Cref{fig:alg_overview}.
\begin{figure}
    \centering
    \begin{tikzpicture}[
        >=stealth', thick, font=\small, scale=0.82, every node/.style={transform shape},
        lab/.style={midway, fill=white, inner sep=1pt}
    ]

    \def\panelPad{0.15}
    \def\rightLabelGap{0.1cm}
    \def\yMid{0.0}
    \def\yOff{0.50}
    \def\distHypPanelGap{6.0}
    \def\hypToRightPanelGap{7}
    \tikzset{box/.style={draw, rounded corners=2pt, minimum width=1.5cm, minimum height=0.55cm, inner sep=1.2pt, align=center}}
    \tikzset{panel/.style={draw, rounded corners=3pt, densely dashed, inner sep=0pt}}

    \pgfmathsetmacro{\rightCenterOffset}{0.5 + \yOff + \panelPad} %
    \pgfmathsetmacro{\rightPanelX}{\distHypPanelGap + \hypToRightPanelGap} %
    \pgfmathsetmacro{\parX}{(\distHypPanelGap + \rightPanelX)/2} %
    \pgfmathsetmacro{\yTop}{\yMid+\yOff} %
    \pgfmathsetmacro{\yBot}{\yMid-\yOff} %

    \node[box] (dTop) at (0,\yTop) {$\ab_1$};
    \node[box] (dBot) at (0,\yBot) {$\ab_n$};
    \coordinate (dMid) at ($(dTop.south)!0.5!(dBot.north)$);
    \fill ($(dMid)+(0,2.2pt)$) circle (0.5pt);
    \fill (dMid) circle (0.5pt);
    \fill ($(dMid)+(0,-2.2pt)$) circle (0.5pt);
    \node[panel, fit=(dTop)(dBot), inner xsep=\panelPad cm, inner ysep=\panelPad cm] (dPanel) {};
    \node at ($(dPanel.north)+(0,8pt)$) {Distribution $\Omega$};

    \node[box] (h1) at (\distHypPanelGap,\yTop) {$\CH_1$};
    \node[box] (hn) at (\distHypPanelGap,\yBot) {$\CH_n$};
    \coordinate (hMid) at ($(h1.south)!0.5!(hn.north)$);
    \fill ($(hMid)+(0,2.2pt)$) circle (0.5pt);
    \fill (hMid) circle (0.5pt);
    \fill ($(hMid)+(0,-2.2pt)$) circle (0.5pt);
    \node[panel, fit=(h1)(hn), inner xsep=\panelPad cm, inner ysep=\panelPad cm] (hPanel) {};
    \node at ($(hPanel.north)+(0,8pt)$) {Hypotheses over \(\Omega\)};

    \coordinate (llMid) at ($(dPanel.center)!0.5!(hPanel.center)$);
    \fill ($(llMid)+(0,2.2pt)$) circle (0.5pt);
    \fill (llMid) circle (0.5pt);
    \fill ($(llMid)+(0,-2.2pt)$) circle (0.5pt);
    \node at ($(llMid |- dPanel.north)+(0,8pt)$) {Local learning};
    \node at ($(llMid |- dPanel.south)+(0,-8pt)$) {Membership queries};

    \draw[->] (dTop.east) -- node[lab,above] {$L_1$} (h1.west);
    \draw[->] (dBot.east) -- node[lab,above] {$L_n$} (hn.west);

    \node[box] (ndTop) at (\rightPanelX,{\yMid+\rightCenterOffset+\yOff}) {$\ab'_1$};
    \node[box] (ndBot) at (\rightPanelX,{\yMid+\rightCenterOffset-\yOff}) {$\ab'_k$};
    \coordinate (ndMid) at ($(ndTop.south)!0.5!(ndBot.north)$);
    \fill ($(ndMid)+(0,2.2pt)$) circle (0.5pt);
    \fill (ndMid) circle (0.5pt);
    \fill ($(ndMid)+(0,-2.2pt)$) circle (0.5pt);
    \node[panel, fit=(ndTop)(ndBot), inner xsep=\panelPad cm, inner ysep=\panelPad cm] (ndPanel) {};
    \node[anchor=west,align=left] at ($(ndPanel.east)+(\rightLabelGap,0)$) {New distribution \\ $\Omega'$};

    \node[box] (ulTop) at (\rightPanelX,{\yMid-\rightCenterOffset+\yOff}) {$L'_1$};
    \node[box] (ulBot) at (\rightPanelX,{\yMid-\rightCenterOffset-\yOff}) {$L'_n$};
    \coordinate (ulMid) at ($(ulTop.south)!0.5!(ulBot.north)$);
    \fill ($(ulMid)+(0,2.2pt)$) circle (0.5pt);
    \fill (ulMid) circle (0.5pt);
    \fill ($(ulMid)+(0,-2.2pt)$) circle (0.5pt);
    \node[panel, fit=(ulTop)(ulBot), inner xsep=\panelPad cm, inner ysep=\panelPad cm] (ulPanel) {};
    \node[anchor=west,align=left] at ($(ulPanel.east)+(\rightLabelGap,0)$) {Updated learners};

    \node[box, minimum width=2.2cm] (par) at (\parX,\yMid)
        {$\bigparallel_{i=1}^{n}\CH_i$};
    \draw[->] (h1.east) -- (par.west);
    \draw[->] (hn.east) -- (par.west);
    \node at ($(par.center |- dPanel.south)+(0,-8pt)$) {Equivalence query};

    \draw[->] (par.north east) to[bend left=15]
        node[lab,above,pos=0.2] {global cex} (ndPanel.west);
    \draw[->] (par.south east) to[bend right=15]
        node[lab,below,pos=0.2] {local cex}  (ulPanel.west);

    \end{tikzpicture}
    \caption{High-level workflow of the Orchestrator.}
    \label{fig:alg_overview}
\end{figure}

The Orchestrator iteratively performs the following operations. Local learners run in parallel over the current distribution $\Omega$, coordinated by the Adapter, until they all produce hypotheses 
(recall that the current distribution \(\Omega\) can not be disproven by membership queries, as proven in~\ref{pr:MQnotBreaking}); any membership query result is recorded in $\Obs$. The Adapter then composes the local hypotheses and submits the resulting equivalence query to the Teacher. The Orchestrator intercepts the counter-example (if any), records it in $\Obs$, and classifies it as global (if $\Omega \not\models \Obs \cup \{(\sigma_{cex},b)\}$, triggering distribution refinement) or local (forwarded to the learners via the Adapter). The detailed algorithm is given in \Cref{alg:main}.

Crucially, when $(\sigma_{cex},b)$ is global, it generates counter-examples to the distribution (\Cref{def:distr-cex}) whose structure is fully characterized by the following lemma.
\begin{lem}
    \label{lm:CEcorrespondance} 
Given a global counter-example \((\sigma,b)\), let \(\Obs'=\Obs\cup \{(\sigma,b)\}\):
    \begin{itemize}
        \item if \(b=-\), then there is \(\Pos\in\Dom(\Obs)^\Omega\) such that \((\sigma,\Pos)\in\CED(\Omega,\Obs')\). 
        \item else, there is \(\Neg\in\Dom(\Obs)\), $S\subseteq \Omega$ and \(\Pos\in\Dom(\Obs)^{\Omega\setminus S}\) such that 
        \((\Neg,\Pos\cup(\ab_i\mapsto\sigma)_{\ab_i\in S})\in\CED(\Omega,\Obs')\)
    \end{itemize}
    Furthermore, all of the elements of \(\CED(\Omega,\Obs')\) have the above structure.
\end{lem}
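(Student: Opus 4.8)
The plan is to leverage the invariant $\Omega \models \Obs$ that holds just before the counter-example is processed (preserved by membership queries through \cref{pr:MQnotBreaking}), together with \cref{cr:CEtoDist}, which converts modelling statements into (non)emptiness of $\CED$. The whole argument then reduces to tracking which role the newly added pair $(\sigma,b)$ is forced to play in any counter-example over $\Obs'$.

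First I would record the framing facts. From $\Omega \models \Obs$, \cref{cr:CEtoDist} gives $\CED(\Omega,\Obs)=\emptyset$; and since $(\sigma,b)$ is global, \ie $\Omega \not\models \Obs'$, the same corollary gives $\CED(\Omega,\Obs') \neq \emptyset$. I would also note that $\sigma \notin \Dom(\Obs)$ — otherwise $\Obs'=\Obs$ and the counter-example would not be global — so $(\sigma,b)$ is the unique entry of $\Obs'$ absent from $\Obs$. Consequently, every $\Obs'$-negative trace other than $\sigma$ is already $\Obs$-negative, and likewise for positive traces.

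The core is a single reduction: if some $(\Neg,\Pos) \in \CED(\Omega,\Obs')$ were built entirely from traces in $\Dom(\Obs)$ with their $\Obs$-signs, then, since the projection conditions are purely syntactic and hence inherited, $(\Neg,\Pos)$ would already lie in $\CED(\Omega,\Obs)=\emptyset$, a contradiction. So the new pair must appear, and its sign dictates where. Taking an arbitrary $(\Neg,\Pos) \in \CED(\Omega,\Obs')$ I would split on $b$. If $b=-$, then $\sigma$ is the only new negative trace, so all images $\Pos(\ab_i)$ lie in $\Dom(\Obs)$, giving $\Pos \in \Dom(\Obs)^\Omega$; the reduction then forces $\Neg=\sigma$, which is the first bullet. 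If $b=+$, then $\Obs'(\sigma)=+\neq-$ forces $\Neg \neq \sigma$, hence $\Neg \in \Dom(\Obs)$; setting $S=\{\ab_i \in \Omega \mid \Pos(\ab_i)=\sigma\}$, each $\Pos(\ab_i)$ with $\ab_i \notin S$ is an $\Obs'$-positive trace different from $\sigma$ and thus lies in $\Dom(\Obs)$, so $\Pos$ restricts to an element of $\Dom(\Obs)^{\Omega \setminus S}$, and the reduction forces $S \neq \emptyset$; this is the second bullet. Since the analysis applies to every element of $\CED(\Omega,\Obs')$, it simultaneously yields the ``furthermore'' characterisation, and the two existence claims follow from $\CED(\Omega,\Obs') \neq \emptyset$.

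I expect no real obstacle once the reduction is isolated; the argument is otherwise bookkeeping. The one point requiring care is the case split: I must argue cleanly that the added pair can only occupy the role its sign permits — $\sigma$ as the negative witness when $b=-$, and $\sigma$ among the positive witnesses when $b=+$ — and, in each case, that all remaining components are inherited verbatim from $\Obs$ so that the contradiction with $\CED(\Omega,\Obs)=\emptyset$ is available.
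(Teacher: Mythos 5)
Your proposal is correct and follows essentially the same route as the paper's proof: both reduce the claim, via \cref{cr:CEtoDist}, to $\CED(\Omega,\Obs)=\emptyset$ and $\CED(\Omega,\Obs')\neq\emptyset$, then argue that $\sigma$ must occur in any element of $\CED(\Omega,\Obs')$ (else it would already lie in $\CED(\Omega,\Obs)$), with the sign $b$ dictating whether it plays the role of $\Neg$ or appears in $\supp(\Pos)$. Your explicit remarks that $\sigma\notin\Dom(\Obs)$ and that $S\neq\emptyset$ when $b=\plus$ are sound minor sharpenings of what the paper leaves implicit.
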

\begin{proof}
Since the counter-example is global, we know that \(\Omega\models\Obs\) and \(\Omega\not\models\Obs'\). 
From \Cref{cr:CEtoDist}, we get that \(\CED(\Omega,\Obs)=\emptyset\) and \(\CED(\Omega,\Obs')\neq\emptyset\).

Consider any $(\Neg,\Pos)\in\CED(\Omega,\Obs')$. Since $\CED(\Omega,\Obs)=\emptyset$, $\sigma$ must appear in it: if $b=\minus$ then $\Neg=\sigma$, and if $b=\plus$ then $\sigma\in\supp(\Pos)$. Since $\Dom(\Obs')=\Dom(\Obs)\cup\{\sigma\}$, all remaining entries lie in $\Dom(\Obs)$, yielding the stated structure with $S=\{\ab_i\in\Omega\mid\Pos(\ab_i)=\sigma\}$. As $(\Neg,\Pos)$ was arbitrary, all elements of $\CED(\Omega,\Obs')$ have this form.
\end{proof}
Therefore, based on this lemma, the distribution is augmented with discrepancies for a chosen subset $S$ of distribution counter-examples, following \Cref{cr:globalProg}.
This process eventually converges to provide $\Omega$ such that $\Omega \models \Obs$ (\Cref{th:fixingDist}). The new distribution is then optimized by making it canonical (\Cref{def:canon_dist}) and, if desired, increasing its connectivity. 
The optimization step does not affect counter-example-freeness (by \Cref{pr:freesuppress} and \Cref{cor:empty-more-connecting}) and may be used to reduce synchronizations, which improves performance (see \Cref{sec:exp}).
New learners are then started over the updated alphabets.\footnote{In practice, learners leverage $\Obs$ to partially initialize their observation tables.}
\begin{rem}
    We leave the selection of counter-example set $S$ as an implementation choice. While $S=\CED(\Omega,\Obs)$ maximizes counter-example elimination, finding all counter-examples may be expensive. In our implementation, we process just one counter-example at a time, which in practice often yields a valid distribution after a single update step.
\end{rem}
Our main theorem for this section states that the algorithm terminates and returns a correct model of the SUL.
\begin{algorithm}[t]
	\caption{Orchestrator's algorithm.}
	\label{alg:main}
	
	\SetAlgoHangIndent{4ex}
	\DontPrintSemicolon

    \KwIn{The alphabet of the SUL \(\ab_\mathit{SUL}\) and the Teacher \(\Teach\).}
    \KwInit{\(\Obs=\emptyset\), \(\Omega=\{\{a\}\mid a\in\ab_\mathit{SUL}\}\), \(\Learns=\{\text{new learner }\Learn_i\text{ on }\ab_i\mid \ab_i\in\Omega\}\)}
    \While{True}{
        \ForEach{\(\Learn_i\in\Learns\) in parallel}{
            Locally learn via the Adapter until a hypothesis \(\CH_i\) is returned \label{line:locallearn} \;
            $\Obs \gets \Obs \cup \{(\sigma,b) \mid (\sigma,b) \in \text{global membership queries of } \Learn_i\}$
        }
        \(\CEX \gets\) counter-example returned by $\Equiv(\bigpar_i \CH_i)$\;
        \uIf{$\CEX$ is empty}{
            Return \(\CH_1,\dots,\CH_{|\Omega|}\)\;
        }
		\uElseIf{$\CEX = (\sigma_{cex},b)$}{
			$\Obs \gets \Obs \cup  \{(\sigma_{cex},b)\}$ \;
			\uIf(\tcc*[f]{Global counter-example}){$\Omega \not\models \Obs$}{
				\While{$\CED(\Omega,\Obs) \neq \emptyset$}{
					Pick \(S\subseteq\CED(\Omega,\Obs)\) non-empty \\
					$\Omega \gets \Omega \cup 
                    \Omega'
                    $ for some $\Omega' \in \DSet(S)$
				}
				$Optimize(\Omega)$ \\
				$\Learns \gets
				\{\text{new learner }\Learn_i\text{ on }\ab_i\mid \ab_i\in \Omega\}$ \;
			}
            \Else(\tcc*[f]{Local counter-example}){
                Forward $\sigma_{cex}$ to Adapter
            }
		}
	}
\end{algorithm}
\begin{thm}
    \label{thm:algo}
    The combination of the Orchestrator, the Adapter and the local learner as implemented by \cref{alg:main} terminates and on termination $\lang(\CH_1 \parallel \dots \parallel \CH_{|\Omega|}) = \lang(\SUL)$.
\end{thm}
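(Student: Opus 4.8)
The plan is to keep correctness and termination separate. Correctness is immediate: the \texttt{while} loop of \Cref{alg:main} has a single exit, the branch taken when $\Equiv(\bigpar_i \CH_i)$ returns no counter-example, which by definition means $\lang(\CH_1 \parallel \dots \parallel \CH_{|\Omega|}) = \lang(\SUL)$. Hence any halting run returns a correct decomposition, and the whole difficulty lies in termination. Before attacking it, I would establish the loop invariant $\Omega \models \Obs$: it holds at initialisation, is preserved by membership queries by \Cref{pr:MQnotBreaking}, is preserved by local counter-examples (a counter-example is classified as local precisely when it does not break $\Omega \models \Obs$), and is restored after every refinement, since the inner \texttt{while} iterates until $\CED(\Omega,\Obs)=\emptyset$ (equivalent to $\Omega \models \Obs$ by \Cref{cr:CEtoDist}) and the subsequent $Optimize$ step preserves counter-example-freeness by \Cref{pr:freesuppress,cor:empty-more-connecting}.

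Termination I would prove on two levels. At the outer level I bound the number of distribution refinements. Each refinement runs the loop of \Cref{cr:globalProg}, which by \Cref{th:fixingDist} terminates and produces $\Omega'$ with $\Omega \strpartord \Omega'$ and $\Omega' \models \Obs$; canonicalisation and any connectivity increase performed by $Optimize$ leave the result $\partord$-equivalent to, or strictly above, $\Omega'$, so the net effect of a refinement is still a strict step $\Omega \strpartord \Omega_{\mathrm{new}}$ in the connectivity preorder. As $\partord$ has finite chains (there are finitely many distributions of a finite alphabet), only finitely many refinements can occur, so the run decomposes into finitely many \emph{phases}, each a maximal stretch of the loop with a fixed distribution.

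At the inner level I would show that each phase is finite. During a phase the distribution is fixed and the invariant $\Omega \models \Obs$ holds, so the $n$ learners run exactly the fixed-distribution protocol of \Cref{sec:local_learning} and only local counter-examples occur; since every non-terminated learner eventually proposes a hypothesis, the barrier of \Cref{alg:comp_i_queries_basic} is always passed and no deadlock arises. I reuse the counting argument behind \Cref{thm:fixed_dist}: by \Cref{lem:cex_valid_for_one} every counter-example is valid for at least one learner, valid counter-examples drive a monotone, Nerode-bounded growth of that learner's hypothesis, and by \Cref{lem:spurious_cex_finite_repair} (using the shortest-counter-example assumption) each spurious counter-example is repaired after finitely many equivalence queries; summing the finite per-learner budgets over the finitely many learners bounds the local counter-examples in the phase. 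Crucially, these ingredients do not use the hypothesis $\Omega \models \lang(\SUL)$ of \Cref{thm:fixed_dist}; they only use that learner $i$ targets $\projection{\lang(\SUL)}{\ab_i}$ with consistent membership answers. A phase therefore terminates after finitely many local counter-examples, ending with either a \emph{yes} answer (the algorithm halts) or a global counter-example (the next phase begins). Since there are finitely many phases, the last one cannot end with a global counter-example, hence it ends with \emph{yes}, and by the correctness observation the returned hypotheses compose to $\lang(\SUL)$.

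I expect the delicate point to be the inner level in the case $\Omega \not\models \lang(\SUL)$, where \Cref{thm:fixed_dist} does not apply verbatim. Here one must argue that a global counter-example is forced rather than the phase stalling: if only local counter-examples were ever returned, the budget argument would drive all learners to converge to the projections $\projection{\lang(\SUL)}{\ab_i}$, at which point \Cref{lem:cex_valid_for_one} forbids any further counter-example (any candidate would be spurious for every converged learner), so $\Equiv$ would answer \emph{yes}; but \emph{yes} entails $\lang(\SUL) = \bigpar_i(\projection{\lang(\SUL)}{\ab_i},\ab_i)$, i.e.\ $\Omega \models \lang(\SUL)$ by \Cref{lem:prod_of_projections}, contradicting the case assumption. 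Thus a global counter-example must appear first, and the structural guarantee of \Cref{lm:CEcorrespondance} ensures it indeed yields elements of $\CED(\Omega,\Obs)$ to refine on. Verifying that \Cref{lem:cex_valid_for_one,lem:spurious_cex_finite_repair} and the growth bound are genuinely independent of $\Omega \models \lang(\SUL)$ is the crux of making this watertight.
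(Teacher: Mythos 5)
Your overall decomposition is exactly the paper's: correctness read off the exit condition, termination split into finitely many global refinements (strict $\strpartord$-progress via \Cref{cr:globalProg,th:fixingDist} and finiteness of ascending chains) plus finiteness of each fixed-distribution phase by reusing the machinery of \Cref{thm:fixed_dist}, with a phase ending either in \emph{yes} or in a forced global counter-example. Your explicit loop invariant $\Omega \models \Obs$ (via \Cref{pr:MQnotBreaking} and \Cref{cr:CEtoDist}) is left implicit in the paper but is correct and a welcome addition.

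The genuine gap sits precisely at the point you flag as the crux, and your resolution of it is unsound. \Cref{lem:cex_valid_for_one} is \emph{not} independent of the hypothesis $\Omega \models \lang(\SUL)$: in its negative case, the step from ``$\projection{\sigma}{\ab_i} \in \projection{\lang(\SUL)}{\ab_i}$ for all $i$'' to a contradiction with $\sigma \notin \lang(\SUL)$ requires $\lang(\SUL) = \bigpar_{i}(\projection{\lang(\SUL)}{\ab_i},\ab_i)$, which by \Cref{lem:prod_of_projections} is exactly $\Omega \models \lang(\SUL)$. When $\Omega \not\models \lang(\SUL)$, the lemma's conclusion fails: negative counter-examples spurious for \emph{every} learner exist, and they are precisely what the algorithm is waiting for. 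Consequently your inference ``after convergence, \Cref{lem:cex_valid_for_one} forbids any further counter-example, so $\Equiv$ would answer \emph{yes}'' is invalid. What actually happens is the opposite: once all learners have converged, the composite hypothesis accepts $\bigpar_{i}(\projection{\lang(\SUL)}{\ab_i},\ab_i) \supsetneq \lang(\SUL)$, so $\Equiv$ \emph{does} return a counter-example, necessarily some $(\sigma,\minus)$ with $\sigma \notin \lang(\SUL)$ and all projections positive. The paper's proof does not forbid this counter-example; it observes that it must exist and argues, via \Cref{lm:CEcorrespondance}, that it is \emph{global}, which is what ends the interval and triggers refinement.

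The same dependence also undercuts your claim that the per-phase budget ingredients are ``genuinely independent'' of $\Omega \models \lang(\SUL)$: the Nerode-bounded growth argument uses \Cref{lem:cex_valid_for_one} to ensure each non-spurious counter-example makes some learner progress, and the proof of \Cref{lem:spurious_cex_finite_repair} itself cites \Cref{lem:cex_valid_for_one}. The paper sidesteps this by confining the fixed-distribution argument to a maximal interval in which only local counter-examples are received (where the Adapter behaves as in \Cref{sec:local_learning}) and letting the post-convergence counter-example, being global, terminate the interval. Your final conclusion --- a global counter-example is forced in finite time --- is the right one and coincides with the paper's; the route through the alleged independence of \Cref{lem:cex_valid_for_one} is the one step that fails as written.
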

\begin{proof}
    Correctness on termination is guaranteed by the Teacher: the algorithm returns only when $\Equiv(\CH_1 \parallel \dots \parallel \CH_{|\Omega|})$ replies \emph{yes}, which means $\lang(\CH_1 \parallel \dots \parallel \CH_{|\Omega|}) = \lang(\SUL)$.
    For termination, we show that only finitely many counter-examples of either type can occur.

    \medskip
    \noindent\emph{Global counter-examples.} Each global counter-example triggers lines~11--14, which by \cref{th:fixingDist} produces a distribution $\Omega'$ with $\Omega \strpartord \Omega'$. Every ascending chain in $\strpartord$ is finite and bounded above by $\{\ab_\SUL\}$, so at most finitely many such updates occur.

    \medskip
    \noindent\emph{Local counter-examples.} Fix any interval during which $\Omega = \{\ab_1,\dots,\ab_k\}$ is constant. If only local counter-examples are received, the Adapter behaves exactly as in the fixed-distribution setting, so by the same argument as in the proof of \Cref{thm:fixed_dist} all learners eventually converge to $\lang(\CH_i) = \projection{\lang(\SUL)}{\ab_i}$. At that point the Teacher either replies \emph{yes}, terminating the algorithm, or returns a counter-example to $\lang(\CH_1 \parallel \dots \parallel \CH_k) = \bigpar_i(\projection{\lang(\SUL)}{\ab_i},\ab_i)$. Since $\lang(\SUL) \subseteq \bigpar_i(\projection{\lang(\SUL)}{\ab_i},\ab_i)$ always holds, any such counter-example must be some $(\sigma,\minus)$ with $\sigma \notin \lang(\SUL)$ and $\projection{\sigma}{\ab_i} \in \projection{\lang(\SUL)}{\ab_i}$ for all $i$. By \Cref{lm:CEcorrespondance}, this is a global counter-example, triggering a distribution update and ending the interval. Hence the interval must terminate in finite time: either the Teacher replies \emph{yes}, or a global counter-example is eventually received.

\end{proof}

\begin{rem}
We make no claims regarding the number of components returned by the algorithm. The final distribution may vary depending on counter-example and discrepancy choices. Moreover, as discussed in~\cref{rem:matching_local_languages} different sets of component LTSs can result in the same parallel composition.
\end{rem}
\begin{exa}[Example run]
    \label{ex:sample_run}
    We give an example run where the target SUL is the model of \Cref{fig:running}. For the sake of simplicity, we focus on the global counter-examples and the subsequent distribution updates, considering only one distribution counter-example per step. Moreover, we consistently select a smallest discrepancy for each counter-example as our greedy strategy to minimize the connectivity of the resulting distribution.
    
    We start from \(\Omega_{singles}=\{\{a\},\{b\},\{c\},\{d\}\}\). The local alphabets initially contain only one symbol, so local learners will make membership queries about traces containing exclusively that symbol. This leads to the components depicted below. 
    \begin{center}
        \begin{tikzpicture}[->,>=stealth',auto,semithick, initial text={}, initial distance={10pt}, every state/.style={inner sep=1pt, minimum size=0pt}]
            \def\d{1.4}
    
            \begin{scope}[xshift=-4cm]
                \node                  at (-1.2,0) {$\lts_{\{a\}} = $};
                \node[state,initial] (s0) at (0,0)    {$s_0$};
                \node[state]      (s1) at (\d,0)   {$s_1$};
    
                \path
                (s0) edge node  {$a$} (s1);
            \end{scope}

            \begin{scope}[xshift=0.5cm]
                \node                  at (-1.2,0) {$\lts_{\{b\}} = $};
                \node[state,initial] (s0) at (0,0)    {$s_0$};
            \end{scope}

            \begin{scope}[xshift=3.5cm]
                \node                  at (-1.2,0) {$\lts_{\{c\}} = $};
                \node[state,initial] (s0) at (0,0)    {$s_0$};
    
                \path
                (s0) edge[loop right] node  {$c$} (s0);
            \end{scope}

            \begin{scope}[xshift=7cm]
                \node                  at (-1.2,0) {$\lts_{\{d\}} = $};
                \node[state,initial] (s0) at (0,0)    {$s_0$};
            \end{scope}
        \end{tikzpicture}
    \end{center}
    The first global counter-example is $(ab,\plus)$, yielding several counter-examples to \(\Omega_{singles}\models\Obs\), of which we consider 
    \((b,(\{a\}\mapsto\epsilon, \{b\}\mapsto ab,\{c\}\mapsto\epsilon,\{d\}\mapsto\epsilon))\). The smallest discrepancy for this counter-example is \(\{a,b\}\).
    We use it to update the distribution and obtain \(\Omega_{ab}=\{\{a,b\},\{c\},\{d\}\}\),\footnote{We made the distribution canonical (\Cref{def:canon_dist}) and removed the \(\{a\}\) and \(\{b\}\) components.}, which models the current observations. The new component over $\{a,b\}$ is then learned locally, producing (the $\{c\}$ and $\{d\}$ components are unchanged):
    \begin{center}
        \begin{tikzpicture}[->,>=stealth',auto,semithick, initial text={}, initial distance={10pt}, every state/.style={inner sep=1pt, minimum size=0pt}]
            \def\d{1.4}
    
            \begin{scope}[xshift=-3cm]
                \node                  at (-1.4,0) {$\lts_{\{a,b\}} = $};
                \node[state,initial] (s0) at (0,0)    {$s_0$};
                \node[state]      (s1) at (\d,0)   {$s_1$};
                \node[state]      (s2) at (2*\d,0)   {$s_2$};
    
                \path
                (s0) edge node  {$a$} (s1)
                (s1) edge node {$b$} (s2);
            \end{scope}

            \begin{scope}[xshift=4cm]
                \node                  at (-1.2,0) {$\lts_{\{c\}} = $};
                \node[state,initial] (s0) at (0,0)    {$s_0$};
    
                \path
                (s0) edge[loop right] node  {$c$} (s0);
            \end{scope}

            \begin{scope}[xshift=7.5cm]
                \node                  at (-1.2,0) {$\lts_{\{d\}} = $};
                \node[state,initial] (s0) at (0,0)    {$s_0$};
            \end{scope}
        \end{tikzpicture}
    \end{center}
    The next global counter-example $(cb,+)$, leads to distribution counter-example 
    \((b,(\{a,b\}\mapsto cb, \{c\}\mapsto\epsilon, \{d\}\mapsto\epsilon))\). Its smallest discrepancy is $\{b,c\}$ and the new distribution is \(\Omega_{ab,bc} = \{\{a,b\},\{b,c\},\{d\}\}\). 
    Although the counter-example has been handled, \(\Omega_{ab,bc}\) does not model the observations, as \(\CED(\Omega_{ab,bc},\Obs)\) contains 
    \((b,(\{a,b\}\mapsto cb,\{b,c\}\mapsto ab,\{d\}\mapsto\epsilon))\). Its smallest discrepancy \(\{a,b,c\}\) gives \(\Omega_{abc}=\{\{a,b,c\},\{d\}\}\), modeling the observations.

    To finish our example, the next global counter-example is \((abd,\plus)\). The corresponding distribution counter-example is 
    \((d,(\{a,b,c\}\mapsto \epsilon, \{d\}\mapsto abd))\). There are two smallest discrepancies for this counter-example: \(\{a,d\}\) and \(\{b,d\}\). Selecting \(\{b,d\}\) leads to \(\{\{a,b,c\},\{b,d\}\}\), which models the target language and exactly corresponds to the decomposition of~\Cref{fig:running}. Selecting \(\{a,d\}\) creates unnecessary connections, resulting (after some omitted steps) in either \(\{\{a,b,c\},\{a,d\},\{b,d\}\}\) or \(\{\{a,b,c\},\{a,b,d\}\}\) as a final distribution. 

    Our current implementation selects either discrepancy as both are locally optimal. Finding efficient ways to explore multiple discrepancy choices for globally optimal distributions remains an open challenge for future work.
\end{exa} 
\section{Experiments}
\label{sec:exp}
We initially implemented our approach for learning components given a distribution in a tool called \coal~\cite{NeeleS23a}, which we later extended into \coala (for COmpositional Automata Learner with Alphabet refinement), by adding the ability to refine the alphabets based on global counter-examples.
The tool is implemented in Java and builds on top of LearnLib 0.18.0~\cite{IsbernerHS15}, a library for automata learning.
This allows us to re-use standard data structures, such as observation tables, and compare our framework to a state-of-the-art implementation of \Lstar.
There are a few interesting aspects to the implementation.

First, the local learners are required to construct a minimal LTS that is consistent with an observation table containing wildcards.
Since this is an NP-complete problem~\cite{Gold78}, we resort to the following approach.
We first greedily compute a set of pairwise distinct rows in the observation table, the larger the set, the better.
The minimal LTS necessarily has at least this number of states, so if we find an LTS with exactly this number of states, we are done.
To achieve this, we try the following methods in order: (i) check if a previously computed LTS fits, (ii) the blue-fringe variant of RPNI~\cite{DelaHiguera10} (as implemented in LearnLib), or (iii) a SAT translation using the Z3 solver~\cite{DeMouraB08}.
Of these, only the SAT translation is guaranteed to yield a minimal LTS, but it may also require significant time to run.

Second, as discussed in \Cref{sec:algo}, the theory allows optimizing the distribution.
To achieve this, we first build the hypergraph $(\ab,\Delta)$, where $\Delta$ is the set of all discrepancies found thus far.
In this graph, we construct a distribution $\Omega = \{\ab_1,\dots,\ab_n\}$ that is an \emph{edge cover}: for every edge $\delta \in \Delta$ there is an $i$ such that $\delta \subseteq \ab_i$.
To maximize performance of our algorithms, we have two conflicting goals: minimize the number of synchronizing actions (to help the local learners) and maximize the number of alphabets in the distribution $\Omega$ (to increase the compositional nature of our learning).
With this in mind, we first search for cliques in the graph (i.e. sets of actions that interact strongly), later extending them to ensure the edge cover property.
Finally, we sometimes also merge components to convert synchronizations into local actions.
The related problem of finding in a graph the minimum number of cliques whose union includes all edges is NP-complete~\cite{Orlin77}, suggesting that also here we are dealing with a computationally difficult problem.

\subsection{Evaluation}
We evaluate the effectiveness of our approach in terms of savings in membership and equivalence queries. 
When considering only the execution time on the side of the learner, we do not expect any savings over the mature tools due to the possibly expensive SAT procedure for generating local hypotheses.
However, in most realistic settings, the execution of membership and equivalence queries by the Teacher completely dominates the runtime required.
Hence, any savings in the number of queries boosts practical applicability of automata learning.

To validate our approach, we experiment with learning LTSs obtained from three sources:
\begin{enumerate}
\item 300 randomly generated LTSs varying in structure and size. The details are explained below.
\item 328 LTSs obtained from Petri nets that are provided by the Hippo tool~\cite{WisniewskiBWP23} website;\footnote{\url{https://hippo.uz.zgora.pl/}}
these are often more sequential in nature than our other models.
\item Two scalable realistic models, namely \emph{CloudOpsManagement} from the 2019 Model Checking Contest~\cite{AmparoreO19} and a \emph{producers/consumers} model~\cite[Fig.~8]{Zuberek99}.
\end{enumerate}
The random systems are obtained by computing the parallel composition of a number of randomly generated component LTSs.
This yields an accurate reflection of actual behavioral transition systems~\cite{GrooteVR16}.
Each component LTS has a random number of states between 5 and 9 (inclusive, uniformly distributed) and a maximum number of outgoing edges per state between 2 and 4 (inclusive, uniformly distributed).

We assign alphabets to the components LTSs in five different ways that reflect real-world communication structures, see Figure~\ref{fig:comm_structure}.
Here, each edge represents a communication channel that consists of two synchronizing actions; each component LTS furthermore has two local actions.
The hyperedge in \emph{multiparty} indicates multiparty communication: the two synchronizing actions in such a system are shared by all component LTSs.
The graph that represents the \emph{bipartite} communication structure is always complete, and the components are evenly distributed between both sides.
\emph{Random} is slightly different: it contains $2(n-1)$ edges, where $n$ is the number of components, each consisting of one action; we furthermore ensure the random graph is connected.

\begin{figure}
	\centering
	\begin{tikzpicture}
		\footnotesize
		\tikzstyle{comp} = [circle,fill=black,inner sep=2.2pt];
		\def\texty{1.1}
		\def\d{0.7}
		\def\x{0.6}
		\def\xsh{2.4cm}
		\begin{scope}
			\node at (0,\texty) {multiparty};
			\node[draw,circle,inner sep=6pt] (6) at (0,0) {};
			\foreach \i in {0,...,5} {
				\node[comp] (\i) at (60*\i:\d) {};
				\path (\i) edge (6);
			}
		\end{scope}
		\begin{scope}[xshift=1*\xsh]
			\node at (0,\texty) {ring};
			\foreach \i in {0,...,5} {
				\node[comp] (\i) at (60*\i:\d) {};
			}
			\path (0) edge (1) (1) edge (2) (2) edge (3) (3) edge (4) (4) edge (5) (5) edge (0);
		\end{scope}
		\begin{scope}[xshift=2*\xsh]
			\node at (0,\texty) {bipartite};
			\foreach \i in {-1,0,1} {
				\foreach \j in {-1,1} {
					\node[comp] (\i\j) at (\j*\x,\i*\x) {};
				}
			}
			\foreach \i in {-1,0,1} {
				\foreach \j in {-1,0,1} {
					\path (\i-1) edge (\j1);
				}
			}
		\end{scope}
		\begin{scope}[xshift=3*\xsh]
			\node at (0,\texty) {star};
			\node[comp] (6) at (0,0) {};
			\foreach \i in {0,...,5} {
				\node[comp] (\i) at (60*\i:\d) {};
				\path (\i) edge (6);
			}
		\end{scope}
		\begin{scope}[xshift=4*\xsh]
			\node at (0,\texty) {random};
			\foreach \i in {0,...,5} {
				\node[comp] (\i) at (60*\i:\d) {};
			}
			\path (0) edge (2) (2) edge (5) (5) edge (4) (0) edge (3) (1) edge (3) (5) edge (0);
		\end{scope}
	\end{tikzpicture}
	\caption{Communication structure of the randomly generated systems. Dots represent components LTSs; edges represent shared synchronizing actions.}
	\label{fig:comm_structure}
\end{figure}

For our five communication structures, we create ten instances for each number of components between 4 and 9; this leads to a total benchmark set of 300 LTSs.
Out of these, 47 have more than 10,000 states, including 12 LTSs of more than 100,000 states.
The largest LTS contains 379,034 states.
\emph{Bipartite} often leads to relatively small LTSs, due to its high number of synchronizing actions.

Using a machine with four Intel Xeon 6136 processors and 3TB of RAM running Ubuntu 20.04, we apply each of three approaches: (i) our black-box compositional approach (\coala), (ii) compositional learning with given alphabets (\coal), and (iii) monolithic learning with \Lstar (as implemented in LearnLib).
\coal can be viewed as an idealized (best-case) baseline where the knowledge of the system decomposition is already available.
Each run has a time-out of 30 minutes.
Since in practice the Teacher dominates runtime, we record the number of
membership and equivalence queries, treating
each query as taking constant time; this eliminates variations in runtime caused by the Teacher.
We also assume the Teacher always returns the shortest counter-example,
which ensures that spurious counter-examples are always eventually corrected
(see \Cref{thm:fixed_dist}); relaxing this assumption is left for future work.
A complete replication package is at~\cite{HenryMNS25Rep}.

\begin{figure}
	\def\w{6.0}
	\centering
	\scalebox{0.9}{
	\begin{tikzpicture}
		\begin{scope}
			\def\dom{3*10^10}
			\def\timeout{1*10^10}
			\begin{loglogaxis}[
				title=Number of membership queries,
				title style={yshift=-0.25cm},
				xlabel=\coal/\coala,
				ylabel=\Lstar,
				xmin=2,xmax=\dom,
				ymin=2,ymax=\dom,
				width=\w cm,height=\w cm,
				xtick pos=left,ytick pos=left,
				ylabel near ticks,
				xlabel near ticks,
				label shift=-.4em,
				]
				\tikzset{every mark/.append style={semithick}}
				\draw (axis cs:2,2) -- (axis cs:\dom,\dom);
				\draw[dashed] (axis cs:2,\timeout) -- (axis cs:\timeout,\timeout);
				\draw[dashed] (axis cs:\timeout,2) -- (axis cs:\timeout,\timeout);
				\addplot[
					scatter,
					scatter src=explicit symbolic,
					only marks,
					gray!40,
					scatter/classes={
						multiparty={mark=x},
						ring={mark=+},
						bipartite={mark=star},
						star={mark=triangle},
						random={mark=o}
					},
					mark=x,
					mark size=1.2pt,
					x filter/.code={\ifthenelse{\equal{#1}{}}{\pgfmathparse{ln(\timeout)}}{}},
					y filter/.code={\ifthenelse{\equal{#1}{}}{\pgfmathparse{ln(\timeout)}}{}},
				] table[x=alphabetsmemQ,y=monomemQ,meta=type,col sep = semicolon] {results-random.csv};
				\addplot[
					scatter,
					scatter src=explicit symbolic,
					only marks,
					mark size=1.5pt,
					scatter/classes={
						multiparty={mark=x,blue},
						ring={mark=+,red},
						bipartite={mark=star,color=green!50!black!70},
						star={mark=triangle,orange},
						random={mark=o,violet!90!white!90}
					},
					x filter/.code={\ifthenelse{\equal{#1}{}}{\pgfmathparse{ln(\timeout)}}{}},
					y filter/.code={\ifthenelse{\equal{#1}{}}{\pgfmathparse{ln(\timeout)}}{}},
				] table[x=compmemQ,y=monomemQ,meta=type,col sep = semicolon] {results-random.csv};
			\end{loglogaxis}
		\end{scope}
		\begin{scope}[xshift=5.9 cm]
			\def\dom{2000}
			\def\timeout{1500}
			\begin{loglogaxis}[
				title=Number of equivalence queries,
				title style={yshift=-0.25cm},
				xlabel=\coal/\coala,
				xmin=1,xmax=\dom,
				ymin=1,ymax=\dom,
				width=\w cm,height=\w cm,
				xtick pos=left,ytick pos=left,
				ylabel near ticks,
				xlabel near ticks,
				label shift=-.4em,
				legend image post style={scale=1.6},
				legend pos=outer north east,
				legend columns=1,
				legend style={font=\small},
				]
				\tikzset{every mark/.append style={semithick}}
				\draw (axis cs:1,1) -- (axis cs:\dom,\dom);
				\draw[dashed] (axis cs:1,\timeout) -- (axis cs:\timeout,\timeout);
				\draw[dashed] (axis cs:\timeout,1) -- (axis cs:\timeout,\timeout);
				\addplot[
					forget plot,scatter,
					scatter src=explicit symbolic,
					only marks,
					gray!40,
					scatter/classes={
						multiparty={mark=x},
						ring={mark=+},
						bipartite={mark=star},
						star={mark=triangle},
						random={mark=o}
					},
					mark=x,
					mark size=1.2pt,
					x filter/.code={\ifthenelse{\equal{#1}{}}{\pgfmathparse{ln(\timeout)}}{}},
					y filter/.code={\ifthenelse{\equal{#1}{}}{\pgfmathparse{ln(\timeout)}}{}},
				] table[x=alphabetsequivQ,y=monoequivQ,meta=type,col sep = semicolon] {results-random.csv};
				\addplot[
					scatter,
					scatter src=explicit symbolic,
					only marks,
					mark size=1.5pt,
					scatter/classes={
						multiparty={mark=x,blue},
						ring={mark=+,red},
						bipartite={mark=star,color=green!50!black!70},
						star={mark=triangle,orange},
						random={mark=o,violet!90!white!90}
					},
					x filter/.code={\ifthenelse{\equal{#1}{}}{\pgfmathparse{ln(\timeout)}}{}},
					y filter/.code={\ifthenelse{\equal{#1}{}}{\pgfmathparse{ln(\timeout)}}{}},
				] table[x=compequivQ,y=monoequivQ,meta=type,col sep = semicolon] {results-random.csv};
				\legend{multiparty,ring,bipartite,star,random};
			\end{loglogaxis}
		\end{scope}
	\end{tikzpicture}}
	\caption{Performance of \Lstar and compositional learning on 300 randomly generated models. Dashed lines indicate time-outs. Results obtained with \coal are in gray, results obtained with \coala are colored.}
	\label{fig:results_random}
\end{figure}

\paragraph*{Random models}
\Cref{fig:results_random} shows the results of the three approaches on our random models.
The colors indicate the various communication structures.
The results obtained with \coal, our best-case baseline, are given in gray, while the results of \coala are colored.
We observe that \coala requires significantly fewer membership queries than \Lstar (note the logarithmic scale) and is closer to the theoretical optimum of \coal; 
the result show 5-6 orders of magnitude of improvement in  a large number of concurrent systems. 
The number of equivalence queries required by \coala is typically slightly higher, but results of larger instances suggest that \coala scales better than its monolithic counterpart.
The data shows that also in the case of equivalence queries, it is not uncommon to gain an order of magnitude of saving by using our approach. 
We note that \coala timeouts occur not due to a high query count, but because of the hypothesis-generating SAT solving routine in local learners.

\paragraph*{Petri Net models}
The results of learning the Hippo models are given in \Cref{fig:results_hippo}.
Here we are not able to run \coal, since the component alphabets are not known. 
\coala does not perform as well as on random models, in particular it requires more equivalence queries than monolithic \Lstar.
This is explained by the fact that these Petri nets contain mostly sequential behavior, \ie, the language roughly has the shape $\lang_1 \cdot a \cdot \lang_2$ for some languages $\lang_1 \subseteq \ab_1^*$ and $\lang_2 \subseteq \ab_2^*$.
Even though our learner is able to find the decomposition $\{ \ab_1 \cup \{a\}, \ab_2 \cup \{a\}\}$, we do not gain much due to the absence of concurrent behavior.
In the Hippo benchmark set, \coala typically finds between two and nine components.
\begin{figure}[htbp]
	\def\w{6.0}
	\centering
	\scalebox{0.9}{
	\begin{tikzpicture}
		\begin{scope}
			\def\dom{3*10^8}
			\def\timeout{1*10^8}
			\begin{loglogaxis}[
				title=Number of membership queries,
				title style={yshift=-0.25cm},
				xlabel=\coala (compositional),
				ylabel=\Lstar (monolithic),
				xmin=2,xmax=\dom,
				ymin=2,ymax=\dom,
				width=\w cm,height=\w cm,
				xtick pos=left,ytick pos=left,
				ylabel near ticks,
				xlabel near ticks,
				label shift=-.4em,
				]
				\tikzset{every mark/.append style={semithick}}
				\draw (axis cs:2,2) -- (axis cs:\dom,\dom);
				\draw[dashed] (axis cs:2,\timeout) -- (axis cs:\timeout,\timeout);
				\draw[dashed] (axis cs:\timeout,2) -- (axis cs:\timeout,\timeout);
				\addplot[
				only marks,
				black!70,
				mark=x,
				mark size=1.5pt,
				x filter/.code={\ifthenelse{\equal{#1}{}}{\pgfmathparse{ln(\timeout)}}{}},
				y filter/.code={\ifthenelse{\equal{#1}{}}{\pgfmathparse{ln(\timeout)}}{}},
				] table[x=compmemQ,y=monomemQ,col sep=semicolon] {results-hippo.csv};
			\end{loglogaxis}
		\end{scope}
		\begin{scope}[xshift=5.9 cm]
			\def\dom{300}
			\def\timeout{220}
			\begin{loglogaxis}[
				title=Number of equivalence queries,
				title style={yshift=-0.25cm},
				xlabel=\coala (compositional),
				xmin=1,xmax=\dom,
				ymin=1,ymax=\dom,
				width=\w cm,height=\w cm,
				xtick pos=left,ytick pos=left,
				ylabel near ticks,
				xlabel near ticks,
				label shift=-.4em,
				]
				\tikzset{every mark/.append style={semithick}}
				\draw (axis cs:1,1) -- (axis cs:\dom,\dom);
				\draw[dashed] (axis cs:1,\timeout) -- (axis cs:\timeout,\timeout);
				\draw[dashed] (axis cs:\timeout,1) -- (axis cs:\timeout,\timeout);
				\addplot[
				only marks,
				black!70,
				mark=x,
				mark size=1.5pt,
				x filter/.code={\ifthenelse{\equal{#1}{}}{\pgfmathparse{ln(\timeout)}}{}},
				y filter/.code={\ifthenelse{\equal{#1}{}}{\pgfmathparse{ln(\timeout)}}{}},
				] table[x=compequivQ,y=monoequivQ,col sep=semicolon] {results-hippo.csv};
			\end{loglogaxis}
		\end{scope}
	\end{tikzpicture}
}
	\caption{Performance of \Lstar and compositional learning on Hippo models. Dashed lines indicate time-outs or out-of-memory.}
	\label{fig:results_hippo}
\end{figure}
\paragraph*{Realistic models}
Finally, \Cref{tab:results-scalable} shows the results of learning two scalable models with relevant parameters indicated.
To sketch what runtime might look like for realistic systems, we include it in this table.
Note, however, that any realistic implementation of the Teacher would require more time than what we have observed in our setting with an idealised Teacher.
No timeouts are reported in the table because all systems were successfully learned within the time limit.

\coala scales well as the SUL size increases, requiring roughly a constant factor more queries than \coal for both CloudOps and producers/consumers.
We remark that practically all runtime of \coala and \coal is spent in the local learners to build hypotheses using SAT queries.
Improving the implementation of local learners would decrease these times significantly; this is left for future work.
As mentioned above, in any practical scenario, the processing of queries by the Teacher forms the bottleneck and \Lstar would be much slower than \coala.  
\begin{table}[htbp]
	\centering
	\caption{Performance of \coala and \Lstar for realistic composite systems. Reported runtimes are in seconds. The number of refinement iterations is listed under `it.' and the number of components found under `com'.}
	\label{tab:results-scalable}
	\resizebox{\textwidth}{!}{
		\pgfplotstabletypeset[
		col sep=semicolon,
		fixed,
		precision=2,
		1000 sep={\,},
		every head row/.style={
			before row= & & \multicolumn{5}{c}{\coala} & \multicolumn{4}{c}{\coal} & \multicolumn{3}{c}{\Lstar}  \\ \cmidrule(lr){3-7}\cmidrule(lr){8-11}\cmidrule(lr){12-14}
			,after row=\midrule},
		every last row/.style={
			after row=\bottomrule},
		columns={id,sulStates,compmillis,compmemQ,compequivQ,compnumIterations,compnumComponents,alphabetsmillis,alphabetsmemQ,alphabetsequivQ,alphabetsnumComponents,monomillis,monomemQ,monoequivQ},
		column type={r},
		columns/id/.style={string type,column type={l},column name=model,
			postproc cell content/.style={@cell content/.add={\scriptsize}{}}
		},
		columns/sulStates/.style={column name=states},
		columns/compmillis/.style={column name=time,zerofill,
			preproc cell content/.code={\pgfkeys{/pgf/fpu}\pgfmathparse{##1/1000}\pgfkeyslet{/pgfplots/table/@cell content}\pgfmathresult \pgfkeys{/pgf/fpu=false}
			}
		},
		columns/compmemQ/.style={column name=memQ},
		columns/compequivQ/.style={column name=eqQ},
		columns/compnumIterations/.style={column name=it.},
		columns/compnumComponents/.style={column name=com},
		columns/alphabetsmillis/.style={column name=time,zerofill,
			preproc cell content/.code={\pgfkeys{/pgf/fpu}\pgfmathparse{##1/1000}\pgfkeyslet{/pgfplots/table/@cell content}\pgfmathresult \pgfkeys{/pgf/fpu=false}
			}
		},
		columns/alphabetsmemQ/.style={column name=memQ},
		columns/alphabetsequivQ/.style={column name=eqQ},
		columns/alphabetsnumComponents/.style={column name=com},
		columns/monomillis/.style={column name=time,zerofill,
			preproc cell content/.code={\pgfkeys{/pgf/fpu}\pgfmathparse{##1/1000}\pgfkeyslet{/pgfplots/table/@cell content}\pgfmathresult \pgfkeys{/pgf/fpu=false}
			}
		},
		columns/monomemQ/.style={column name=memQ},
		columns/monoequivQ/.style={column name=eqQ},
		]{results-realistic.csv}
	}
\end{table} 

\section{Conclusion}
We presented a novel active learning algorithm that automatically discovers component decompositions using only global observations.
Unlike previous approaches, our technique handles a general synchronization scheme common in automata theory and process calculi, without any prior knowledge of component structure. 
We developed algorithms for learning the components, together with a theory of alphabet distributions and their relation to observations; the latter allows us to identify and resolve inconsistencies between distributions and observations, driving an iterative refinement of the component alphabets until
a decomposition consistent with the system under learning is found.
Our \coala implementation dramatically reduces membership queries and achieves better query scalability than monolithic learning on highly concurrent systems.

In the current work, we implemented the local learners with \Lstar. Our setup is sufficiently general to replace this with a modern automata learning algorithm,
provided it supports wildcards and backtracking. Likewise, the translation of membership queries in the Adapter and the choice of global counter-examples in the Orchestrator can each be developed independently of the remaining components. This separation of concerns opens up several concrete directions for future work.

A natural first step is upgrading the local learners from \Lstar to a more efficient algorithm such as TTT~\cite{IsbernerHS14} or \Lsharp~\cite{VaandragerGRW22}. Separately, the SAT-based hypothesis construction that currently accounts for most timeouts in our implementation warrants further attention. Further inspiration for improving the local learners can be drawn from Kruger et al.~\cite{KrugerJR24} or Moeller et al.~\cite{MoellerWSKF023}. Since the Orchestrator plays a large role in the practical performance of our algorithm, we aim to further improve this aspect; this involves developing a theory of counter-example selection and finding efficient strategies for selecting discrepancies that lead to globally optimal distributions. To improve practical applicability, the assumption that the teacher returns a shortest counter-example has to be eliminated. Finally, we will explore compositional learning of other formalisms, such as register automata~\cite{DierlFHJST24} or timed automata~\cite{Waga23}.

\section*{Acknowledgment}
\noindent This research was partially supported by:
EPSRC Standard Grant \emph{CLeVer} (EP/S028641/1);
NWO grant VI.Veni.232.224;
UKRI Trustworthy Autonomous Systems Node in Verifiability -- EP/V026801/2;
EPSRC project Verified Simulation for Large Quantum Systems (VSL-Q) -- EP/Y005244/1;
EPSRC project Robust and Reliable Quantum Computing (RoaRQ), Investigation 009 Model-based monitoring and calibration of quantum computations (ModeMCQ) -- EP/W032635/1;
ITEA/InnovateUK projects GENIUS (600642) and GreenCode (600643)

\bibliographystyle{alphaurl}
\bibliography{refs}

\end{document}